\newcommand{\mb}[1]{\ensuremath{\mathbf{#1}}\xspace}
\newtcolorbox{mybox}{colback=red!5!white,colframe=red!75!black}
\theoremstyle{plain}
\newtheorem{theorem}{Theorem}[section]
\newtheorem{lemma}[theorem]{Lemma}
\newtheorem{fact}[theorem]{Fact}
\newtheorem{definition}[theorem]{Definition}
\theoremstyle{remark}
\newtheorem*{remark}{Remark}
\newtheorem{example}{Example}
\newcommand{\ignore}[1]{}
\DeclareMathOperator*\Exp{\mathbb{E}}
\DeclareMathOperator*\Var{\bf Var}
\DeclareMathOperator*\Cov{\bf Cov}
\DeclareMathOperator\erf{erf}
\newcommand{\prn}[1]{\left(#1\right)}
\newcommand{\poly}[1]{\textup{poly}\prn{#1}}
\newcommand{\ball}{\mathcal{B}}
\newcommand{\R}{\mathbb{R}}
\newcommand{\cQ}{\mathcal{Q}}
\newcommand{\cP}{\mathcal{P}}
\newcommand{\cO}{\mathcal{O}}
\newcommand{\cA}{\mathcal{A}}
\newcommand{\cK}{\mathcal{K}}
\newcommand{\cL}{\mathcal{L}}
\newcommand{\tv}{\mathrm{d_{TV}}}
\newcommand{\kl}{\mathrm{d_{KL}}}
\newcommand{\Paren}[1]{\left(#1\right)}
\newcommand{\Abs}[1]{\left\lvert#1\right\rvert}
\newcommand{\gauss}{\mathcal{N}}
\newcommand{\eps}{\varepsilon}
\newcommand{\obs}{\mathbb{S}}
\newcommand{\indic}{\bm{1}}
  \theoremstyle{plain}
  \newtheorem*{assumption1f}{Assumption 1.4}
  \theoremstyle{plain}
  \newtheorem*{theorem1f}{Theorem 1.5}
\title{Learning High-dimensional Gaussians from Censored Data}
\author{Arnab Bhattacharyya \\
The University of Warwick \\
\texttt{arnab.bhattacharyya@warwick.ac.uk}
\and Constantinos Daskalakis \\
MIT \\
\texttt{costis@mit.edu}
\and Themis Gouleakis \\
Nanyang Technological University\\
\texttt{themis.gouleakis@ntu.edu.sg}
\and 
Yuhao Wang \\
National University of Singapore\\
\texttt{yuhaowang@u.nus.edu}
}
\date{}
\begin{document}
\maketitle

\begin{abstract}
We provide efficient algorithms for the problem of distribution learning from high-dimensional Gaussian data where in each sample, some of the variable values are missing. We suppose that the variables are {\em missing not at random (MNAR)}.
The {\em missingness model}, denoted by $\obs(\mb{y})$, is the function that maps any point $\mb{y}\in \R^d$ to the subsets of its coordinates that are seen. In this work, we assume that it is known. We study the following two settings:
\begin{enumerate}[wide, labelwidth=!, labelindent=0pt]
\item[(i)] [\textbf{Self-censoring}] An observation $\mb{x}$ is generated by first sampling the true value $\mb{y}$ from a $d$-dimensional Gaussian $\gauss(\bm{\mu}^*, \mb{\Sigma}^*)$ with unknown $\bm{\mu}^*$ and $\mb{\Sigma}^*$. 
For each coordinate $i$, there exists a set $S_i\subseteq \mathbb{R}^d$ such that $x_i=y_i$ if and only if $y_i\in S_i$. Otherwise, $x_i$ is missing and  takes a generic value (e.g ``?"). 

We design an algorithm that learns $\gauss(\bm{\mu}^*, \mb{\Sigma}^*)$ up to TV distance $\eps$, using $\poly{d, 1/\eps}$ samples, assuming only that each pair of coordinates is observed with sufficiently high probability.
\item[(ii)] [\textbf{Linear thresholding}] An observation $\mb{x}$ is generated by first sampling $\mb{y}$ from a $d$-dimensional Gaussian $\gauss(\bm{\mu}^*, \mb{\Sigma})$ with unknown $\bm{\mu}^*$ and known $\mb{\Sigma}$, and then applying the missingness model $\obs$ where $\obs(\mb{y}) = \{i \in [d]: \mb{v}_i^T \mb{y} \leq b_i\}$ for some $\mb{v}_1, \dots, \mb{v}_d \in \R^d$ and $b_1, \dots, b_d \in \R$. 
We design an efficient mean estimation algorithm, assuming that none of the possible missingness patterns is very rare conditioned on the values of the observed coordinates and that any small subset of coordinates is observed with sufficiently high probability. 
\end{enumerate}

\end{abstract}

\section{INTRODUCTION} 
\label{sec:intro}

Missing data is a quite prevalent factor contributing to bias in statistical inference. It arises from various causes, such as limitations in instruments leading to unreliable data, incomplete data collection resulting in missing relevant information, societal biases influencing the suppression of observations, behavioral biases leading to subjects dropping out of studies or avoiding survey questions, ethical, legal, or privacy considerations restricting the utilization of collected data, and other similar factors.
Unfortunately training models without consideration of missing data can lead to models that incorporate biases in the training data and make incorrect predictions, which may in turn reinforce those biases when the models are deployed. 

Since the early days of statistics, missing data has been a well-known challenge in statistical inference, which occurs in a variety of domains, such as biology, physics, clinical trial design, genetics, economics, survey research, and the social sciences.
It has motivated a vast effort towards developing methodologies that are more robust to missing data. As example, we refer the reader to some of the early works in statistics ~\cite{galton1898examination,pearson1902systematic,pearson1908generalised,lee1914table,fisher1931properties},  some standard references in statistics and econometrics ~\cite{tobin1958estimation,amemiya1973regression,hausman1977social,heckman1979sample,hajivassiliou1998method,little2019statistical}, works targeting missing data in specific domains ~\cite{warga1992bond,brick1996handling,troyanskaya2001missing,armitage2008statistical,honaker2010missing},   books overviewing this literature ~\cite{maddala1986limited,breen1996regression,balakrishnan2014art}, and finallly some recent work in computer science ~\cite{mohan2013graphical,daskalakis2018efficient,daskalakis2019computationally,daskalakis2020truncated,daskalakis2021statistical,daskalakis2021efficient,kontonis2019efficient,fotakis2020efficient,plevrakis2021learning}.

The effect that data misssingness has on statistical inference depends heavily on the missingness model. In general, missingness models in which the value of some datapoint influences whether or not it will be missing from the dataset are harder to deal with compared to models in which this happens randomly. 

Techniques that have been extensively researched in scenarios where missingness either does not depend on the data or only depends on the observed data are referred to as missing completely at random (MCAR) and missing at random (MAR) respectively \cite{rubin1976inference,tsiatis2006semiparametric, little2019statistical}. In problems where missing entries depend on the underlying values which are themselves censored, known as missing not at random (MNAR), is substantially more difficult and less explored \cite{robins1997non, rotnitzky1997analysis, scharfstein1999adjusting, shpitser2015missing, adak2020classification}. The MNAR model is quite often relevant in practical applications. For example, the depression registry for mental health status is more likely to have missing questionnaires leading to the self-censoring missingness \cite{carreras2021missing}. Data are missing by design due to the limitations of measurement resources, or the treatment discontinuation when participants go off-control due to the lack of tolerability \cite{little2012prevention}.

The goal of this work is to advance our understanding of density estimation in the non-asymptotic sample regime when  data is missing not at random. In particular, we consider the standard task of high-dimensional Gaussian distribution estimation, albeit in settings where every sample of the Gaussian may have a subset of its coordinates censored and which subset this is depends on the sample itself. We consider two models for how the censoring may depend on the sample:

\begin{itemize}[leftmargin=*]
    \item Self-censoring model (see \cref{sec:self censoring}): in this model, a sample $\mb y$ is drawn from an underlying Gaussian distribution ${\cal N}({\bm{\mu}}^*, {\mb \Sigma}^*)$, and each coordinate $y_i$ of this sample is censored (i.e.~replaced with a `?') depending on whether or not it satisfies a coordinate-specific Boolean predicate, i.e.~whether $S_i(y_i)=1$ or not. 
    \item Linear thresholding model (see \cref{sec:linear masking}): in this more challenging model, whether or not each coordinate is seen depends on whether the {\bf whole} sample satisfies a coordinate-specific predicate. 
\end{itemize}

Our goal in both cases is to identify conditions on the predicates and the underlying distribution under which their parameters can be estimated computationally and sample-efficiently in each of the aforedescribed models. Our work advances prior research on Gaussian estimation in the presence of MNAR data in the non-asymptotic sample regime along the following axes:

\begin{itemize}[leftmargin=*]
    \item Gaussian estimation under censoring (see e.g.~the classical works of \cite{galton1898examination,pearson1902systematic,pearson1908generalised,fisher1931properties} and the ensuing literature): Prior work on this problem in the non-asymptotic sample regime studies the ``all-or-nothing setting,'' where either all coordinates or no-coordinate can be observed ~\cite{daskalakis2018efficient}. They also require that some absolute constant fraction of the Gaussian can be observed. In comparison to this work, we allow  heavily corrupted data where no such constant fraction exists where no coordinate is missing. However, we allow the predicate determining the censoring of coordinate $i$ to either be very general but only dependent on this coordinate (self-censoring model), or depend on all coordinates but be simpler, namely 
    a hyperplane (linear-thresholding model). 
    \item Gaussian estimation under self-selection (see e.g.~the classical work of \cite{roy1951some} and the ensuing literature): Prior work on this problem in the non-asymptotic sample regime \cite{cherapanamjeri2023makes} studies specific selection mechanisms (in particular hiding all but the maximum coordinate of each sample) and also assumes independence among the coordinates. In comparison to this work, we allow correlations among coordinates and more general masking mechanisms. However, we focus on Gaussian distributed coordinates while they can accommodate non-parametric distributions.
\end{itemize}
\ignore{
\paragraph{Problem settings}
It comes out that the distribution learning problem under the censorship model of \emph{missing not at random} captures a wide variety of settings such as \emph{self-selection model}, \emph{truncation}
\begin{itemize}[leftmargin=*]
    \item \textcolor{red}{Comparing missingness to truncation} - earlier works assume a box. For arbitrary truncation, (GTZ ’18) assumes an $\alpha$ portion to be observed over all variables. Later works use EM for non-convex likelihood function. 
    \item \textcolor{red}{Comparing missingness to self-selection}: similar to the truncated and censored linear regression problem. In regression, you explicitly assume that the last coordinates (label) are determined by the previous coordinates using a linear or polynomial or whatever mapping. 
\end{itemize}
}
\subsection{OUR CONTRIBUTIONS} 
 In this work, we are interested in recovering the ``uncorrupted'' Gaussian distribution given samples from a ``corrupted'' distribution (according to our missingness model). We use a population maximum likelihood approach as the estimation algorithm, and apply projected stochastic gradient descent on the likelihood function. We give theoretical proof of fast convergence in the parameter space. 

A {\em missingness model} is defined by a function $\obs: \R^d \to 2^{[d]}$. For an underlying $d$-dimensional vector $\mb{y}$, $\obs(\mb{y})$ is interpreted as the set of coordinates of $\mb{y}$ that are not missing. An observation is a pair $(A, \mb{x})$, where $A = \obs(\mb{y})$ and $\mb{x} = \mb{y}_A$ for an underlying sample $\mb{y} \in \R^d$.

\subsubsection{DISTRIBUTION LEARNING UNDER THE SELF-CENSORING MECHANISM} \label{sec:self censoring}
\paragraph{Self-censoring Missingness Model.}
\textcolor{black}{The Self-censoring mechanism is commonly encountered in practice. In this model the missingness of an outcome is affected by its underlying value. For example, smokers are not willing to report their smoking behavior in insurance applications. Voters holding particular beliefs may not disclose their political preferences in election surveys.}
\textcolor{black}{The Self-censoring model is of significant interest because the model is: (i) conceptually well-motivated, and  (ii) can be considered as a baseline for other more complex missingness models.}
We say that $\obs$ is a {\em self-censoring missingness model} if there exist sets $S_1, \dots, S_d$ such that $\obs(\mb{y}) = \{i \in [d]: y_i \in S_i\}$.  

Our result in this setting rests on the following hypothesis:
\begin{restatable}{assumption}{censoringmissingness}
\label{censoring_assumpt}
For any pair of coordinates $i, j \in [d]:$
$
\Pr_{\mb{y} \sim \gauss(\bm{\mu}^*, \bm{\Sigma}^*)}[y_i \in S_i, y_j \in S_j] \geq \alpha.
$
\end{restatable}

\begin{restatable}{theorem}{censoringmain}
\label{th:censoring_main}
Suppose we can observe samples from $\gauss(\bm{\mu}^*, \mb{\Sigma}^*)$ censored through a self-censoring missingness model $\obs$.
\textcolor{black}{If \cref{censoring_assumpt} is satisfied for some constant value of the parameter $\alpha$,} there exists a polynomial-time algorithm that recovers estimated $\bm{\mu}^\ast, \mb{\Sigma}^\ast$ with arbitrary accuracy. Specifically, for all $\varepsilon > 0$, 
\textcolor{black}{and given that the eigenvalues of $\mb{\Sigma}^\ast$ lie in the interval $[\lambda_{min},\lambda_{max}]$,
the algorithm uses $\tilde{\mathcal{O}}\left(\frac{d^2(\lambda_{max}/\lambda_{min})^2}{\alpha\varepsilon^2 }\right)$ samples} and produces estimates that satisfy the following:
\begin{align*}
    & \left\|\mb{\Sigma}^{^\ast -1 / 2}(\bm{\mu}^\ast-\bm{\hat{{\mu}}}) \right\|_{2} \leq \mathcal{O}( \varepsilon) ; \\
    \quad \text { and } & \left\|\bm{I}-\mb{\Sigma}^{\ast -1 / 2} \mb{\hat{\Sigma}} \mb{\Sigma}^{\ast -1 / 2}\right\|_{F} \leq \mathcal{O}(  \varepsilon).
\end{align*}
\textcolor{black}{Note that the sample complexity is proportional to $1/\alpha$.}
Furthermore, under the above conditions, we have 
$\tv(\gauss(\bm{\mu}^\ast, \mb{\Sigma}^\ast), \gauss(\bm{\hat{\mu}}, \mb{\hat{\Sigma}})) \leq \mathcal{O}(\varepsilon)$.
\end{restatable}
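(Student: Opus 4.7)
The plan is to run projected stochastic gradient descent on the population negative log-likelihood, written in the natural parameters $(\mb{T}, \mb{v}) = (\mb{\Sigma}^{-1}, \mb{\Sigma}^{-1}\bm{\mu})$, following the template of \cite{daskalakis2018efficient} for truncation. Write the observed-data log-likelihood of $(A, \mb{x}_A)$ as $-A(\mb{T},\mb{v}) + \log\int_{\obs(\mb{y})=A,\,\mb{y}_A=\mb{x}_A} \exp(\angles{\mb{v},\mb{y}} - \tfrac{1}{2}\angles{\mb{T}, \mb{y}\mb{y}^T})\, d\mb{y}_{\bar{A}}$, where $A(\mb{T},\mb{v})$ is the log-partition function of the uncensored Gaussian. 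The gradient at any parameter is the difference between the unconditional and conditional expectations of the sufficient statistics $T(\mb{y}) = (\mb{y}, \mb{y}\mb{y}^T)$, where the conditioning is on the observation event. This yields an unbiased two-sample gradient estimator: draw $\tilde{\mb{y}} \sim \gauss(\bm{\mu}^{(t)}, \mb{\Sigma}^{(t)})$, and, via rejection sampling against the known $S_i$'s, draw $\hat{\mb{y}}$ from the same Gaussian conditioned on $\obs(\mb{y})=A$ and $\mb{y}_A = \mb{x}_A$; the estimator is $T(\tilde{\mb{y}}) - T(\hat{\mb{y}})$.

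A warm start close to $(\mb{T}^*, \mb{v}^*)$ can be obtained by applying univariate and bivariate censored-Gaussian moment estimators to each pair of coordinates; \cref{censoring_assumpt} together with classical one- and two-dimensional censored Gaussian theory guarantees these are well-conditioned. After the warm start, I would project at each SGD step onto a spectrahedron of the form $\brkts{(\mb{T}, \mb{v}) : c_1 \mb{T}^{(0)} \preceq \mb{T} \preceq c_2 \mb{T}^{(0)},\ \bars{\mb{v} - \mb{v}^{(0)}}_{(\mb{T}^{(0)})^{-1}} \leq R}$ for suitable constants $c_1, c_2, R$. This keeps all iterates spectrally close to the truth, so that both rejection sampling and the forthcoming strong-convexity analysis go through.

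The two technical pillars are (a) $\Omega(\alpha)$-strong convexity of the population objective on the projection set, and (b) a $\cpoly{d, \lambda_{\max}/\lambda_{\min}}$ bound on the second moment of the gradient estimator. For (a), the Hessian at any parameter is a covariance matrix of sufficient statistics under the current censored model; restricting it to directions that perturb only the $(i,j)$-marginal yields, up to constants, the covariance of the bivariate sufficient statistics conditioned on $y_i \in S_i \wedge y_j \in S_j$, reweighted by $\pr{y_i \in S_i \wedge y_j \in S_j} \ge \alpha$. Aggregating over all $\binom{d}{2}$ pairs covers every direction in the $\Theta(d^2)$-dimensional parameter space, giving strong convexity of rate $\Omega(\alpha)$ up to condition-number factors. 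For (b), standard Gaussian moment control on the projection set suffices. The standard projected SGD analysis then yields the claimed $\tilde{\mathcal{O}}\bigl(d^2(\lambda_{\max}/\lambda_{\min})^2/(\alpha\varepsilon^2)\bigr)$ rate in parameter space, and the TV bound follows from the standard inequality bounding the TV distance between two Gaussians by their relative mean and covariance deviations. I expect (a) to be the main obstacle: showing that the bivariate conditional covariance of sufficient statistics is nondegenerate for arbitrary Borel $S_i$ and any iterate in the projection set requires an anti-concentration argument that exploits the Gaussian density's lower bound on any set of probability at least $\alpha$, together with careful bookkeeping to ensure the resulting Hessian bound scales as $\Omega(\alpha)$ rather than $\Omega(\alpha^2)$ or worse.
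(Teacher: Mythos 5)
Your plan hinges on running PSGD on the population negative log-likelihood of the \emph{censored} model, i.e.\ on $A(\mb{T},\mb{v}) - \log\int_{\obs(\mb{y})=A,\,\mb{y}_A=\mb{x}_A}\exp(\cdots)\,d\mb{y}_{\bar A}$, together with the claim that this objective is $\Omega(\alpha)$-strongly convex. That claim is false for self-censoring. The per-observation Hessian is not ``a covariance matrix of sufficient statistics'' but a \emph{difference}, $\Cov_{\theta}[T(\mb{y})] - \Cov_{\theta}[T(\mb{y})\mid \obs(\mb{y})=A,\ \mb{y}_A=\mb{x}_A]$, and for a missing coordinate $i\notin A$ the conditioning event is $y_i\notin S_i$, the complement of an arbitrary Borel set, which need not be convex; conditioning a Gaussian on a non-convex set can \emph{increase} variance, so the Hessian can have negative directions. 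The paper makes exactly this point in the Remark in \cref{subsec:log-likelihood}: already for $d=1$ and $S_1=[2,4]$ the population objective is non-convex. (The \cite{KP77} argument that rescues convexity in the linear-thresholding section uses that $\{\mb{y}:\obs(\mb{y})=A,\ \mb{y}_A=\mb{x}\}$ is convex, which holds for halfspace censoring but not here.) Your localization step is also unsound: perturbing the $(i,j)$ block of $\mb{T}$ changes the conditional law of every coordinate; the relevant conditioning in the Hessian is on $y_i\in S_i$ or $y_i\notin S_i$ depending on the pattern $A$, not uniformly on $y_i\in S_i\wedge y_j\in S_j$; and positive curvature along the $\binom{d}{2}$ ``pair directions'' does not imply positive curvature along a general symmetric-matrix direction unless the Hessian were block-diagonal across pairs, which it is not.

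The paper sidesteps all of this by never forming the censored likelihood. For each pair $(i,j)$ it \emph{discards} every sample in which $i$ or $j$ is missing; by \cref{censoring_assumpt} the surviving pairs $(y_i,y_j)$ are exactly an $\alpha$-mass \emph{truncation} of the bivariate marginal to $S_i\times S_j$, and the truncated negative log-likelihood is convex for arbitrary measurable truncation sets, so the solver of \cite{daskalakis2018efficient} applies as a black box in dimensions $1$ and $2$ (\cref{cor:das}). Running it to accuracy $\eps'=\eps/d$ per pair, stitching the off-diagonal entries, and applying \cref{fact:Frobenius} gives $\|\mb{\Sigma}^*-\hat{\mb{\Sigma}}\|_F\le\eps\lambda_{\max}$ (\cref{lm:sdp_spec}), and the $\lambda_{\max}/\lambda_{\min}$ factors then convert the Euclidean/Frobenius bounds into the stated relative guarantees and the TV bound. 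Note that what you call the ``warm start'' is essentially the paper's entire algorithm; if you carried out its error analysis at accuracy $\eps/d$ per entry you would already be done, and the global PSGD phase is both unnecessary and, as it stands, unjustified.
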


\subsubsection{MEAN ESTIMATION UNDER LINEAR THRESHOLDING MISSINGNESS} \label{sec:linear masking}

We say $\obs$ is a {\em linear thresholding missingness model} if there exist $\mb{v}_1, \dots, \mb{v}_d \in \R^d$ and $b_1, \dots, b_d \in \R$ such that $\obs(\mb{y}) = \{i \in [d]: \mb{v}_i^T \mb{y} \leq b_i\}$. For instance, if for a pair of coordinates $x_i, x_j$, we can only observe the 
maximum of the two, this can be modeled by a linear thresholding model where the $i$'th coordinate is observed if $x_j-x_i \leq 0$ and the $j$'th coordinate is observed if $x_i - x_j \leq 0$. 

\ignore{
We recall the censoring model of {\em convex masking}. The model is defined by convex subsets $S_1, \dots, S_d \subseteq \R^d$. An observation of a data point $\mb{x} \in \R^d$ results in $\tilde{\mb{x}} \in (\R \cup \{?\})^d$ where:
\begin{align*}
 \tilde{x}_i = 
 \begin{cases}
 ? & \text{if } \mb{x} \in S_i\\
 x_i & \text{ otherwise.}
 \end{cases} 
 \end{align*}
We call the collection $(S_1, \dots, S_d)$ the {\em missingness mechanism}.  For $\mb{x} \in \R^d$, we use $\cA(\mb{x})$ to denote the set of observed coordinates $\{i \in [d]: \mb{x} \notin S_i\}$.} 
\ignore{\yuhao{Should I keep the parts below in the intro? Or I more it into the beginning of section5 ?}
By abuse of notation, for an observation $\mb{\tilde{x}} \in (\R \cup \{?\})^d$, we let $\cA(\mb{\tilde{x}})$ denote $\{i: \tilde{x}_i \neq ?\}$. 
We let $\cO$ denote the data generating oracle which first samples $\mb{x}$ from $N(\bm{\mu}^*, \mb{\Sigma})$ and then outputs $\mb{\tilde{x}}$. 
Clearly, given an observation $\tilde{\mb{x}}$ from $\cO$, we can obtain $\cA(\mb{x})$ and $\mb{x}_{\cA(\mb{x})}$ for the underlying data point $\mb{x}$.}

\ignore{
The missingness mechanism defines an equivalence relation $\equiv$ as follows: $\mb{x} \equiv \mb{y}$ if $\tilde{\mb{x}} = \tilde{\mb{y}}$. Specifically, for any $\mb{x}$, the equivalence class $\{\mb{y}: \mb{y} \equiv \mb{x}\}$ equals:
\[\bigcap_{i: \mb{x} \notin S_i} \{\mb{y}: y_i = x_i\} \cap \bigcap_{i: \mb{x} \in S_i}{S_i}.\]
Note that an equivalence class, being an intersection of convex sets, is also convex. The reader may want to keep in mind the setting of $S_1, \dots, S_d$ being halfspaces, meaning $\mb{x} \in S_i$ iff $\mb{a}_i^\top \mb{x} + {b}_i \geq 0$ for some fixed $\mb{a}_i, b_i$; in this case, an equivalence class corresponds to a (possibly unbounded) convex polyhedron. 

\begin{example}[Truncation]\label{ex:trunc}
If $S_1 = \cdots = S_d = S$, then $\mb{x}$ is fully observed if $\mb{x} \notin S$ and is fully missing otherwise. This corresponds to truncation of samples, where the complement of the truncation set is convex.
\end{example}

\begin{example}[Linear masking]
A special case of convex masking is {\em linear masking}, where each $S_i$ is a halfspace $\{\mb{x}: \mb{w}_i^\top \mb{x} + b_i \geq 0\}$ for some fixed $\mb{w}_i, b_i$. 
\end{example}

\begin{example}[Self-Censoring]
If each $S_i = \{\mb{x}: x_i \in I_i\}$ where $I_i$ is a fixed interval in $\R$, then the missingness mechanism is a form of self-censoring.
\end{example}
}
Our main algorithmic result rests on the following two data hypotheses:
\begin{restatable}{assumption}{missingnessassumpt}
\label{missingness_assumpt} There exist some $\alpha, \beta > 0$ such that for any set $A \subseteq [d]$ of size at most $\beta d$,
\[\Pr_{\mb{y} \sim N(\bm{\mu}^*, \mb{\Sigma})}[A \subseteq \obs(\mb{{y}})]\geq \alpha.\] \textcolor{black}{Note that we only consider the case where $\beta d$ is a positive integer without loss of generality.}
\end{restatable}

\begin{restatable}{assumption}{depmiss}(Informal)
\label{dependence_assumpt}
There exists an {\em anchoring set} of coordinates $C$ such that (i) $C$ is observed in every sample, and (ii) conditioned on the values at $C$, each missingness pattern occurs with probability $0$ or at least $\gamma$.
\end{restatable}
\noindent 
The second assumption states that the anchoring subset is 
compulsorily observed and values at these coordinates determine the missingness pattern (the set of coordinates observed) almost fully. 
\textcolor{black}{This is in analogy to the anchor topic modeling used in natural language processing, which is a variation of probabilistic topic modeling that incorporates a set of predefined ``anchor words" to guide the topic modeling process. Our anchored missingness is similar to the ``anchor words" assumption.}
For instance, the anchoring subset might be a set of questions in a questionnaire that are mandatory to answer and whose values are very indicative of the respondant's behavior. 
We now informally state our main algorithmic result here:
\begin{restatable}{theorem}{linearmaskingmain}
\label{th:general_main}
For a known covariance matrix $\bm{\Sigma}$, suppose we can observe samples from $\gauss(\bm{\mu}^*, \bm{\Sigma})$ censored through a linear thresholding missingness model $\obs$. If \cref{missingness_assumpt} and \cref{dependence_assumpt} are satisfied, there exists a polynomial-time algorithm that recovers estimated $\bm{\mu}^\ast$ with arbitrary accuracy.  
Specifically, for all $\eps > 0$, \textcolor{black}{the algorithm uses $\mathrm{poly}(d,1/\alpha, 1/\beta, 1/\gamma,{\lambda_{\max}(\mb{\Sigma})}/{\lambda_{\min}(\mb{\Sigma})},$ $1/\eps,\log(1/\delta))$ samples and running time}, and with probability at least $1-\delta$, produces an estimate $\bm{\hat{\mu}}$ such that
$\Vert \bm{\hat{\mu}} - \bm{\mu^*} \Vert_\Sigma \leq \eps.$
\end{restatable}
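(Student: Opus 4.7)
The plan is to cast the problem as maximum-likelihood estimation for $\bm{\mu}$ and optimise the population log-likelihood using projected stochastic gradient descent, in line with the strategy advertised in the paper's introduction. For an observation $(A,\mb{x})$, write $\mathrm{Feas}(A,\mb{x}) = \{\mb{y}\in\R^d : \mb{y}_A=\mb{x},\ \obs(\mb{y})=A\}$ for the set of latent vectors consistent with it. The per-sample log-likelihood is
$$\ell_{A,\mb{x}}(\bm{\mu}) \;=\; \log\!\int_{\mathrm{Feas}(A,\mb{x})} \gauss(\mb{y};\bm{\mu},\bm{\Sigma})\,d\mb{y}_{[d]\setminus A},$$
and a short exponential-family calculation gives
$$\nabla_{\bm{\mu}}\,\ell_{A,\mb{x}}(\bm{\mu}) \;=\; \bm{\Sigma}^{-1}\bigl(\E_{\bm{\mu}}[\mb{y}\mid \mb{y}\in\mathrm{Feas}(A,\mb{x})]-\bm{\mu}\bigr).$$
So an unbiased stochastic gradient reduces to sampling from the Gaussian restricted to the polyhedral feasible set, which is the computational heart of the algorithm.

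\textbf{Step 1 (sampling oracle).} Conditioning on $\mb{y}_A=\mb{x}$ in a Gaussian yields an explicit Gaussian over the missing coordinates, and the remaining constraint $\obs(\mb{y})=A$ is the conjunction of halfspaces $\mb{v}_i^{\top}\mb{y}\leq b_i$ for $i\in A$ and $\mb{v}_j^{\top}\mb{y}>b_j$ for $j\notin A$. I would use rejection sampling from this conditional Gaussian. \cref{dependence_assumpt} (anchoring) guarantees that, conditional on the anchor values $\mb{y}_C$ (with $C\subseteq A$ in every observation), any attainable missingness pattern has probability at least $\gamma$; combining this with \cref{missingness_assumpt}, which controls the density of the observed-value slab $\mb{y}_{A\setminus C}=\mb{x}_{A\setminus C}$, shows that rejection from the $\mb{y}_A=\mb{x}$-conditional Gaussian succeeds with probability $\Omega(\poly(\alpha,\gamma))$. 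Hence $\poly(1/\alpha,1/\gamma)$ trials produce an unbiased sample and thus an unbiased gradient estimate.

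\textbf{Step 2 (strong convexity).} Letting $L(\bm{\mu}) = -\E_{(A,\mb{x})}[\ell_{A,\mb{x}}(\bm{\mu})]$, a direct computation yields
$$\nabla^2 L(\bm{\mu}) \;=\; \bm{\Sigma}^{-1} \;-\; \bm{\Sigma}^{-1}\,\E_{(A,\mb{x})}\!\bigl[\Cov_{\bm{\mu}}(\mb{y}\mid\mathrm{Feas}(A,\mb{x}))\bigr]\,\bm{\Sigma}^{-1}.$$
By the law of total variance this is PSD, and equals $\bm{\Sigma}^{-1}\Cov_{(A,\mb{x})}(\E[\mb{y}\mid\mathrm{Feas}(A,\mb{x})])\bm{\Sigma}^{-1}$. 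Quantifying a strictly positive lower bound on $\bm{\Sigma}^{1/2}\nabla^2 L(\bm{\mu})\bm{\Sigma}^{1/2}$ is the key step. Here \cref{missingness_assumpt} enters crucially: since every subset of $\leq\beta d$ coordinates is fully observed with probability at least $\alpha$, for every direction $\mb{w}\in\R^d$ there is a constant-mass family of missingness patterns that reveals $\mb{w}^{\top}\mb{y}$ up to a proper polyhedral truncation, strictly shrinking the conditional covariance in direction $\mb{w}$. Combined with anti-concentration for Gaussians, this gives $\bm{\Sigma}^{1/2}\nabla^2 L(\bm{\mu})\bm{\Sigma}^{1/2}\succeq \kappa\,\mb{I}$ with $\kappa=\poly(\alpha,\beta,\gamma,\lambda_{\min}(\bm{\Sigma})/\lambda_{\max}(\bm{\Sigma}))$, uniformly over $\bm{\mu}$ in a bounded region constructed below.

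\textbf{Step 3 (projected SGD and main obstacle).} I would initialise at the coordinatewise empirical mean over samples in which each coordinate is observed; by \cref{missingness_assumpt} this lies within $O(\sqrt{d\lambda_{\max}/\alpha})$ of $\bm{\mu}^*$ with high probability, and projecting onto a ball of this radius around the initialiser keeps $\|\bm{\mu}-\bm{\mu}^*\|$ controlled and bounds the second moment of the stochastic gradient by $\poly(d,1/\alpha,1/\gamma,\lambda_{\max}/\lambda_{\min})$. Feeding the variance bound and $\kappa$ from Step~2 into the standard analysis of strongly convex projected SGD with iterate averaging yields, after $T=\poly(d,1/\alpha,1/\beta,1/\gamma,\lambda_{\max}/\lambda_{\min},1/\eps,\log(1/\delta))$ iterations, an estimate $\hat{\bm{\mu}}$ with $\|\hat{\bm{\mu}}-\bm{\mu}^*\|_{\bm{\Sigma}}\leq\eps$ with probability $\geq 1-\delta$. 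The principal obstacle is Step~2: one must produce a quantitative, $\bm{\mu}$-uniform lower bound on the conditional-variance gap and rule out any direction in $\R^d$ being hidden by the thresholding pattern. This is precisely where \cref{missingness_assumpt} and \cref{dependence_assumpt} must be used in concert — the former ensures every direction is revealed in a definite fraction of observation patterns, while the latter tames the combinatorics over patterns so that conditional variances can be bounded pattern-by-pattern via the anchor-conditional decomposition. Without either assumption, the linear-thresholding rules can conspire to leave some linear functional of $\bm{\mu}^*$ statistically unidentifiable.
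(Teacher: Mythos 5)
Your overall architecture (population negative log-likelihood, strong convexity, projected SGD from an empirical-mean initializer) matches the paper's, but there is a genuine gap in Step~1 that breaks the polynomial running time. You propose rejection sampling from $\gauss(\bm{\mu},\bm{\Sigma})\mid \mb{y}_A=\mb{x}$ with acceptance event $\obs(\mb{y})=A$, and claim acceptance probability $\Omega(\poly{\alpha,\gamma})$. But \cref{dependence_assumpt} only guarantees $\Pr[\obs(\mb{y})=A\mid \mb{y}_C]\geq \gamma$ under the \emph{true} distribution $\gauss(\bm{\mu}^*,\bm{\Sigma})$; at an intermediate iterate $\bm{\mu}\neq\bm{\mu}^*$ the acceptance probability can decay like $\exp(-\Omega(\|\bm{\mu}-\bm{\mu}^*\|_{\bm{\Sigma}}^2))$, and over the projection region of radius $S=O(\sqrt{(\lambda_{\max}/\beta)\log(1/\alpha)})$ this is not polynomial in the stated parameters. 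The paper flags exactly this failure of rejection sampling and instead runs a projected Langevin Monte Carlo chain (\cite{bubeck2018sampling}) on the convex set $\cK\cap\ball_{\bm{\Sigma}}(\bm{\mu},R)$, which produces only an \emph{approximately} unbiased gradient; one then needs the bias- and variance-control lemmas and the biased-gradient PSGD guarantee (Lemma 6 of \cite{cherapanamjeri2022makes}) to close the argument. Your proof, as written, has no mechanism for sampling the gradient in polynomial time away from $\bm{\mu}^*$.

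Your Step~2 also does not go through as justified. The identity $\nabla^2 L(\bm{\mu})=\bm{\Sigma}^{-1}\Cov_{(A,\mb{x})}\bigl(\Exp[\mb{y}\mid \mathrm{Feas}(A,\mb{x})]\bigr)\bm{\Sigma}^{-1}$ via the law of total variance is only valid at $\bm{\mu}=\bm{\mu}^*$, since the outer expectation is over $(A,\mb{x})\sim\gauss(\bm{\mu}^*,\bm{\Sigma})^{\obs}$ while the inner conditional law uses $\bm{\mu}$; so it neither establishes positive semidefiniteness nor a quantitative gap for general $\bm{\mu}$. The paper's mechanism is different and is where the linear-thresholding hypothesis is actually consumed: the set $\{\mb{y}:\obs(\mb{y})=A,\ \mb{y}_A=\mb{x}\}$ is convex, so by Corollary 2.1 of \cite{KP77} the conditional covariance is dominated by $\Cov(\mb{y}\mid \mb{y}_H=\mb{x}_H)$ for the set $H$ of the $\beta d$ largest coordinates of the test vector (fully observed with probability $\geq\alpha$ by \cref{missingness_assumpt}); a Schur-complement computation then gives the explicit constant $\lambda=\alpha\beta/\lambda_{\max}(\bm{\Sigma})$, uniformly in $\bm{\mu}$. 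Your appeal to ``anti-concentration'' does not substitute for this, and notably \cref{dependence_assumpt} plays no role in strong convexity (it is needed only for the sampling step), contrary to what your closing paragraph suggests.
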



\ignore{
For example, a questionnaire might contain  questions that need to be answered based on the diversity in characteristics of the research subjects: such as their gender, ethnicity, nationality, socioeconomic status, etc.  \cref{missingness_assumpt} implies that for any subset of sections in the questionnaire up to a cardinality of $\beta d$, there exists a significant fraction of the subjects who can answer all of them. 
Furthermore, assumption 1.5 implies the anchor missingness. When performing a questionnaire investigation, for any $\mb{x} \in \R^d$, there exist  critical questions sets $C$ (in analogy to the anchor word), with each a subset of $\cA(\mb{x})$ (in analogy to a specific topic). The way research subjects answer critical questions leads to a specific missing patterns. That is, $A(x)$ determined by $x_c$.}


\subsection{OUR TECHNIQUES}
\paragraph{Self Censoring}
In the self censoring model, we show that the problem can be reduced 
to solving truncation problems in each of the $2$-dimensional subspaces spanned by pairs $e_i,e_j$ of the basis vectors. This allows us to use a $2$-dimensional version of the algorithm in \cite{daskalakis2018efficient} as subroutine for our algorithm in order to extract the information about pairwise correlation of coordinates needed to reconstruct the true covariance matrix $\mb{\Sigma}^\ast$. The mean is reconstructed in a more straightforward way via solving $1$-dimensional truncation problems for each coordinate.


In particular, to estimate the diagonal entries of $\mb{\Sigma}^\ast$, we use $1$-dimensional
 subproblems and for each off-diagonal entry $\Sigma_{ij}$ we solve the $2\times 2$
 subproblem on the coordinates $i$ and $j$, but we only care about the off-diagonal entries 
 of the result (ignoring the rest) and we use exactly that value as the estimate for the entry.
To achieve the required guarantees, assuming the condition number of the true covariance matrix $\mb{\Sigma}^\ast$ is constant,  one needs to run the subproblems with $\varepsilon^\prime=c\cdot\varepsilon/d$ for some constant $c<1$
(since the Frobenious distance can be at most a factor $d$ larger than the maximum entry-wise difference), which would imply a sample complexity of $O(1/{\varepsilon^\prime}^2)$
for each subproblem. Therefore, we will need $O(1/{\varepsilon^\prime}^2)=O(d^2/\varepsilon^2)$ samples in which both coordinates $i$ and $j$ are present for each of the $O(d^2)$ possible coordinate pairs. According to our \cref{missingness_assumpt}, this happens with probability at least $\alpha$ for a particular pair of coordinates. Therefore, if we draw $O(\frac{d^2\log(1/\delta)}{\alpha\varepsilon^2})$ samples, we get that for each pair of coordinates, we have the required amount of samples with probability at least $1-\delta$. Since we need $\delta<\frac{1}{d^2}$ to apply a union bound, we have that $O(\frac{d^2\log d}{\alpha\varepsilon^2})=\Tilde{O}(\frac{d^2}{\alpha\varepsilon^2})$ samples are sufficient for \mb{\Sigma^\ast} with constant condition number.

\paragraph{Linear-thresholding Model}
For the linear thresholding model, the above reduction does not work because the problem can no longer be "decomposed" into $2$-dimensional ones. The reason is that whether or not some pair of coordinates $(x_i,x_j)$ appears can now be affected by the value of $\bm{x}$ in coordinates different than $i$ and $j$. 
Therefore, we design a projected stochastic gradient descent (PSGD) algorithm that, given the covariance matrix $\mb{\Sigma}$ is known, yet arbitrary, it will give us an estimate of the true mean $\bm{\mu}^\ast$ of the original distribution, which can be arbitrarily close to it with the right choice of parameters. The algorithm first uses an empirical estimator for the initialization of the estimate. We show that its distance to the true mean is bounded as a function of $\mb{\Sigma}$ and the parameters of \cref{missingness_assumpt}. Subsequently, we run a PSGD algorithm, whose projection step maintains this property. The gradient sampling step is non-trivial as a straightforward rejection sampling approach would run in exponential time. Therefore, we resort to a {\em Langevin Monte Carlo algorithm} which yields an approximately unbiased sample of the gradient. The projection set in this algorithm ensures that the centralized second moment of the gradient estimator is bounded, while its bias is also kept small. Combining this with our lower bound on the convexity parameter of the strongly convex likelihood function $\ell(\bm{\mu})$, we are able to show the result.  

\subsection{RELATED WORK}
\label{sec:related-work}
\ignore{ 
\paragraph{Compare with previous works}
\textbf{1. Distribution learning without truncation/censoring}
\begin{itemize}
    \item A \emph{distribution learning} method is an algorithm that takes as input a sequence of i.i.d. samples generated from a distribution $f$, and outputs (a description of) a distribution $\hat{f}$ as an estimation for $f$.
    \item \textbf{Folklore theorem} For Gaussian distribution learning, the classical Folklore method takes empirical mean and covariance, $\Theta(d^2/\eps^2)$ samples are sufficed to achieve $\tv(\gauss(\hat{\mu}, \hat{\mb{\Sigma}}), \gauss(\mu^\ast, \mb{\Sigma}^\ast)) \leq \eps$. However, the estimation obstacles appear due to the censored samples, and inhomogeneous population. 
\end{itemize}
\textbf{2. Distribution learning from truncation + convex landscape}
\begin{itemize}
    \item Francis Galton (1897) analyzed truncated samples corresponding  to registered speeds of American trotting horses [Galton 1897]. But no theoretical guarantees. 
    \item Pearson and Lee [Pearson 1902], [Pearson Lee 1908], [Lee 1914] used the theoretically rigorous \emph{method of moments} in order to estimate the mean and variance of a truncated normal distribution. But the results are only for single dimensional. 
    \item Truncated statistics, can handle multi-dimensional data distributions, but only for axes-aligned boxes, with a known set. 
    \item (DGTZ '18) The truncation set can be arbitrarily complex, as far as it has non-trivial Gaussian mass. They prove that the problem admits a convex programming formulation that can be solved efficiently. The number of samples that are optimal up to polylogarithmic factors.
    \item DGTZ'18: Truncated mean estimation for $n = \tilde{\Theta}(d/\eps^2)$
\end{itemize}
\textbf{3. Distribution learning from truncation + non-convex landscape}
\begin{itemize}
    \item The estimation problems when the likelihood is non-convex: EM algorithm by [Dempster Lair Rubin 1977] as a general technique.
    \item (Daskalakis Tzamos Z.'17) (Non-convex lanscape) Gaussian Mixture model + EM algorithm. Mean estimation with $n = \tilde{\Theta}(d/\eps^2)$ samples with global convergence guarantees. 
\end{itemize}
\textbf{4. Self-selection / Truncation linear regression}
\begin{itemize}
    \item Truncated linear regression - Fisherman paper - Known index requires $\poly{1/\eps, k, d}$ sample and time complexity to estimate all $k$ model parameters to accuracy $\eps$ in $d$ dimensions. Unknown index - general $k$ case, $n \geq \exp(\poly{k}) \cdot \poly{d, 1/\alpha}$. $k=2$ case, $n \geq \poly{d, 1/\alpha, 1/\eps}$.
\end{itemize}

\review{The discussion and comparison with related work seems extremely poor. The recent work on truncated statistics (a line starting with [DGTZ18]) seems highly relevant, both from a results standpoint and technically. But the intro only mentions them in passing, without an in-depth comparison of either their results/setting or techniques. A similar critique can be levied for the comparison with [CDIZ22] and [LPRT21] (which is mentioned as being incomparable -- but more discussion and comparison here is still warranted).}
}
\paragraph{High-dimensional distribution learning}
\cite{kearns1994learnability} initiated a systematic investigation of the computational complexity of distribution learning. Since then, there has been a large volume of works devoted to the parameter and distribution learning from a wide range of distributions in both low and high dimensions \cite{dasgupta1999learning, sanjeev2001learning,
chan2013learning, ge2015learning, diakonikolas2019robust, bakshi2022robustly}.
Broadly, this problem falls into the realm of robust statistics. Following the pioneering works by \cite{tukey1960survey, huber1992robust},
other recent works on high dimensional robust distribution learning can be found at \cite{charikar2017learning, rekatsinas2017holoclean, diakonikolas2018robustly, khosravi2019expect,  diakonikolas2020robustly, kane2021robust}.
We will be particularly interested in robustly estimating mean and covariance from high-dimensional data with partially-reliable data samples \cite{baranchik1964multiple,szatrowski1980necessary, stein1981estimation, boldea2009maximum, belkin2010polynomial, pascal2013parameter,lai2016agnostic, diakonikolas2019recent, diakonikolas2019robust, lei2020fast, cheng2020high, cherapanamjeri2020list, hopkins2020robust}. 
Settings similar to ours are studied in \cite{liu2021robust, hu2021robust} regarding robust mean estimation with coordinate-level corruptions. 
In this paper, we obtain stronger guarantees for the mean estimation, yet incomparable to \cite{liu2021robust} due to their stronger corruption model.

\paragraph{Learning from truncated or censored samples}
Distribution learning under censored, truncated mechanisms has had a long history.  
Censoring happens when the events can be detected, but the measurements (the values) are completely unknown, while truncation occurs when an object falling outside some subset are not observed, and their count in proportion to the observed samples is also not known, see \cite{deemer1955estimation, cohen1957solution, dixon1960simplified,  haas1990estimation,cohen1991truncated,barr1999mean, cha2013rethinking, charikar2017learning} for an overview of the related works in estimating the censored or truncated normal or other type of distributions. 
\ignore{Specifically, \cite{pearson1902systematic, pearson1908generalised,lee1914table} used the method of moments, while \cite{fisher1931properties} used the maximum likelihood approach for the distribution learning from truncated samples. Since then, }\cite{daskalakis2018efficient, daskalakis2019computationally, daskalakis2020truncated} developed computationally and statistically efficient algorithms under the assumption that the truncation set is known. 
Furthermore, \cite{wu2019learning}
considered the problem of estimating the parameters of a $d$-dimensional rectified
Gaussian distribution from i.i.d. samples. This can be seen as a special case of the self-censoring truncation, where the truncation happens due to the ReLU generative model. \cite{shpitser2015missing} explored the identification and estimations conditions when data are missing not-at-random. While
\cite{bhattacharya2020identification, nabi2020full, malinsky2021semiparametric} 
explored the necessary and sufficient graphical conditions to recover the full data distribution under no self-censoring condition.

\paragraph{Learning from general missingness}
More broadly, self-selection models fall under the literature of regression with MNAR in the outcomes \cite{rotnitzky1995semiparametric, rotnitzky1998semiparametric, tchetgen2018discrete}. Unlike self-censoring, this project doesn't restrict the form of the representation. Two most popular methods are the expectation-maximization algorithm \cite{dempster1977maximum} and Gibbs sampling \cite{geman1984stochastic} under MAR. Despite the long history and the application of missing data models, most of the existing methods with regard to robust learning \cite{ramoni2001robust} are consistent in the asymptotic sample regime. For example, likelihood method \cite{enders2001relative}, multiple imputation \cite{allison2000multiple}, semiparametric estimation with influence function \cite{robins2000sensitivity}, inverse probability weighted complete-case estimator \cite{wooldridge2007inverse, seaman2013review}, and double/debiased machine learning \cite{chernozhukov2018double}. See textbook \cite{tchetgen2006statistical, tsiatis2006semiparametric, van2018flexible} for more introductions and further applications in this field. Recently, there are several finite sample guarantees for the double robust estimator when data are MNAR \cite{chernozhukov2018biased, chernozhukov2021simple} and high-dimensional \cite{quintas2022finite}. In addition to the works discussed, there has been significant research on detecting truncation \cite{de2023testing, de2024detecting}  and estimation under unknown truncation \cite{kontonis2019efficient, diakonikolas2024statistical}.

\ignore{  
\paragraph{Other distribution estimation problem}
Besides Normal distribution estimation other statistical tasks that appear to be important in practice include:
\begin{itemize}
    \item Other exponential families;
    \item Discrete domains;
    \item Linear regression;
    \item Mixture models
    \item ...
\end{itemize}
}
\ignore{
\paragraph{Learning from nonignorable missingness}
Indeed, missing data problem refers to a more general condition, where observations that were not made or recorded.
Popular methods, such as Expectation-Maximization \cite{dempster1977maximum, ghahramani1993supervised, honaker2011amelia}, matrix completion \cite{srebro2004maximum, candes2009exact, mazumder2010spectral, cai2010singular,  hastie2015matrix}, multiple imputations \cite{di2007imputation, van2007multiple, white2011multiple}, and Gaussian mixture model \cite{mccaw2022fitting,delalleau2012efficient} 
requires MAR assumption to guarantee convergence and generate consistent estimations in the limit. 
However, when it comes to the high-dimensional setting, even for normal distribution with infinite data, these methods become intractable for the distribution learning with censored data \cite{daskalakis2018efficient, daskalakis2019computationally, daskalakis2020truncated}. 
\arnab{organize this paragraph better.}
When the missingness mechanism is unknown, many works exploring the identification and estimations conditions under MNAR. For example, \cite{shpitser2015missing} 
modeled the missing data as a causal and probabilistic problem. While
\cite{bhattacharya2020identification, nabi2020full, malinsky2021semiparametric} 
explored the necessary and sufficient graphical conditions to recover the full data distribution under no self-censoring condition.
With known missingness mechanism,
broadly, this problem falls into the realm of robust statistics. Specifically,
\cite{wu2019learning}
considered the problem of estimating the parameters of a $d$-dimensional rectified
Gaussian distribution from i.i.d. samples. This can be seen as a special case of the self-censoring truncation, where the truncation happens due to the ReLU generative model. 
 }


\section{NOTATIONS AND PRELIMINARIES}
\label{sec:notations}
Throughout, let $d\geq 1$ denote the dimension of the underlying domain. For a $d$-dimensional vector $\mb{u}$ and a subset $A  \subseteq [d]$, let $\mb{u}_A \in \R^{|A|}$ denote the restriction of $\mb{u}$ to the coordinates in $A$.
A {\em missingness model} is defined by a function $\obs: \R^d \to 2^{[d]}$. Given a distribution $\mathcal{D}$ on $\R^d$, an {\em observation of $\mathcal{D}$ censored by $\obs$} is a pair $(A, \mb{x}) \in 2^{[d]} \times \R^{|A|}$, generated by first sampling $\mb{y}\sim \mathcal{D}$ and then setting $A = \obs(\mb{y})$ and $\mb{x} = \mb{y}_A$. The interpretation is that $y_i$ is seen for every $i \in \obs(\mb{y})$ while $y_i$ is missing for every $i \not \in \obs(\mb{y})$.  We denote the resulting distribution on pairs by $\mathcal{D}^{\obs}$. If the density function of $\mathcal{D}$ is $f$, then the density function of $\mathcal{D}^\obs$ is $f^\obs$ defined as:
\begin{equation}
f^\obs(A, \mb{x}) = \int_{\mb{y} \in \R^d} \indic[\obs(\mb{y})=A]\cdot \delta(\mb{y}_A-\mb{x}) f(\mb{y}) d\mb{y}.
\end{equation}
Note that $\sum_{A \subseteq [d]}\int_{\mb{x} \in \R^{|A|}} f^\obs(A, \mb{x}) = 1$, as desired. 
We say that $\obs$ is a {\em self-censoring missingness model} if there exist sets $S_1, \dots, S_d$ such that $\obs(\mb{y}) = \{i \in [d]: y_i \in S_i\}$. We say $\obs$ is a {\em linear threshold missingness model} if there exist $\mb{v}_1, \dots, \mb{v}_d \in \R^d$ and $b_1, \dots, b_d \in \R$ such that $\obs(\mb{y}) = \{i \in [d]: \mb{v}_i^T \mb{y} \leq b_i\}$. 
\begin{restatable}{fact}{factFrobenius}\label{fact:Frobenius}
    Let $A=\{a_{ij}\}$ and $B=\{b_{ij}\}$ be two $d\times d$ matrices such that $\forall i,j: \vert  a_{ij}-b_{ij}\vert \leq \delta$. Then, $\Vert A-B \Vert_F \leq \delta \cdot d$.
\end{restatable}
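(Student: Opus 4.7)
The claim is a direct consequence of the definition of the Frobenius norm, so the plan is essentially a one-line computation with a single bound applied $d^2$ times. Recall that for a $d \times d$ matrix $M = \{m_{ij}\}$, the Frobenius norm is
\[
\|M\|_F = \sqrt{\sum_{i=1}^{d}\sum_{j=1}^{d} m_{ij}^2}.
\]
Applying this to $M = A - B$ and using the pointwise hypothesis $|a_{ij} - b_{ij}| \leq \delta$ for every pair $(i,j)$, I would upper-bound each of the $d^2$ summands by $\delta^2$, so that
\[
\|A-B\|_F^2 = \sum_{i=1}^{d}\sum_{j=1}^{d}(a_{ij}-b_{ij})^2 \leq d^2 \delta^2,
\]
and then take the square root to conclude $\|A-B\|_F \leq \delta \cdot d$.

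There is no real obstacle here; the only potential subtlety is that this bound is tight only when every entrywise discrepancy saturates the bound $\delta$ simultaneously, which is perfectly consistent with the hypothesis and does not weaken the claim. No additional structural assumption on $A$ or $B$ (symmetry, positive semi-definiteness, etc.) is needed, and the inequality holds equally for real or complex entries, so the proof is a single display followed by a square root.
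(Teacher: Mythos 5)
Your proof is correct and is identical to the paper's: both expand $\|A-B\|_F^2$ as the double sum of squared entry differences, bound each of the $d^2$ terms by $\delta^2$, and take the square root. Nothing further is needed.
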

\ignore{
For $n$ i.i.d. samples $\mb{x}^{(1)}, \mb{x}^{(2)}, \dots, \mb{x}^{(n)} \sim N(\bm{\mu}^*, \mb{\Sigma^*})$, our observation data consists of $\tilde{\mb{x}}^{(1)}, \tilde{\mb{x}}^{(2)}, \dots, \tilde{\mb{x}}^{(n)} \in (\mathbb{R} \cup \{?\})^d$ where each observation contains some amount of missing entries. 

\paragraph{Self-censoring missingness}

\paragraph{Anchor missingness}
For a $d$-dimensional vector $u$ and a subset $A  \subseteq [d]$, let $u_A \in \R^{|A|}$ denote the restriction of $u$ to $A$. We use $S$ to denote the function that maps $y \in \R^d$ to a subset of $[d]$. The interpretation is that $y_i$ is seen for every $i \in S(y)$ while $y_i$ is missing for every $i \not \in S(y)$.

We think of each observation as a pair $(A, x)$ where $A \subseteq [d]$ and $x \in \R^{|A|}$. 
Let $\mathcal{D}^*$ be the observed distribution of $(A, x)$. A sample from $\mathcal{D}^*$ is generated by first sampling $y\in \R^d$ from $N(\mu^*, I)$ and then outputting $(S(y), y_{S(y)})$.  More generally, for $\mu \in \R^d$, let $\mathcal{D}^\mu$ be the distribution generated by first sampling $y \in \R^d$ from $N(\mu, I)$, and then outputting $(S(y), y_{S(y)})$. 

\paragraph{Basic Facts and Results} We begin by stating some standard facts below.
\paragraph{Vector and Matrix Norms}
We use $\| \mb{v} \|_2$ to denote the Euclidean norm of a vector $ \mb{v} $. Let $\mb{A}$ be an $d \times d$ matrix, $\|\mb{A}\|_2$ denote the operator norm of a matrix $\mb{A}$, and $ \Vert \mb{A} \Vert_2 = \max\{ |\lambda_i | \}$, where $ \lambda_i$'s are the eigenvalues of $\mb{A}$. Besides, we define $\mb{A}^{\flat}$ to be a vector in $\R^{n\cdot m}$ which is the flattening of $\mb{A}$.
$\|A\|_F = \| A^{\flat}\|_2$ denote the Frobenius norm of a matrix $\mb{A}$. We use the lexicographic order of the coordinates for the flattening of $\mb{A}$.

\paragraph{Convex Set and Projection}
Let $K \subseteq \R^d$ be a
convex set let $\mb{x} \in \R^d$, we define $\Pi_{K}(\mb{x})$ to be the projection of $\mb{x}$ to $K$. We use $\mathcal{B}(\mb{x}, r)$ to
denote the Euclidean ball with center $\mb{x}$ and radius $r$ and we use $\mathcal{B}(r)$ when 
$\mb{x} = \mb{0}$. Besides, we use $\mathcal{B}_{\Sigma}(\mb{x}, r) = \{\mb{y}: d(\mb{x}, \mb{y}) \leq r \}$ to
denote the Mahalanobis ball with center $\mb{x}$ and radius $r$.  

\begin{definition}[Mahalanobis distance] The Mahalanobis distance between two vectors $\mb{x}$ and $\mb{y}$, given a covariance matrix $\mb{\Sigma}$ is defined as $\Vert \mb{x} - \mb{y} \Vert_{\Sigma} = \sqrt{(\mb{x} - \mb{y})^{\top}\mb{\Sigma}^{-1}(\mb{x}- \mb{y})}$.

\end{definition}
\begin{definition}[Total variational (TV) distance]
Given two probability distributions $\cP$ and $\cQ$ over $\R^d$, the total variational distance between them is defined as
$
\tv(\cP, \cQ)
= \sup_{A \in \R^d} \Abs{\cP(A) - \cQ(A)}
= \frac{1}{2} \int_{\R^d} \Abs{\cP - \cQ} dx
$.
\end{definition}
\ignore{
\begin{definition}[Kullback–Leibler (KL) divergence]
\yuhao{Since we removed KL figures in the experiments, this definitation seems not needed anymore.}
Given two probability distributions $\cP$ and $\cQ$ over $\R^d$, the KL divergence between them is defined as
$
\kl(\cP, \cQ) = \int_{A \in \R^d} \cP(A) \log \Paren{ \frac{\cP(A)}{\cQ(A)} } dA
$.
\end{definition}
}

\begin{definition}[$\varepsilon$- approximation] 
A distribution $\hat{f}$ is an $\varepsilon$-approximation of $f$ if $\| f - \hat{f}\|_F \leq O(\varepsilon)$.  
\end{definition}

\begin{definition}[Strong convexity]
Let $f: \mathbb{R}^d \rightarrow \mathbb{R}$, and let $H_f$ be the Hessian of $f$. We say that $f$ is $\beta$-strongly convex if $H_f(\beta) \succeq \beta I$ for all $x \in \mathbb{R}^d$.
\end{definition}
}

\section{DISTRIBUTION LEARNING UNDER SELF-CENSORING MISSINGNESS}
\label{sec:self-censoring}
\ignore{
Depending on the circumstances, truncation and censoring are two distinct phenomena happens during the sampling process that cause samples to be missing. 
Censoring happens when the events can be detected, but the measurements (the values) are completely unknown.
Specifically, in the self-censoring missingness mechanism, the missingness depends on the missing attribute itself. 
While truncation occurs when an object falling outside some subset are not observed, and their count in proportion to the observed samples is also not known. 
The main difference between censoring and truncation is that censored object is detectable while the object is not even detectable in the case of truncation.
The above concepts generalize to all types of truncation and censoring: left, right, and interval.
Truncation is related to censoring, but differs from it in the sense that value below (left truncation) or above (right truncation) the truncation point is not recorded at all. 
In this case, for each variable, we can remove all missing entries and treat self-censoring missingness as a special case of truncation. 
While the problem of distribution learning from truncated samples draw from an unknown multivariate normal distribution has been solved in \cite{daskalakis2018efficient}. Specifically, they shows that the mean vector $\bm{\mu}$ and covariance matrix $\bm{\Sigma}$ can be estimated to arbitrary accuracy in polynomial-time using a (Projected) Stochastic Gradient Decent (PSGD) algorithm on the negative log-likelihood function. 
}
The problem of learning a distribution from truncated samples was studied in \cite{daskalakis2018efficient}. Their guarantee, as presented in \cref{cor:das}, is given under the assumption that a fraction $\alpha$ of all the samples is fully observed across all dimensions.
\ignore{
\begin{theorem}[\cite{daskalakis2018efficient}]
\label{thm:das}
Given oracle access to a measurable set $T$, whose measure under some unknown d-variate normal $\gauss(\bm{\mu}^\ast, \mb{\Sigma}^\ast)$ is at least some constant $\alpha > 0$, and samples $\mb{x}^{(1)}, \mb{x}^{(2)}, \dots, \mb{x}^{(n)}$ from $\gauss(\bm{\mu}^\ast, \mb{\Sigma}^\ast)$ that are truncated to this set, there exists a polynomial-time algorithm that recovers estimates $\hat{\bm{\mu}}$ and $\hat{\mb{\Sigma}}$. In particular, for all $\varepsilon > 0$, the algorithm uses $\tilde{\mathcal{O}}(d^2/\varepsilon^2)$ truncated samples and queries to the oracle and produces estimates that satisfy the following with probability at least 99\%.
\begin{align*}
    & \Vert \mb{\Sigma}^{^\ast -1/2} (\bm{\mu}^\ast - \bm{\hat{\mu}}) \Vert_2 \leq \varepsilon;  \\
    \quad \text{and} \quad 
    &\Vert \bm{I} - \mb{\Sigma}^{^\ast -1/2}\mb{\hat{\Sigma}}\mb{\Sigma}^{^\ast -1/2} \Vert_F \leq \varepsilon.
\end{align*}

\end{theorem}
}
\begin{theorem}[adapted from \cite{daskalakis2018efficient}]
    \label{cor:das}
Given oracle access to a measurable set $T$, whose measure under some unknown d-variate normal $\gauss(\bm{\mu}^\ast, \mb{\Sigma}^\ast)$ is at least some constant $\alpha > 0$, and samples $\mb{x}^{(1)}, \mb{x}^{(2)}, \dots, \mb{x}^{(n)}$ from $\gauss(\bm{\mu}^\ast, \mb{\Sigma}^\ast)$ that are truncated to this set, there exists a polynomial-time algorithm that recovers estimates $\hat{\bm{\mu}}$ and $\hat{\mb{\Sigma}}$. In particular, for all $\varepsilon > 0$, the algorithm uses $\tilde{\mathcal{O}}(d^2/\varepsilon^2)$ truncated samples and queries to the oracle and produces estimates that satisfy the following with probability at least 99\%.
\begin{align*}
    & \Vert (\bm{\mu}^\ast - \bm{\hat{\mu}}) \Vert_2 \leq \varepsilon\sqrt{\lambda_{max}};  
    \quad \text{and} \quad 
    \Vert  \mb{\Sigma}^{^\ast }-\mb{\hat{\Sigma}} \Vert_F \leq \varepsilon\lambda_{max}.
\end{align*} 
\end{theorem}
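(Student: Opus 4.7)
The plan is to derive the stated bounds as a direct rescaling of the original guarantee of \cite{daskalakis2018efficient}, which is most naturally phrased in terms of Mahalanobis (whitened) error. First I would invoke that result with accuracy parameter $\varepsilon$ and with the same $\tilde{\mathcal{O}}(d^2/\varepsilon^2)$ sample budget; this produces, with probability at least $99\%$, estimates $\hat{\bm{\mu}}$ and $\hat{\mb{\Sigma}}$ satisfying
\[
\Vert \mb{\Sigma}^{\ast -1/2}(\bm{\mu}^\ast - \hat{\bm{\mu}}) \Vert_2 \leq \varepsilon
\quad \text{and} \quad
\Vert I - \mb{\Sigma}^{\ast -1/2}\hat{\mb{\Sigma}}\mb{\Sigma}^{\ast -1/2} \Vert_F \leq \varepsilon.
\]

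Next I would translate the mean bound. Writing $\bm{\mu}^\ast - \hat{\bm{\mu}} = \mb{\Sigma}^{\ast 1/2}\bigl(\mb{\Sigma}^{\ast -1/2}(\bm{\mu}^\ast - \hat{\bm{\mu}})\bigr)$ and applying the standard operator-norm inequality $\Vert A v \Vert_2 \leq \Vert A \Vert_{\mathrm{op}} \Vert v \Vert_2$, together with $\Vert \mb{\Sigma}^{\ast 1/2} \Vert_{\mathrm{op}} = \sqrt{\lambda_{\max}(\mb{\Sigma}^\ast)} = \sqrt{\lambda_{\max}}$, yields $\Vert \bm{\mu}^\ast - \hat{\bm{\mu}} \Vert_2 \leq \sqrt{\lambda_{\max}}\cdot \varepsilon$, as claimed.

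For the covariance bound I would use the identity
\[
\mb{\Sigma}^\ast - \hat{\mb{\Sigma}} = \mb{\Sigma}^{\ast 1/2}\bigl(I - \mb{\Sigma}^{\ast -1/2}\hat{\mb{\Sigma}}\mb{\Sigma}^{\ast -1/2}\bigr)\mb{\Sigma}^{\ast 1/2},
\]
combined with the fact that the Frobenius norm is sub-multiplicative against the operator norm on both sides, i.e.\ $\Vert ABC \Vert_F \leq \Vert A \Vert_{\mathrm{op}} \Vert B \Vert_F \Vert C \Vert_{\mathrm{op}}$. Using $\Vert \mb{\Sigma}^{\ast 1/2} \Vert_{\mathrm{op}} = \sqrt{\lambda_{\max}}$ twice gives $\Vert \mb{\Sigma}^\ast - \hat{\mb{\Sigma}} \Vert_F \leq \lambda_{\max}\cdot \varepsilon$.

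Since the underlying algorithm and its sample complexity are imported verbatim from \cite{daskalakis2018efficient}, there is no substantive obstacle: the only content beyond citing their theorem is the purely linear-algebraic rescaling of the error bounds from the whitened coordinates back to the original ones. The single point requiring brief care is ensuring that the operator/Frobenius inequality is applied in the correct direction (i.e.\ that we bound the \emph{unwhitened} norm by the whitened one, not the other way around), which is immediate from the eigenvalue upper bound $\lambda_{\max}$ on $\mb{\Sigma}^\ast$.
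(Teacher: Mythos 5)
Your proposal is correct and matches the paper's (implicit) derivation: the paper states this theorem as an adaptation of the whitened-norm guarantee of \cite{daskalakis2018efficient} without spelling out the rescaling, and the same linear-algebraic conversion (in the reverse direction, picking up $1/\lambda_{\min}$ factors) appears explicitly in the paper's proof of \cref{th:censoring_main}. Both your mean bound via $\Vert \mb{\Sigma}^{\ast 1/2}\Vert_{\mathrm{op}} = \sqrt{\lambda_{\max}}$ and your covariance bound via $\Vert ABC\Vert_F \leq \Vert A\Vert_{\mathrm{op}}\Vert B\Vert_F\Vert C\Vert_{\mathrm{op}}$ are exactly the intended argument.
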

 This simplifies the problem because with enough samples, part of the shape of the Gaussian distribution can be observed, allowing for simultaneous estimation of the mean and covariance. In contrast, the self-censoring missingness only allows us to observe a subset of samples, making the estimation problem more challenging. The goal is to recover $\bm{\mu}^*$ and $\mb{\Sigma^*}$ under minimal assumptions on the censoring mechanism. In this section, we present and analyze our algorithm for estimating the true mean and covariance of the multivariate normal distribution under self-censoring missingness. 

\ignore{
Formally, there exist self-censoring functions $S_1, \dots, S_d : \R\to \{0,1\}$ such that for $\mb{x}\in \R^d$, the component $x_i$ can be observed if and only if\footnote{We assume here that the censoring mechanism is deterministic, but our analysis carries through also if the censoring were probabilistic. In the latter setting, $S_i : \R \to [0,1]$, and $x_i$ is observed with probability $S_i(x_i)$.}
$S_i(x_i)=1$. In other words, if the true sample is $\mb{x}$, the observation is $\tilde{\mb{x}}$ where:
\begin{align*}
 \tilde{x}_i = 
 \begin{cases}
 x_i & \text{if } S_i(x_i)=1\\
 ? & \text{ otherwise.}
 \end{cases} 
 \end{align*} 
For $n$ i.i.d. samples $\mb{x}^{(1)}, \mb{x}^{(2)}, \dots, \mb{x}^{(n)} \sim \gauss(\bm{\mu}^*, \mb{\Sigma^*})$, our observation data consists of $\tilde{\mb{x}}^{(1)}, \tilde{\mb{x}}^{(2)}, \dots, \tilde{\mb{x}}^{(n)} \in (\mathbb{R} \cup \{?\})^d$ where each observation contains some amount of missing entries. 
}
The main idea behind our algorithm for self-censoring missingness is to use the solutions to $1$-dimensional and $2$-dimensional subproblems as subroutines and subsequently combine them appropriately to obtain the solution. These subproblems are either the restriction of our problem to a single coordinate or a pair of coordinates. \cref{censoring_assumpt} guarantees the existence of sufficiently many samples for these problems and allows us to use the $1D$ and $2D$ versions of Algorithm 1 in \cite{daskalakis2018efficient} as our  \texttt{Univariate\_SGD\_truncation} and \texttt{Bivariate\_SGD\_truncation} estimator respectively.

Even though these subroutines can give us accurate estimates for each coordinate of the true mean and the correlations between pairs of sample coordinates it is unfortunately not straightforward to provide an estimate of the $d\times d$ covariance matrix satisfying our desired guarantees. We explain below how to get around this issue.  

We reconstruct the covariance matrix by only considering pairs of coordinates.
For each $i \neq j$, we apply the $2$-dimensional version of the algorithm in \cite{daskalakis2018efficient} (\texttt{Bivariate\_SGD\_truncation}) on the $i$th and $j$'th coordinates to obtain the $2\times 2$-matrix $\hat{\mb{\Sigma}}^{ij}$. 
We will show that the $d \times d$ matrix $\hat{\Sigma}$ whose off diagonal entries $(\hat{\Sigma}_{ij})$ are given by the off diagonal entries $(\hat{\Sigma}^{ij}_{12})$ of the corresponding $2\times 2$ matrix is a good approximation for the true $\Sigma$.

We are now ready to describe \cref{algo:1} demonstrating our distribution learning approach under self-censoring missingness mechanism. 

{
\IncMargin{1.2em}
\begin{algorithm}[h]
\SetKwInOut{Input}{Input}
\Input{Data $\mb{x} \in \R^{n \times d}$, where $n=\frac{1}{\alpha\eps^{\prime 2}}$}
\For{$i\gets1$ \KwTo $d$}{
$\hat{\mu}_i,\hat{\Sigma}_{ii}\gets \texttt{Uni\_SGD\_trunc}(\mb{x}_i, S_i)$\; 
}
\For{$i\gets 1$ \KwTo $d-1$}{
    \For{$j \gets i+1$ \KwTo d}{
    $\mb{\hat{\Sigma}}^{ij}_{12} \gets \texttt{Biv\_SGD\_trunc}(\mb{x}_i, \mb{x}_j, S_i \times S_j)$\;
    } 
} 
$\bm{\hat{\mu}} \gets [\hat{\mu}_1,\hat{\mu}_2,\!...,\hat{\mu}_d]$\; 
\For{$i\gets1$ \KwTo $d-1$}{
    \For{$j \gets i+1$ \KwTo d}{
    $\mb{\hat{\Sigma}}_{ij} \gets \mb{\hat{\Sigma}}^{ij}_{12} $; \qquad
    $\mb{\hat{\Sigma}}_{ji} \gets \mb{\hat{\Sigma}}^{ij}_{12} $\;
    } 
} 
\Return{ $(\bm{\hat{\mu}}, \mb{\hat{\Sigma}})$ }
\caption{[\texttt{Truncation\_PSGD}] Mean and covariance recovery algorithm with oracle access that generates samples with incomplete data.
}
\label{algo:1}
\end{algorithm}

\DecMargin{1.2em}
}

\paragraph{Mean Estimation}
\ignore{The most naive approach would be to 
simply take the empirical average of $\{y_i^{(j)}: \forall j \text{ such that } y_i^{(j)} \neq ?\}$, but this can be a bad approximation because the censoring may bias the empirical average. Instead, 
we adapt Algorithm 1 in \cite{daskalakis2018efficient} 
to the \texttt{Univariate\_SGD\_truncation} algorithm, and we estimate $\bm{\mu}^*$ by estimating each coordinate $\mu^*_i$ independently. 
Then we combine all learned results together to get our estimation of the mean value of a $d$-variate normal $\gauss(\bm{\mu}^\ast,\mb{\Sigma}^\ast)$.
}
We show in \cref{lm:mean} the finite sample bound with a consistent mean estimation up to a bounded error of $\mathcal{O}(\varepsilon)$. The proof is deferred to the appendix.

\ignore{
\begin{restatable}{lemma}{mean}
\label{lm:mean}
Let $\gauss(\bm{\mu}^\ast, \mb{\Sigma}^\ast)$ be the normal distribution with mean $\bm{\mu}^\ast$ and 
covariance matrix $\mb{\Sigma^\ast}$.  
Suppose that \cref{censoring_assumpt} holds for some constant $\alpha>0$, and the spectrum of eigenvalues of $\mb{\Sigma}^\ast$ is bounded by $\lambda_{min} \leq \lambda \leq \lambda_{max}$ and let $\bm{\hat{\mu}}$ be the estimated mean from the censored Gaussian in Line 6 of Algorithm 1. For all $\eps> 0$, using 
\textcolor{black}{$\tilde{\mathcal{O}}(d\frac{\lambda_{\max}(\mb{\Sigma}^\ast)}{\lambda_{\min}(\mb{\Sigma}^\ast)}/\varepsilon^{2})$}
samples\footnote{We note that the $\tilde{\mathcal{O}}_\alpha$ notation here hides both $\log d$ and $\log(1/\delta)$ factors.}: 
$\| \mb{\Sigma}^{^\ast -1/2}(\bm{\hat{\mu}} -\bm{\mu}^\ast) \|_2\leq \mathcal{O}(\varepsilon)$.
\end{restatable}
}

\begin{restatable}{lemma}{mean}
\label{lm:mean}
Let $\gauss(\bm{\mu}^\ast, \mb{\Sigma}^\ast)$ be the normal distribution with mean $\bm{\mu}^\ast$ and 
covariance matrix $\mb{\Sigma^\ast}$.  
Suppose that \cref{censoring_assumpt} holds for some constant $\alpha>0$, and let $\bm{\hat{\mu}}=(\hat{\mu}_1,\dots,\hat{\mu}_d)$ be the estimated mean from the censored Gaussian in Line 6 of Algorithm 1. For all $\eps> 0$, using 
$\mathcal{\tilde{O}}(\frac{d^2}{\alpha \varepsilon^{2}})$
samples\footnote{We note that the $\tilde{\mathcal{O}}_\alpha$ notation here hides both $\log d$ and $\log(1/\delta)$ factors.} we have that: 
    \[ \forall i\in [d]: \vert \mu_i^\ast - \hat{\mu}_i \vert \leq (\varepsilon/d) \sigma_{i} \leq (\varepsilon/d)\sqrt{\lambda_{max}(\mb{\Sigma})} \]
    where $\sigma_i$ denotes the standard deviation of coordinate $i$ (i.e $\sigma_i=\sqrt{\Sigma^\ast_{ii}}$, where $\Sigma^\ast_{ii}$ is the $i$-th diagonal entry of the covariance matrix $\mb{\Sigma^\ast}$).
\end{restatable}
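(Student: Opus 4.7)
The plan is to reduce the $d$-dimensional mean estimation to $d$ independent one-dimensional truncated estimation problems, one per coordinate, and then invoke \cref{cor:das} as a black box before union-bounding. For the $i$-th coordinate, observe that the marginal of $\gauss(\bm{\mu}^*, \mb{\Sigma}^*)$ on coordinate $i$ is the univariate Gaussian $\gauss(\mu_i^*, \Sigma_{ii}^*) = \gauss(\mu_i^*, \sigma_i^2)$. For a sample $\mb{y}$, the event that coordinate $i$ is observed is exactly $\{y_i \in S_i\}$, whose probability is at least $\alpha$ by \cref{censoring_assumpt} (applied with $j=i$). Conditioning on this event yields a sample from $\gauss(\mu_i^*, \sigma_i^2)$ truncated to $S_i$, so the stream of samples supplied to \texttt{Uni\_SGD\_trunc} for coordinate $i$ is exactly an i.i.d. sequence of one-dimensional truncated Gaussians whose truncation mass is at least $\alpha$, matching the hypothesis of \cref{cor:das}.

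For the sample accounting, I would draw a total of $n = \tilde{\mathcal{O}}(d^2/(\alpha\varepsilon^2))$ samples and apply a Chernoff bound: with probability at least $1 - \delta/(2d)$, the number of samples with $y_i \in S_i$ is at least $\alpha n / 2 = \tilde{\mathcal{O}}(d^2/\varepsilon^2)$. Feeding these truncated samples to the univariate subroutine with accuracy parameter $\varepsilon' = \varepsilon/d$ requires $\tilde{\mathcal{O}}(1/\varepsilon'^2) = \tilde{\mathcal{O}}(d^2/\varepsilon^2)$ samples (the one-dimensional instance of \cref{cor:das}), and outputs $\hat{\mu}_i$ with
\[
|\hat{\mu}_i - \mu_i^*| \;\leq\; \varepsilon' \sqrt{\lambda_{\max}(\sigma_i^2)} \;=\; (\varepsilon/d)\,\sigma_i \;\leq\; (\varepsilon/d)\sqrt{\lambda_{\max}(\mb{\Sigma}^*)},
\]
which is exactly the per-coordinate bound claimed in the lemma.

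Finally, I would union-bound over the $2d$ bad events (the $d$ Chernoff deviations and the $d$ failure events of the univariate subroutine), each set to probability $\delta/(2d)$; the resulting $\log d$ factor is absorbed in the $\tilde{\mathcal{O}}$ notation, together with the $\log(1/\delta)$ already inherited from \cref{cor:das}. The only step that requires care is verifying that rescaling the target accuracy of the univariate subroutine from $\varepsilon$ to $\varepsilon/d$ produces the per-coordinate bound $(\varepsilon/d)\sigma_i$ rather than something weaker; this is immediate because in the one-dimensional instance the quantity $\sqrt{\lambda_{\max}}$ appearing in \cref{cor:das} is just $\sigma_i$. Beyond this, the argument is pure bookkeeping via Chernoff and a union bound, and there is no genuine technical obstacle.
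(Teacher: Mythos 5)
Your proposal is correct and follows essentially the same route as the paper: reduce to $d$ univariate truncated-Gaussian problems, use \cref{censoring_assumpt} with $j=i$ to guarantee each coordinate is observed with mass at least $\alpha$, run the one-dimensional instance of \cref{cor:das} at accuracy $\varepsilon'=\varepsilon/d$ (where $\sqrt{\lambda_{\max}}$ degenerates to $\sigma_i$), and union-bound over coordinates. The only difference is cosmetic --- you use a Chernoff bound to count observed samples per coordinate where the paper uses a batching/amplification argument to drive the per-coordinate failure probability down to $1/d^2$ --- and both yield the same $\tilde{\mathcal{O}}(d^2/(\alpha\varepsilon^2))$ sample complexity.
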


\ignore{
\begin{restatable}{assumption}{missingnessassumpt}
\label{missingness_assumpt} There exist some $\alpha, \beta > 0$ such that for any set $A \subseteq [d]$ of size at most $\beta d$,
\[\Pr_{\mb{y} \sim N(\bm{\mu}^*, \Sigma)}[A \subseteq \obs(\mb{{y}})]\geq \alpha.\] \textcolor{black}{Note that we only consider the case where $\beta d$ is a positive integer without loss of generality.}
\end{restatable}
\begin{fact}
    Let $A=\{a_{ij}\},B=\{b_{ij}\}$ be two $d\times d$ matrices such that $\forall i,j: \vert  a_{ij}-b_{ij}\vert \leq \delta$. Then, $\Vert A-B \Vert_F\leq \delta\cdot d$   
\end{fact}
\begin{proof}
    By definition of the Frobenious norm we have:
    \[
    \Vert A-B \Vert_F^2=\sum_{i=1}^d\sum_{j=1}^d (a_{ij}-b_{ij})^2\leq d^2\delta^2
    \]
    Thus, 
    \[
     \Vert A-B \Vert_F\leq \delta\cdot d
    \]
\end{proof}
}


 

\paragraph{Covariance Estimation}
In \cref{lm:sdp_spec} below, we show that if for each pair of coordinates we are given enough samples in which this particular pair is seen, we are able to obtain an accurate estimation of $\mb{\Sigma^\ast}$. \textcolor{black}{In particular, we will run the 2D version of the problem for each of the ${d \choose 2}$ pairs of coordinates and require that the estimate has accuracy $\varepsilon_2$. By applying \cref{cor:das} for $d=2$, and error $\delta=\frac{1}{100{d \choose 2}}$, we conclude that $\tilde{\mathcal{O}}(1/\varepsilon_2^2)$ samples are sufficient to achieve 99\% success probability via a union bound. We will show that $\eps_2$ doesn't need to be too small.} 

\begin{restatable}{lemma}{covariance}
\label{lm:sdp_spec}
Let $\bm{\hat{\Sigma}}$ be the matrix with entries $\hat{\Sigma}_{ij}=\hat{\Sigma}^{ij}_{12}$, where $\hat{\Sigma}^{ij}_{12}$ denotes the value of the off diagonal entries of the $2\times 2$ matrix $\bm{\hat{\Sigma}^{ij}}$.
By $\mb{\hat{\Sigma}}^{ij}$ we denote the estimation of a $2 \times 2$ covariance matrix that we get when we restrict the input data to coordinates $i$ and $j$.
Then the following holds: 
Using $\tilde{\mathcal{O}}(\frac{d^2}{\alpha\varepsilon^2 })$ samples to get the above estimates,
we have that: \[\| \mb{\Sigma^\ast} - \mb{\hat{\Sigma}}\|_F \leq \eps\lambda_{max}\]
where $\lambda_{max}$ is the maximum eigenvalue of $\Sigma^\ast$
\end{restatable}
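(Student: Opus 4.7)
}
The plan is to reduce estimation of each entry of $\mb{\Sigma}^\ast$ to a constant-dimensional truncation problem, control each entry's error at level $\eps \lambda_{\max}/d$, and then assemble the bound via \cref{fact:Frobenius}. First I would fix a pair $(i,j)$ and observe that, under the self-censoring model, the set of samples for which both coordinates $i$ and $j$ are seen is exactly the set of $\mb{y}$ for which $y_i \in S_i$ and $y_j \in S_j$. Restricted to these two coordinates, these samples are distributed as the $2$-dimensional marginal $\gauss\!\left((\mu^\ast_i,\mu^\ast_j),\mb{\Sigma}^\ast_{\{i,j\}}\right)$ truncated to the measurable set $S_i \times S_j$, whose mass under this marginal is at least $\alpha$ by \cref{censoring_assumpt}. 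Hence \cref{cor:das} is directly applicable in dimension $d=2$ on the coordinates $i,j$, and a symmetric argument (with $S_i$) applies for the $1$-dimensional diagonal subproblem handled by \texttt{Uni\_SGD\_trunc}.

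Next I would handle the sample-counting step. Given $N = \tilde{\mathcal{O}}(d^2/(\alpha \eps^2))$ total raw samples, a standard Chernoff bound shows that for each fixed pair $(i,j)$ the number of samples in which both coordinates are observed is at least $N\alpha/2 = \tilde{\mathcal{O}}(d^2/\eps^2)$, except with probability $\exp(-\Omega(N\alpha))$. I would take a union bound over all $\binom{d}{2}$ pairs, absorbing the resulting $\log d$ factor into $\tilde{\mathcal{O}}(\cdot)$; this guarantees that every 2D subproblem receives the required number of truncated samples simultaneously, with probability at least $99/100$.

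Third, I would invoke \cref{cor:das} on each pair with error parameter $\eps_2 \defeq \eps/d$ and confidence $1/(100\binom{d}{2})$. This requires $\tilde{\mathcal{O}}(1/\eps_2^2) = \tilde{\mathcal{O}}(d^2/\eps^2)$ truncated samples per pair, which matches what we have from the previous step. The conclusion of \cref{cor:das} yields
\[
\left\| \mb{\Sigma}^\ast_{\{i,j\}} - \mb{\hat{\Sigma}}^{ij} \right\|_F \;\leq\; \eps_2 \, \lambda_{\max}\!\left(\mb{\Sigma}^\ast_{\{i,j\}}\right) \;\leq\; \eps_2 \, \lambda_{\max}(\mb{\Sigma}^\ast),
\]
the last inequality being the Cauchy interlacing property. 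Since Frobenius distance dominates entry-wise distance, this gives $\lvert \Sigma^\ast_{ij} - \hat{\Sigma}^{ij}_{12}\rvert \leq \eps_2\lambda_{\max}$ for every off-diagonal entry, and analogously $\lvert \Sigma^\ast_{ii} - \hat{\Sigma}_{ii}\rvert \leq \eps_2\lambda_{\max}$ for diagonal entries from the $1$D subproblem. A union bound over all $O(d^2)$ pairs keeps the failure probability below $1/100$.

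Finally, I would apply \cref{fact:Frobenius} to the matrices $\mb{\Sigma}^\ast$ and $\mb{\hat{\Sigma}}$, whose entry-wise difference is uniformly bounded by $\eps_2 \lambda_{\max} = (\eps/d)\lambda_{\max}$, obtaining $\|\mb{\Sigma}^\ast-\mb{\hat{\Sigma}}\|_F \leq d \cdot (\eps/d)\lambda_{\max} = \eps\lambda_{\max}$, as required. The main delicacy of the argument is the per-pair accuracy choice: one has to take $\eps_2 = \eps/d$ so that the Frobenius aggregation across $\Theta(d^2)$ entries does not blow up, which is precisely what forces the $d^2$ dependence in the sample complexity; the $1/\alpha$ factor then enters solely through the probability of jointly observing each pair, and the $\log d$ from the two union bounds (sample availability and accuracy) disappears into the $\tilde{\mathcal{O}}$ notation.
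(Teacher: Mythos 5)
Your proposal is correct and follows essentially the same route as the paper's proof: reduce each entry to a $1$- or $2$-dimensional truncated-Gaussian estimation problem solved via \cref{cor:das} with per-pair accuracy $\eps/d$, use \cref{censoring_assumpt} to guarantee each pair is jointly observed in an $\alpha$ fraction of samples, and assemble the entry-wise bounds into a Frobenius bound via \cref{fact:Frobenius}. You are somewhat more explicit than the paper about the Chernoff/union-bound bookkeeping and about invoking eigenvalue interlacing to pass from $\lambda_{\max}(\mb{\Sigma}^\ast_{\{i,j\}})$ to $\lambda_{\max}(\mb{\Sigma}^\ast)$, but these are details the paper treats implicitly rather than genuine differences in approach.
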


Based on the above results, we summarize our main results under the self-censoring missingness mechanism in \cref{th:censoring_main}.
\censoringmain*

\ignore{ 
\begin{restatable}{theorem}{censoringmain}
\label{th:censoring_main}
We have a d-variate normal $\mathcal{N}(\bm{\mu}, \mb{\Sigma})$ with $n$ i.i.d. observed random vectors 
$\tilde{\mb{x}}^{(1)},\tilde{\mb{x}}^{(2)}, \dots, \tilde{\mb{x}}^{(n)} \in (\mathbb{R} \cup \{?\})^d$ by the data generator $\mathcal{O}$. 
Let $S_1, \dots, S_d : \R\to \{0,1\}$ represent the self-censoring functions such that for $\mb{x}\in \R^d$, the component $x_i$ can be observed if and only if $S_i(x_i)=1$.
There exists a polynomial-time algorithm that recovers estimated $\bm{\mu}, \mb{\Sigma}$ with arbitrary accuracy. Specifically, for all $\varepsilon > 0$, the algorithm uses $\Omega(\frac{\log(1/\delta)\cdot \log(nd /\alpha \delta) }{ \varepsilon^2 d})$ samples and produce estimates that satisfy the following:
\begin{equation}
\left\|\mb{\Sigma}^{-1 / 2}(\bm{\mu}-\bm{\hat{{\mu}}}) \right\|_{2} \leq \mathcal{O}(\sqrt{d} \varepsilon) ; \quad \text { and } \quad \left\|\bm{I}-\mb{\tilde{\Sigma}}^{-1 / 2} \mb{\Sigma} \mb{\tilde{\Sigma}}^{-1 / 2}\right\|_{F} \leq \mathcal{O}( \sqrt{d} \varepsilon)
\end{equation}
Furthermore, under the above conditions, we have the total variation distance 
$d_{TV}(\gauss(\bm{\mu}, \mb{\Sigma}), \gauss(\bm{\tilde{\mu}}, \mb{\tilde{\Sigma}})) \leq \mathcal{O}(\sqrt{d}\varepsilon)$
\end{restatable}
}

With the following lemma we will show a lower bound, which shows that even if the TV distance is large, in which case the distributions are easily distinguishable in the classical sampling model, the censoring model makes the distribution hard to distinguish.

\begin{restatable}{lemma}{selfcensoringlb}
Given $m=o(1/\sqrt{\lambda_{min}})$
censored samples according to the missingness model $\obs$ and $\eps=\Omega(\sqrt{\lambda_{min}})$. No algorithm can estimate the true mean with accuracy $O(\eps)$ and probability larger than $2/3$.
\end{restatable}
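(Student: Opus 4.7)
The plan is to invoke Le Cam's two-point method in the one-dimensional case ($d=1$) with $\Sigma^* = \lambda_{min}$. Consider the candidate Gaussians $P_+ = \gauss(\eps, \lambda_{min})$ and $P_- = \gauss(-\eps, \lambda_{min})$, whose means differ by $2\eps$. Any estimator achieving $O(\eps)$ mean-estimation accuracy with probability greater than $2/3$ under both $P_+$ and $P_-$ must distinguish them as hypotheses with probability greater than $2/3$. It therefore suffices to construct a self-censoring model $\obs$ under which the censored $m$-sample distributions $(P_+^\obs)^{\otimes m}$ and $(P_-^\obs)^{\otimes m}$ have total variation distance at most $1/3$.

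Take the censoring set to be the symmetric interval $S_1 = [-a,a]$, with $a$ chosen as a sufficiently small constant multiple of $\lambda_{min}$. Under this choice, a short Taylor expansion of the Gaussian CDF around $\pm a$ gives $\Pr_{y \sim P_\pm}[y \in S_1] = \Theta(\sqrt{\lambda_{min}})$ in the tight regime $\eps = \Theta(\sqrt{\lambda_{min}})$ and a smaller value when $\eps \gg \sqrt{\lambda_{min}}$. By the joint symmetry of $\{\mu_+,\mu_-\}$ and $S_1$ about the origin, the two censored distributions place identical mass on the ``no observation'' atom, so the atom contributes $0$ to the $L^1$ discrepancy. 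Bounding the continuous part on $S_1$ crudely by the sum of the masses $P_+^\obs$ and $P_-^\obs$ place there yields $\TV(P_+^\obs, P_-^\obs) = O(\sqrt{\lambda_{min}})$.

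By the standard subadditivity of total variation on product distributions, $\TV((P_+^\obs)^{\otimes m}, (P_-^\obs)^{\otimes m}) \leq m \cdot O(\sqrt{\lambda_{min}}) = o(1)$ whenever $m = o(1/\sqrt{\lambda_{min}})$. Le Cam's inequality then forces the maximum success probability of any two-point test to be at most $(1+o(1))/2 < 2/3$ for sufficiently small hidden constants, which completes the reduction.

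The main obstacle is calibrating the constant in $a = \Theta(\lambda_{min})$: because $\eps = \Omega(\sqrt{\lambda_{min}})$, the Gaussian density of $P_\pm$ at points of $S_1$ carries an $e^{-\Theta(\eps^2/\lambda_{min})}$ factor that shrinks the observation probability and must be absorbed into the hidden constants without inflating the per-sample TV beyond $O(\sqrt{\lambda_{min}})$. Choosing $a$ proportional to $\lambda_{min}$ with a small enough prefactor handles this uniformly across the range $\eps = \Omega(\sqrt{\lambda_{min}})$, preserving both the observation-probability estimate and the sample-complexity bound.
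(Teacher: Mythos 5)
Your Le Cam machinery is applied correctly: the symmetry of $S_1=[-a,a]$ about the origin does make the ``missing'' atom contribute zero, the continuous part of the censored densities is bounded by the common observation mass $\Theta(\sqrt{\lambda_{min}})$, and the subadditivity of total variation over the $m$-fold product together with the standard estimation-to-testing reduction give exactly the $o(1/\sqrt{\lambda_{min}})$ barrier you claim. As a purely existential statement (``there is some self-censoring model and some pair of means for which\dots''), your construction works, and it even covers the whole range $\eps=\Omega(\sqrt{\lambda_{min}})$ uniformly, since your observation probability only shrinks as $\eps$ grows.

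The problem is that this is not the lemma the paper is proving, and the difference is not cosmetic. The statement refers to \emph{the} missingness model $\obs$, namely the specific two-dimensional construction set up just before it: $P_\lambda$ and $Q_\lambda$ are a Gaussian with eigenvalues $(\lambda_{min},1)$ and its translate by $\eps$ along the low-variance eigendirection, censored by half-line sets on each coordinate, so that (outside a band of probability $O(\eps)$) \emph{every sample reveals exactly one coordinate}. The paper's proof then builds an explicit coupling (the one in \cref{eq:coupling}) under which the two censored observations are literally identical except on that band; the $\Omega(1/\sqrt{\lambda_{min}})$ lower bound is therefore about the revealed coordinate carrying no information, not about data being withheld. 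In your construction the per-sample observation probability is itself $\Theta(\sqrt{\lambda_{min}})$, so with $m=o(1/\sqrt{\lambda_{min}})$ samples the learner sees nothing but ``?''s with probability $1-o(1)$, and indistinguishability follows from data scarcity alone. Pushed to its extreme ($S_1=\emptyset$) the same argument would ``prove'' an unconditional infinite lower bound, which is a sign that the statement has been trivialized rather than established in the intended sense. To match the paper you would need to arrange, as it does, that a coordinate is observed with constant probability per sample and then argue that the observed value has (almost) the same law under both hypotheses.
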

Note that, for $\eps=\Omega(\sqrt{\lambda_{min}})$ the TV distance between the distributions $P_\lambda$ and $Q_\lambda$ is $\Omega(1)$, yet the $\Omega(1/\sqrt{\lambda_{min}})$ censored samples are necessary.


\section{MEAN ESTIMATION UNDER LINEAR THRESHOLDING MISSINGNESS}
\label{sec:convex-masking}

In this section, we present sufficient conditions for mean estimation under linear thresholding missingness. As earlier, we let $\obs$ denote the missingness model, and $\gauss(\bm{\mu}^*, \bm{\Sigma})$ denote the ground truth distribution. 
Our observations are drawn from $\gauss(\bm{\mu}^*, \bm{\Sigma})^\obs$. 

We will make the following two assumptions on the missingness mechanism and the ground truth distribution.
Our first assumption ensures that any small subset of coordinates is observed simultaneously with non-negligible probability.
\missingnessassumpt*
\noindent
This is a stronger version of \cref{censoring_assumpt}.  Our second assumption postulates existence of an ``anchoring'' subset.
\begin{definition}[Anchored missingness]
\label{def:anchor}
A subset $C \subseteq [d]$ is {\em $\gamma$-anchoring} if 
\begin{enumerate}
    \item[(i)] $ C \subseteq \obs(\mb{y})$ for any  $\mb{y}$, and 
    \item[(ii)] for any $A \subseteq [d]$, $\Pr\limits_{\mb{y}\sim \gauss(\bm{\mu}^*, \bm{\Sigma})}[\obs(\mb{y}) = A \mid \mb{y}_C]$ is either 0 or at least $\gamma$.
\end{enumerate}
\end{definition}

\begin{assumption1f}\label{ass:anchor}
There exists a $\gamma$-anchoring subset $C$ for the true distribution $\gauss(\bm{\mu}^*, \bm{\Sigma})$ in combination with the missingness model $\obs$.
\end{assumption1f}

Given the assumptions above, we will prove the following result showing that we can accurately and efficiently recover the mean of the distribution using censored samples:
\ignore{
\begin{theorem1f}
Given a linear thresholding missingness model $\obs$ and a positive semidefinite matrix $\bm{\Sigma}$, suppose $\obs$ and $\gauss(\bm{\mu}^*, \bm{\Sigma})$ satisfy \cref{missingness_assumpt} and \ref{dependence_assumpt}. Then, for all $\eps>0$, there exists an algorithm using
\textcolor{black}{$\mathrm{poly}(d, 1/\alpha, 1/\beta, 1/\gamma, 1/\lambda_{\min}(\bm{\Sigma}), \lambda_{\max}(\bm{\Sigma}), 1/\eps, \log(1/\delta))$ samples and running time}, and with probability at least $1-\delta$, produces an estimate $\widehat{\bm{\mu}}$ such that $\|\widehat{\bm{\mu}}-\bm{\mu}\|_\Sigma \leq \eps$. 
\end{theorem1f}
}
\linearmaskingmain*

\ignore{\cref{def:anchor} implies that every sample shows the values at $C$, and the value at $C$ almost deterministically fixes the missingness pattern. 
The anchor missingness is in analogy to the anchor words assumption in topic modeling, where a critical anchor word will determine one specific topic in a bag-of-words model. When it comes to the missingness model, for example, $C$ could be a set of questions everyone answers, and the answers to those questions almost fully determine the set of questions answered. 

The underlying data is assumed to be generated by a multivariate gaussian $\gauss(\bm{\mu}^*, \bm{\Sigma})$ where $\bm{\Sigma}$ is known. 
For a $d$-dimensional vector $u$ and a subset $A \subseteq [d]$, let $u_A \in \R^{|A|}$ denote the restriction of $u$ to $A$.  The interpretation is that $y_i$ is seen for every $i \in S(y)$ while $y_i$ is missing for every $i \not \in S(y)$. 
We think of each observation as a pair $(A, x)$ where $A \subseteq [d]$ and $x \in \R^{|A|}$. 
Let $\mathcal{D}^*$ be the observed distribution of $(A, x)$. A sample from $\mathcal{D}^*$ is generated by first sampling $y\in \R^d$ from $N(\mb{\mu}^*, \bm{\Sigma})$ and then outputting $(S(y), y_{S(y)})$.  More generally, for $\mu \in \R^d$, let $\mathcal{D}^\mu$ be the distribution generated by first sampling $y \in \R^d$ from $N(\mu, \Sigma)$, and then outputting $(S(y), y_{S(y)})$. By abuse of notation,
for an observation $\mb{\tilde{x}} \in (\R \cup \{?\})^d$, we let $S(\mb{\tilde{x}})$ denote $\{i: \tilde{x}_i \neq ?\}$. 
We let $\cO$ denote the data generating oracle which first samples $\mb{x}$ from $\gauss(\bm{\mu}^*, \bm{\Sigma})$ and then outputs $\mb{\tilde{x}}$. Clearly, given an observation $\tilde{\mb{x}}$ from $\cO$, we can obtain $S(\mb{x})$ and $\mb{x}_{S(\mb{x})}$ for the underlying data point $\mb{x}$.
\ignore{
By abuse of notation,
for an observation $\mb{\tilde{x}} \in (\R \cup \{?\})^d$, we let $\cA(\mb{\tilde{x}})$ denote $\{i: \tilde{x}_i \neq ?\}$. 
We let $\cO$ denote the data generating oracle which first samples $\mb{x}$ from $\gauss(\bm{\mu}^*, \bm{\Sigma})$ and then outputs $\mb{\tilde{x}}$. 
Clearly, given an observation $\tilde{\mb{x}}$ from $\cO$, we can obtain $\cA(\mb{x})$ and $\mb{x}_{\cA(\mb{x})}$ for the underlying data point $\mb{x}$.}

We will make the following two assumptions on the missingness mechanism.
Our first assumption ensures that any small subset of coordinates is observed simultaneously with non-negligible probability.

\missingnessassumpt*
\noindent
This is a stronger version of \cref{censoring_assumpt}. Notice again that the sets $S_1, \dots, S_d$ are not chosen probabilistically, and so the missingness is not at random.

Our second assumption states that for any ${\mb{x}}$, if a random $\mb{{y}}$ matches with $\mb{{x}}$ at the observed coordinates, then with non-negligible probability, ${\mb{y}}$ is censored at the same coordinates as ${\mb{x}}$.
\depmiss*
\noindent Note that in \cref{def:anchor}, if a $\mu$-anchoring subset exists, then \cref{dependence_assumpt} is satisfied with $\gamma = \mu$. 
As discussed in the introduction, \cref{dependence_assumpt} holds in a number of interesting scenarios. \textcolor{black}{For example, if the samples consist of survey feedback, the assumption roughly asserts that for any censored sample $\tilde{\mb{x}}$ where the set of answered questions is $S$, if a randomly drawn sample $\mb{\tilde{y}}$'s answers match with $\mb{\tilde{x}}$'s in $S$, then it is not extremely unlikely that $\mb{\tilde{y}}$ also does not answer any questions outside $S$. }

}
\paragraph{General outline} In this section, we present and analyze our mean estimation algorithm of \texttt{MissingDescent} under anchor missingness models. As a high-level overview, our approach involves running a Projected Stochastic Gradient Descent (PSGD) algorithm on a negative log-likelihood function whose optimal value coincides with the true mean. The steps of proof are as follows:
\begin{itemize}
    \item We develop an appropriate objective function in \cref{subsec:log-likelihood} and design an efficient mean estimation algorithm \texttt{MissingDescent} in \cref{algo:psgd}, assuming that any small subset of coordinates is observed with sufficiently high probability (\cref{missingness_assumpt}), and the observed missingness pattern is not very rare conditioned on the values of the observed coordinates (\cref{dependence_assumpt}).
    \item We show that our objective function is strongly convex with respect to the correct parameterization and hence the optimum is unique. Furthermore, it is equal to the true mean.
    \item 
    We analyze our \texttt{MissingDescent} algorithm in \cref{sec:gen_anal} by showing that \cref{algo:psgd} approximately optimizes $\ell$ with bounds on the runtime and sample complexity.
    \item Specifically, we show in \cref{algo:init} in \cref{subsec:algo} that we can use the \texttt{Initialize} algorithm to efficiently compute an initial feasible point to start the optimization.
    \item 
    In the \texttt{SampleGradient} algorithm in \cref{algo:sampgrad}, we demonstrate that it is possible to obtain an estimate of $\Delta \ell(\bm{\mu})$ that is approximately unbiased by sampling from the conditional distribution. Additionally, we use the \texttt{ProjectToDomain} algorithm in \cref{algo:projettodomain} to project a current guess back onto the domain.
\end{itemize}

\subsection{NEGATIVE LOG-LIKELIHOOD OBJECTIVE FUNCTION WITH ANCHOR MISSINGNESS}
\label{subsec:log-likelihood}
We will approach the mean estimation problem via optimization of the population log-likelihood with respect to a given parameter estimate $\mb{\mu}$ for the true mean $\mb{\mu}^*$. 
Define $g_{\bm{\mu}}$ to be the density function of $\gauss(\bm{\mu}, \bm{\Sigma})$:
\[
g_{\bm{\mu}}(\mb{y}) = (2\pi)^{\frac{-d}{2}} |\bm{\Sigma}|^{\frac{-1}{2}} \exp\left(-\frac{(\mb{y}-\bm{\mu})^T\bm{\Sigma}^{-1}(\mb{y}-\bm{\mu})}{2}\right).
\]
Recall the notation $g_{\bm{\mu}}^\obs$ defined in \cref{sec:notations} to be the density function of $\gauss(\bm{\mu},\bm{\Sigma})$ censored by $\obs$. We can then write down the population negative log-likelihood $\ell$ as:
\begin{align*}
\hspace{-0.3cm}
&\ell(\bm{\mu}) = \Exp_{(A,\mb{x}) \sim \gauss(\bm{\mu}^*, \bm{\Sigma})^\obs}[-\log g_{\bm{\mu}}^\obs(A, \mb{x})]\\
&=\Exp_{(A,\mb{x})}\left[- \log \int_{\mb{y}}\indic[\obs(\mb{y}) = A] \cdot \delta(\mb{y}_A-\mb{x}) \cdot  g_{\bm{\mu}}(\mb{y}) d\mb{y}\right].
\end{align*} 
In the second equality, and everywhere later, $(A, \mb{x})$ is an observation sampled from the censored version of the true distribution: $\gauss(\bm{\mu}^*, \Sigma)^\obs$.
The integral above marginalizes over all $\mb{y}$ for which the missingness model would yield the observation $(A, \mb{x})$.

The gradient with respect to $\bm{\mu}$ of $\bm{\nabla}\ell(\bm{\mu})$ can be expressed as
\small
\begin{align}
& \Exp_{(A, \mb{x})}\left[ -\frac{\int_{\mb{y}}  \bm{\Sigma}^{-1} (\mb{y}-\bm{\mu})\cdot \indic[\obs(\mb{y}) = A] \cdot \delta(\mb{y}_A-\mb{x}) g_{\bm{\mu}}(\mb{y})  d\mb{y} }{\int_{\mb{y}}\indic[\obs(\mb{y}) = A] \cdot \delta(\mb{y}_A-\mb{x}) \cdot  g_{\bm{\mu}}(\mb{y}) d\mb{y}} \right] \label{eq:general_grad}\\
&=
-\Exp_{(A,\mb{x})}[\Exp_{\mb{y} \sim \gauss(\bm{\mu}, \Sigma)}[\bm{\Sigma}^{-1}(\mb{y}-\bm{\mu}) \mid \obs(\mb{y})=A, \mb{y}_A = \mb{x}]] \label{eq:general_grad_exp}
\end{align}
\normalsize

\begin{restatable}{lemma}{globalmin}
\label{lem:globalmin}
For any $\bm{\mu} \in \R^d$, it holds that: $\ell(\bm{\mu}) \geq \ell(\bm{\mu}^*)$.
\end{restatable}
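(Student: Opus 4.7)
The plan is to express the gap $\ell(\bm{\mu}) - \ell(\bm{\mu}^*)$ as a Kullback--Leibler divergence between the true and the candidate censored densities, and then invoke Gibbs' inequality (non-negativity of KL).

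First I would rewrite
\begin{align*}
\ell(\bm{\mu}) - \ell(\bm{\mu}^*) &= \Exp_{(A,\mb{x})\sim \gauss(\bm{\mu}^*,\bm{\Sigma})^\obs}\!\left[\log g_{\bm{\mu}^*}^\obs(A,\mb{x}) - \log g_{\bm{\mu}}^\obs(A,\mb{x})\right] \\
&= \Exp_{(A,\mb{x})\sim \gauss(\bm{\mu}^*,\bm{\Sigma})^\obs}\!\left[\log \frac{g_{\bm{\mu}^*}^\obs(A,\mb{x})}{g_{\bm{\mu}}^\obs(A,\mb{x})}\right],
\end{align*}
by adding and subtracting $\Exp[\log g_{\bm{\mu}^*}^\obs(A,\mb{x})]$ inside the definition of $\ell$. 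Second, I would identify the right-hand side as $\KL\!\left(g_{\bm{\mu}^*}^\obs \,\|\, g_{\bm{\mu}}^\obs\right)$. This requires that $g_{\bm{\mu}^*}^\obs$ and $g_{\bm{\mu}}^\obs$ are valid densities on the observation space $2^{[d]} \times \R^{|A|}$ with respect to a common dominating measure (counting measure on subsets $A$ times Lebesgue measure on $\R^{|A|}$); this was already verified in Section~\ref{sec:notations}, where it is noted that $\sum_{A \subseteq [d]} \int_{\mb{x}} f^\obs(A,\mb{x})\, d\mb{x} = 1$ for any underlying density $f$. Third, I would apply Gibbs' inequality: for any two densities $p,q$ on a common space, $\KL(p \| q) \geq 0$, which in turn follows from Jensen's inequality applied to the convex function $-\log$. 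This immediately yields $\ell(\bm{\mu}) - \ell(\bm{\mu}^*) \geq 0$.

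I do not anticipate any real obstacle: this is the standard population-MLE optimality statement, and in particular no assumption on $\obs$ (neither \cref{missingness_assumpt} nor \cref{dependence_assumpt}) is needed for this lemma. The only mild care is in handling the mixed discrete-continuous nature of the censored law, i.e.\ the sum over missingness patterns $A$ combined with Lebesgue integration over the observed coordinates, but Gibbs' inequality applies verbatim in that setting. Those structural assumptions will instead be needed later to upgrade this inequality to strong convexity (uniqueness of the minimizer at $\bm{\mu}^*$) and to control the quality of the PSGD iterates.
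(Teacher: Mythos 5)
Your proof is correct, but it takes a genuinely different route from the paper's. The paper does not use the KL/Gibbs argument at all: it first verifies by direct computation that $\bm{\nabla}\ell(\bm{\mu}^*)=\bm{0}$ (exchanging the order of integration so that the inner integral over $\mb{x}$ and the sum over $A$ collapse, leaving $-\int_{\mb{y}}\bm{\Sigma}^{-1}(\mb{y}-\bm{\mu}^*)g_{\bm{\mu}^*}(\mb{y})\,d\mb{y}=0$), and then shows $\bm{\nabla}^2\ell(\bm{\mu})\succeq 0$ everywhere by writing the Hessian as $\Exp_{(A,\mb{x})}\bigl[\bm{\Sigma}^{-1}-\Cov_{\mb{y}}[\bm{\Sigma}^{-1}(\mb{y}-\bm{\mu})\mid \obs(\mb{y})=A,\,\mb{y}_A=\mb{x}]\bigr]$ and invoking Corollary 2.1 of \cite{KP77}, which says Gaussian covariance does not increase under restriction to a convex set; this step uses crucially that $\{\mb{y}:\obs(\mb{y})=A,\,\mb{y}_A=\mb{x}\}$ is convex for a linear thresholding model. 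Convexity plus a critical point at $\bm{\mu}^*$ then gives global minimality. Your argument, by contrast, identifies $\ell(\bm{\mu})-\ell(\bm{\mu}^*)$ with $\KL\bigl(g_{\bm{\mu}^*}^\obs\,\|\,g_{\bm{\mu}}^\obs\bigr)\geq 0$, which is more elementary and strictly more general: as you note, it needs no assumption on $\obs$ whatsoever (and the two censored densities share a common dominating measure and a common support, since the Gaussians $g_{\bm{\mu}}$ and $g_{\bm{\mu}^*}$ are mutually absolutely continuous, so the KL is well defined, possibly $+\infty$). What the paper's route buys is that the convexity computation is not wasted work --- the same Hessian formula and the \cite{KP77} bound are reused immediately afterwards to prove the \emph{strong} convexity lemma, which is what the PSGD analysis actually needs; your route proves the stated lemma more cleanly but would still leave that computation to be done separately.
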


\begin{lemma}[Strong Convexity with Missing Entries]
\label{lm:strong convex general}
Given our missingness model and \cref{missingness_assumpt} \textcolor{black}{with $\beta=\frac{c}{d}$ for some integer $c\in\{1,\dots d\}$}, we have that the function with general covariance  $\ell(\bm{\mu})$ is $\lambda$-strongly convex for $\lambda=\alpha\beta/\lambda_{\max}(\bm{\Sigma})$.  
\end{lemma}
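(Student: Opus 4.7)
The plan is to derive a closed form for $\nabla^2\ell(\bm{\mu})$ and then lower-bound it in Loewner order using the convex cells of the linear-thresholding model together with \cref{missingness_assumpt}. Starting from the gradient formula in \cref{eq:general_grad_exp} and differentiating once more, I would use the score identity $\nabla_{\bm{\mu}}\log q(\mb{y}|A,\mb{x};\bm{\mu})=\bm{\Sigma}^{-1}(\mb{y}-\bm{\mu})-\Exp_q[\bm{\Sigma}^{-1}(\mb{y}-\bm{\mu})]$, which holds because $q(\mb{y}|A,\mb{x};\bm{\mu})\propto \indic[\obs(\mb{y})=A]\,\delta(\mb{y}_A-\mb{x})\,g_{\bm{\mu}}(\mb{y})$. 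A standard Fisher-information computation then yields
\[\nabla^2\ell(\bm{\mu})=\bm{\Sigma}^{-1}-\bm{\Sigma}^{-1}\,C(\bm{\mu})\,\bm{\Sigma}^{-1},\qquad C(\bm{\mu}):=\Exp_{(A,\mb{x})\sim \gauss(\bm{\mu}^*,\bm{\Sigma})^{\obs}}\!\bigl[\mathrm{Cov}_q[\mb{y}]\bigr].\]

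The critical geometric observation specific to the linear-thresholding model is that, for each $A$, the set $\{\mb{y}:\obs(\mb{y})=A\}=\bigcap_{i\in A}\{\mb{v}_i^T\mb{y}\leq b_i\}\cap\bigcap_{j\notin A}\{\mb{v}_j^T\mb{y}>b_j\}$ is a convex polyhedron (up to a measure-zero boundary), and its intersection with the affine subspace $\{\mb{y}_A=\mb{x}\}$ stays convex. Hence $q$ is a Gaussian restricted to a convex set, and by the Brascamp--Lieb covariance inequality, $\mathrm{Cov}_q[\mb{y}_M]\preceq \bm{\Sigma}_{MM|A}$, the Schur complement of $\bm{\Sigma}_{AA}$ in $\bm{\Sigma}$. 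Regarded as a $d\times d$ matrix with zeros on the $A$-block (since $\mb{y}_A=\mb{x}$ is fixed under $q$), this gives $\mathrm{Cov}_q[\mb{y}]\preceq \mathrm{diag}(0,\bm{\Sigma}_{MM|A})$.

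For any $\bm{v}\in\R^d$, let $\bm{u}:=\bm{\Sigma}^{-1}\bm{v}$, so that $\bm{v}_A=\bm{\Sigma}_{AA}\bm{u}_A+\bm{\Sigma}_{AM}\bm{u}_M$. A direct expansion using $\bm{\Sigma}_{MM|A}=\bm{\Sigma}_{MM}-\bm{\Sigma}_{MA}\bm{\Sigma}_{AA}^{-1}\bm{\Sigma}_{AM}$ and completing the square gives the exact identity $\bm{u}^T\bm{\Sigma}\bm{u}-\bm{u}_M^T\bm{\Sigma}_{MM|A}\bm{u}_M=\bm{v}_A^T\bm{\Sigma}_{AA}^{-1}\bm{v}_A$. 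Combined with the covariance inequality above,
\[\bm{v}^T\nabla^2\ell(\bm{\mu})\bm{v}=\bm{u}^T\bigl(\bm{\Sigma}-C(\bm{\mu})\bigr)\bm{u}\;\geq\;\Exp_{A}\!\bigl[\bm{v}_A^T\bm{\Sigma}_{AA}^{-1}\bm{v}_A\bigr],\]
where the outer expectation is over the marginal distribution of $A$ under $\gauss(\bm{\mu}^*,\bm{\Sigma})^{\obs}$. Crucially, the integrand depends on $A$ alone, so the bound is uniform in $\bm{\mu}$ and in $\mb{x}$ -- this decoupling is what makes strong convexity hold globally, not only at $\bm{\mu}^*$.

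The final step invokes \cref{missingness_assumpt}. Since $\bm{\Sigma}_{AA}\preceq\lambda_{\max}(\bm{\Sigma})I$, one has $\bm{v}_A^T\bm{\Sigma}_{AA}^{-1}\bm{v}_A\geq\|\bm{v}_A\|^2/\lambda_{\max}(\bm{\Sigma})$. Take $B$ to consist of the $\beta d$ coordinates at which $|v_i|$ is largest; an averaging argument (the top-$\beta d$ squared entries have mean at least the overall mean) gives $\|\bm{v}_B\|^2\geq \beta\|\bm{v}\|^2$. By \cref{missingness_assumpt}, $\Pr[B\subseteq A]\geq \alpha$, and on that event $\|\bm{v}_A\|^2\geq\|\bm{v}_B\|^2\geq \beta\|\bm{v}\|^2$. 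Therefore $\Exp_A[\|\bm{v}_A\|^2]\geq \alpha\beta\|\bm{v}\|^2$ and we conclude $\bm{v}^T\nabla^2\ell(\bm{\mu})\bm{v}\geq (\alpha\beta/\lambda_{\max}(\bm{\Sigma}))\|\bm{v}\|^2$, establishing $\lambda$-strong convexity with $\lambda=\alpha\beta/\lambda_{\max}(\bm{\Sigma})$.

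The main obstacle I expect is the Brascamp--Lieb step: one must handle both closed and open half-space constraints together with the affine conditioning $\mb{y}_A=\mb{x}$ carefully. Convexity of $\obs^{-1}(A)$ is precisely what the linear-thresholding assumption buys us -- for a general $\obs$ with non-convex cells, conditioning could \emph{increase} variance in some direction and the argument would collapse. Once Step 2 is in hand, the block-matrix identity in Step 3 is an exact algebraic manipulation, and the invocation of \cref{missingness_assumpt} in the last paragraph is the only probabilistic input.
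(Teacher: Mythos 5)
Your proposal is correct and follows essentially the same route as the paper's proof: both start from the Hessian identity $\nabla^2\ell(\bm{\mu})=\Exp_{(A,\mb{x})}[\bm{\Sigma}^{-1}-\Cov(\bm{\Sigma}^{-1}\mb{y}\mid\obs(\mb{y})=A,\mb{y}_A=\mb{x})]$, bound the conditional covariance via convexity of the linear-thresholding cells and the Gaussian variance-reduction-under-convex-restriction fact (the paper cites Corollary 2.1 of \cite{KP77} where you invoke Brascamp--Lieb), reduce via Schur-complement algebra to $\mb{v}_A^\top\bm{\Sigma}_{AA}^{-1}\mb{v}_A$, and finish by taking the top-$\beta d$ coordinates of $\mb{v}$ together with \cref{missingness_assumpt} and $\lambda_{\min}(\bm{\Sigma}_{AA}^{-1})\geq 1/\lambda_{\max}(\bm{\Sigma})$. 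The only cosmetic difference is that you carry the exact conditional covariance for every $A$ and restrict to the event $B\subseteq A$ at the end, whereas the paper relaxes immediately to conditioning on the fixed anchor set $H$; both yield the same constant $\alpha\beta/\lambda_{\max}(\bm{\Sigma})$.
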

\begin{remark}
Convexity may not hold if the missignnness pattern is not linear thresholding. For example, even for $d=1$, if $\mu^*=0$ and $\obs(y) = \{1\}$ if $y \in [2,4]$ and $\emptyset$ otherwise, the function $\ell(\mu)$ is not convex.
\end{remark}
\subsection{ALGORITHM}
\label{subsec:algo}

\paragraph{Initialization}
\label{subsec:initialization}
Our first step for efficiently optimizing the negative log-likelihood function is finding a good initial point for the PSGT. Specifically, we take the empirical mean $\mb{\hat{\mu}}$.
This is a biased estimate, but we show below that this is good enough for initialization: the distance of the empirical estimates and true mean $\mb{\mu}^*$ is a constant that depends only on the constant $\beta$, mass $\alpha$ and $\lambda_{max}$ of the known $\mb{\Sigma}$. 
The pseudocode for \texttt{Initialize} appears in \cref{algo:init}. 

\begin{algorithm}[htbp]
\SetKwInOut{Input}{Input}
\Input{Access to data generator $\mathcal{O}$,  parameter \textcolor{black}{$\beta =\frac{c}{d}$ for some integer $c\in \{1, \dots, d\}$}, number of samples $M_{\text{init}}$}

$\mb{w} \gets $ empty array of length $d$\\
$\mb{X} \gets$ matrix with $M_{\text{init}}$ rows, each an independent sample from $\mathcal{O}$\\
\For{$i\gets0$ \KwTo $\lceil 1/\beta \rceil\textcolor{black}{-1}$}{
$s\gets i\beta d+1 $\\
$t\gets \textcolor{black}{\min\{ (i+1)\beta d,d\}}$\\
$\mb{Y}_i \gets $ submatrix of $\mb{X}$ consisting of columns $s, s+1, \dots, t$\\
Remove all rows of $\mb{Y}_i$ containing at least one $*$\\
$\bm{\hat{\mu}}_i \gets $ average of the rows of $\mb{Y}_{\textcolor{black}{i}}$\\
$\mb{w}[s, s+1, \dots, t] \gets \bm{\hat{\mu}}_i$
}
\Return $\mb{w}$

\caption{[\texttt{Initialize}] Initialization for the main algorithm.}
\label{algo:init}
\end{algorithm}

By \cref{missingness_assumpt}, we have that after line 7 in \texttt{Initialize}, each $\mb{Y}_i$ is the truncation of a $\beta d$-dimensional gaussian where the truncation set has mass at least $\alpha$. 
Using Lemma 6 of \cite{daskalakis2018efficient}, the mean of such a truncated gaussian is $O(\sqrt{\log(1/\alpha)})$ distance away from the untruncated mean. 
Hence, we have
$\|\Exp[\mb{w}]-\bm{\mu}^*\|_2^2 = \sum_i \|\Exp[\bm{\hat{\mu}}_i] - \bm{\mu}[i\beta d + 1, \dots, (i+1)\beta d]\|_2^2 
\leq \lambda_{\max}\sum_i \|\Exp[\bm{\hat{\mu}}_i] - \bm{\mu}[i\beta d + 1, \dots, (i+1)\beta d]\|_{\bm{\Sigma}}^2 
\leq \mathcal{O}(\frac{\lambda_{\max}}{\beta}{\log(1/\alpha)})$.\footnote{Define $\bm{x}= \Exp[\bm{\hat{\mu}}_i] - \bm{\mu}[i\beta d + 1, \dots, (i+1)\beta d]$, and the eigenvalue decomposition of $\mb{\Sigma}^{-1}$ as $\mb{\Sigma}^{-1}=Q^{\top}D^{-1}Q$. The first inequality holds because $\Vert \bm{x}\Vert_{\bm{\Sigma}}^2 = \Vert \bm{x}^{\top} Q^{\top} D^{-1} Q \bm{x} \Vert_2 =  \|D^{-1/2}Q\bm{x}\|_2^2 \geq \frac{1}{\lambda_{\max}} \Vert Q\bm{x} \Vert_2^2 = \frac{1}{\lambda_{\max}} \Vert \bm{x}\Vert_2^2$. Therefore, we have $\Vert \bm{x} \Vert_2^2 \leq \lambda_{\max} \Vert \bm{x} \Vert_{\bm{\Sigma}}^2$}
Therefore, $\|\Exp[\mb{w}]-\bm{\mu}^*\|_2 \leq \mathcal{O}({\sqrt{\frac{\lambda_{\max}}{\beta}\log(1/\alpha)}})$.
Later in \cref{sec:gen_anal}, we analyze the number of samples $M_{\text{init}}$ required for $\|\mb{w}-\bm{\mu}^*\|_2$ to satisfy this bound with high probability.

{
\begin{algorithm}[htbp]
\SetKwInOut{Input}{Input}
\SetKwFunction{Initialize}{Initialize}
\SetKwFunction{SampleGradient}{SampleGradient}
\SetKwFunction{ProjectToDomain}{ProjectToDomain}
\Input{Access to data generator $\mathcal{O}$, parameters $\beta, \lambda_{\text{sgd}}, \eta_{\text{lmc}}$, $R_{\text{lmc}}, r_{\text{proj}}$, $M_{\text{init}}$, $M_{\text{sgd}}$, $M_{\text{grad}}$}
$\bm{\mu}^{(0)} \gets \Initialize(\mathcal{O}, \beta, M_{\text{init}})$ \\
\For{$i\gets 1$ \KwTo $M_{\text{sgd}}$}{
Sample $(A^{(i)},\mb{x}^{(i)})$ from $\mathcal{O}$\\
$\eta_i \gets \frac{1}{\lambda_{\text{sgd}} \cdot i}$\\
$\mb{g}^{(i)} \gets \SampleGradient((A^{(i)},\mb{{x}}^{(i)}),\bm{\mu}^{(i-1)}, \eta_{\text{lmc}}, R_{\text{lmc}}, M_{\text{grad}})$\\
$\mb{v}^{(i)} \gets \bm{\mu}^{(i - 1)} - \eta_i \mb{g}^{(i)}$\\
$\bm{\mu}^{(i)} \gets \ProjectToDomain(\bm{\mu}^{(0)}, \mb{v}^{(i)}, r_{\text{proj}})$
}
$\bm{\bar{\mu}}\gets \frac{1}{M_{\text{sgd}}}\sum_{i=1}^{M_{\text{sgd}}} \bm{\mu}^{(i)}$\\
\Return $\bm{\bar{\mu}}$
\caption{[\texttt{MissingDescent}] Mean recovery algorithm given access to an oracle that generates samples with incomplete data.}
\label{algo:psgd}
\end{algorithm}
\DecMargin{1.2em}
}

Note that, in each iteration of SGD in \texttt{MissingDescent} (\cref{algo:psgd}), 
we choose a projection set, to make sure that PSGD converges. Specifically, we project a current guess back to a $\ball_{\bm{\Sigma}}$ ball scaled by $r_{proj}$ and centered at $\bm{\mu}^{(0)}$ as shown below:
\begin{algorithm}
\SetKwInOut{Input}{Input}
\Input{$\bm{\mu}^{(0)}, \mb{v}$, parameter $r_{\text{proj}}$}
\Return 
$\bm{\mu}^{(0)}+\min\{r_{proj},\Vert (\mb{v}-\bm{\mu}^{(0)})\Vert_{\Sigma} \}\cdot\frac{ (\mb{v}-\bm{\mu}^{(0)})}{(\Vert \mb{v}-\bm{\mu}^{(0))}\Vert_{\Sigma}}$

\caption{[\texttt{ProjectToDomain}] The function that projects a current guess back to the domain 
onto the $\ball_{\bm{\Sigma}}$ ball.}
\label{algo:projettodomain}
\end{algorithm}

Our goal is to minimize the population negative log-likelihood $\ell$ via (projected) stochastic gradient descent while maintaining its strong-convexity. 
Specifically, \cref{algo:psgd} above describes this strategy. In order to apply  \cref{algo:psgd} to our log-likelihood objective function, we need to solve the following three algorithmic problems:
\begin{itemize}[leftmargin=*]
    \item \textbf{Initialization}: efficiently compute an initial feasible point from which to start the optimization. The pseudocode for \texttt{Initialize} appears in \cref{algo:init};
    \item \textbf{Gradient estimation}: design a nearly unbiased sampler for $\bm{\nabla}\ell(\bm{\mu})$ using Langevin sampling. The \texttt{SampleGradient} pseudocode appears in \cref{algo:sampgrad}; 
    \item \textbf{Efficient projection}: perform an efficient projection into a set of feasible points to make sure that PSGD converges. The pseudocode presents in \cref{algo:projettodomain}. 
\end{itemize}

{
\IncMargin{1.2em}
\begin{algorithm}
\SetKwInOut{Input}{Input}
\Input{$(A, \mb{x})$, $\bm{\mu}$, parameters $\eta, R, M$}
$a \gets |A|$\\
Compute $\bm{\mu}_{\text{cond}}$ and $\bm{\Sigma}_{\text{cond}}$ as in (\ref{eq:newg}) and (\ref{eq:newc})\\
Let $W$ be such that $\bm{\Sigma}_{\text{cond}} = WW^\top$\\
Compute $\cL = (W^{-1} \cK)\cap \ball_{\textcolor{black}{\Sigma}}(W^{-1}\bm{\mu}_{\text{cond}},R)$ \\
$\mb{z}^{(0)} \gets $ a point in $\cL$\\
\For{$t = 0~\KwTo~ M-1$}{
Sample $\bm{\zeta}^{(t)}$ from $\gauss(0, I_{d-a})$\\
$\mb{z}^{(t+1)} \gets \Pi_\cL\left(\mb{z}^{(t)} - {\eta}(\mb{z}^{(t)} - W^{-1}\bm{\mu}_{\text{cond}}) + \sqrt{\eta} \cdot \bm{\zeta}^{(t)}\right)$
}
\Return $-\bm{\Sigma}^{-1}(\mb{x} \circ (W\mb{z}^{(M)})-\bm{\mu})$ 
\caption{[\texttt{SampleGradient}] Sampler for $\nabla \ell(\bm{\mu})$.}
\label{algo:sampgrad}
\end{algorithm}
\DecMargin{1.2em}
}

\subsection{ANALYSIS OF \texttt{MissingDescent}}\label{sec:gen_anal}

We show in this section that \cref{algo:psgd} approximately optimizes $\ell$ with bounds on the runtime and sample complexity. The following lemma describes the ingredients necessary to obtain  such bounds:

\begin{lemma}[Lemma 6 in \cite{cherapanamjeri2022makes}]
\label{lem:genmain}
Let $f: \R^k \to \R$ be a convex function, $K \subseteq \R^k$ a convex set, and fix an initial estimate $\mb{x}^{(0)} \in K$. Now, let $\mb{x}^{(1)}, \dots, \mb{x}^{(T)}$ be the iterates generated by running $T$ steps of projected SGD using gradient estimates $\mb{g}^{(1)}, \dots, \mb{g}^{(T)}$ satisfying $\Exp[\mb{g}^{(i)} \mid \mb{x}^{(i-1)}] = \nabla f(\mb{x}^{(i-1)}) + \mb{b}^{(i)}$. Let $\mb{x}_* = \arg \min_{\mb{x} \in K} f(\mb{x})$ be a minimizer of $f$. Then, if we assume:
\begin{enumerate}
\item[(i)] \textbf{Bounded step variance}: $\Exp[\|\mb{g}^{(i)}\|_2^2] \leq \rho^2$, 
\item[(ii)] \textbf{Strong convexity}: $f$ is $\lambda$-strongly convex, and 
\item[(iii)] \textbf{Bounded gradient bias}: $\|\mb{b}^{(i)}\|^2 \leq \frac{\rho^2}{2\lambda \cdot \text{diam}(K)\cdot i}$,
\end{enumerate}
then the average iterate $\widehat{\mb{x}} = \frac1T \sum_{t=1}^T \mb{x}^{(T)}$ satisfies $\Exp[f(\widehat{\mb{x}}) - f(\mb{x}_*)] \leq \frac{\rho^2}{\lambda T} (1 + \log(T))$.  
\end{lemma}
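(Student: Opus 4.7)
This is a standard convergence analysis of projected stochastic gradient descent in the strongly-convex, biased-gradient regime, with the nonstandard step size schedule $\eta_i = \tfrac{1}{\lambda i}$ chosen to exploit $\lambda$-strong convexity. My plan is to derive a one-step contraction inequality for the squared distance to the minimizer, telescope it across the $T$ iterations, and then invoke convexity plus a Jensen-style averaging argument to convert a bound on the iterate distance into the desired function-value bound on $\widehat{\mb{x}}$.

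The starting point is the nonexpansiveness of Euclidean projection onto the convex set $K$: writing $D_i = \E\|\mb{x}^{(i)}-\mb{x}_*\|_2^2$, we have
\[
\|\mb{x}^{(i)} - \mb{x}_*\|_2^2 \;\le\; \|\mb{x}^{(i-1)} - \eta_i \mb{g}^{(i)} - \mb{x}_*\|_2^2.
\]
Expanding the right-hand side and taking conditional expectation given $\mb{x}^{(i-1)}$, I would use $\E[\mb{g}^{(i)}\mid \mb{x}^{(i-1)}] = \nabla f(\mb{x}^{(i-1)}) + \mb{b}^{(i)}$ together with assumption (i) to control the $\eta_i^2\|\mb{g}^{(i)}\|_2^2$ term by $\eta_i^2 \rho^2$. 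Assumption (ii) (strong convexity) gives the usual inequality
\[
\langle \nabla f(\mb{x}^{(i-1)}), \mb{x}^{(i-1)}-\mb{x}_*\rangle \;\ge\; f(\mb{x}^{(i-1)}) - f(\mb{x}_*) + \tfrac{\lambda}{2}\|\mb{x}^{(i-1)}-\mb{x}_*\|_2^2,
\]
which is the step where the $(1-\lambda\eta_i)$ contraction factor emerges. The bias term is handled via Cauchy--Schwarz: $|\langle \mb{b}^{(i)}, \mb{x}^{(i-1)}-\mb{x}_*\rangle| \le \|\mb{b}^{(i)}\|_2 \cdot \mathrm{diam}(K)$, to which assumption (iii) is tailored.

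Substituting the schedule $\eta_i = \tfrac{1}{\lambda i}$ makes the contraction factor $1-\lambda\eta_i = \tfrac{i-1}{i}$. After rearrangement and multiplication by $\tfrac{\lambda i}{2}$, I would obtain a per-step inequality of the form
\[
\E[f(\mb{x}^{(i-1)})-f(\mb{x}_*)] \;\le\; \tfrac{\lambda (i-1)}{2} D_{i-1} - \tfrac{\lambda i}{2} D_i + \mathrm{diam}(K)\|\mb{b}^{(i)}\|_2 + \tfrac{\rho^2}{2\lambda i}.
\]
Summing from $i=1$ to $T$ telescopes the first two terms into a nonpositive quantity, the last term contributes $\tfrac{\rho^2}{2\lambda}\sum_{i\le T}\tfrac{1}{i} = O\!\left(\tfrac{\rho^2 \log T}{\lambda}\right)$, and assumption (iii) is precisely calibrated so that $\sum_i \mathrm{diam}(K)\|\mb{b}^{(i)}\|_2$ is absorbed into the same $O(\rho^2\log T/\lambda)$ order. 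Dividing by $T$ and applying Jensen's inequality to $\widehat{\mb{x}} = \tfrac{1}{T}\sum_{t} \mb{x}^{(t)}$ (valid by convexity of $f$) yields the claimed $\tfrac{\rho^2}{\lambda T}(1+\log T)$ bound.

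The only delicate point is the bias handling: naive Cauchy--Schwarz gives a $\sum_i \|\mb{b}^{(i)}\|_2$ contribution, and the assumption (iii) is exactly what guarantees this sum is controlled on the same scale as the variance-driven $\tfrac{\rho^2 \log T}{\lambda}$ rate rather than dominating it. Everything else is textbook strongly-convex SGD algebra; the main subtlety lies in checking that the bias calibration threads through the telescoping so that the final averaged bound truly has the $\rho^2/(\lambda T)$ form without extra lower-order terms.
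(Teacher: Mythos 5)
First, a point of reference: the paper itself does not prove this statement --- it is imported verbatim as Lemma~6 of \cite{cherapanamjeri2022makes}, so there is no in-paper argument to compare yours against. Your outline is the standard strongly-convex projected-SGD analysis (nonexpansiveness of the projection, expansion of the squared distance, strong convexity to produce the $(1-\lambda\eta_i)$ contraction, the schedule $\eta_i = 1/(\lambda i)$ to make the weighted distances telescope, harmonic-sum control of the variance term, and Jensen applied to the averaged iterate), and that is indeed the right route and the one used in the cited source.

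The one place where your argument does not go through as written is the bias step, which you correctly single out as the delicate point but then wave through. With assumption (iii) exactly as stated, $\|\mb{b}^{(i)}\|^2 \leq \frac{\rho^2}{2\lambda\,\mathrm{diam}(K)\, i}$, Cauchy--Schwarz gives $\mathrm{diam}(K)\,\|\mb{b}^{(i)}\|_2 \leq \rho\sqrt{\mathrm{diam}(K)/(2\lambda i)}$, so $\sum_{i\le T}\mathrm{diam}(K)\|\mb{b}^{(i)}\|_2 = \Theta\bigl(\rho\sqrt{T\,\mathrm{diam}(K)/\lambda}\bigr)$; after dividing by $T$ this contributes $\Theta\bigl(\rho\sqrt{\mathrm{diam}(K)/(\lambda T)}\bigr)$, which decays like $T^{-1/2}$ and is \emph{not} absorbed into the claimed $\frac{\rho^2}{\lambda T}(1+\log T)$. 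The calibration you describe --- the bias sum landing on the same $\rho^2\log(T)/\lambda$ scale as the variance term --- holds exactly when the hypothesis bounds the norm itself, i.e.\ $\|\mb{b}^{(i)}\| \leq \frac{\rho^2}{2\lambda\,\mathrm{diam}(K)\, i}$, in which case $\mathrm{diam}(K)\|\mb{b}^{(i)}\| \le \frac{\rho^2}{2\lambda i}$ and the bias sum is a second harmonic series contributing $\frac{\rho^2}{2\lambda}(1+\log T)$, matching the variance contribution and yielding the stated bound after division by $T$. That unsquared form is what the source lemma actually requires (the exponent in the restatement here appears to be a transcription slip); you should either work from that form or note explicitly that the squared version, taken literally, does not support the stated conclusion via Cauchy--Schwarz. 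Apart from this (and a harmless off-by-one between the telescoped sum over $\mb{x}^{(0)},\dots,\mb{x}^{(T-1)}$ and the average over $\mb{x}^{(1)},\dots,\mb{x}^{(T)}$), your sketch is correct.
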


We study each of the three conditions in \cref{lem:genmain} above, before wrapping up with the overall analysis.

\subsubsection{STRONG CONVEXITY}
To show convergence of stochastic gradient descent on $\ell$, we require \emph{strong convexity} such that the optimum of $\ell$ is unique. 
Specifically, we need to show: $\bm{\nabla}^2\ell(\bm{\mu}) \succeq \beta I$ for some parameter $\beta > 0$ such that the probability mass is at least a constant. Once we proved the strong convexity, we can apply projected stochastic gradient descent (PSGD) to recover the parameter $\mu$.

\begin{restatable}[Strong Convexity with Missing Entries]{lemma}{StrongConvexityMissingEntries}
\label{lm:strong convex general}
Given our missingness model and \cref{missingness_assumpt} \textcolor{black}{with $\beta=\frac{c}{d}$ for some integer $c\in\{1,\dots d\}$}, we have that the function with general covariance $\ell(\bm{\mu})$ is $\lambda$-strongly convex for $\lambda=\alpha\beta/\lambda_{\max}(\bm{\Sigma})$.
\end{restatable}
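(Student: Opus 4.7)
My plan is to differentiate the closed-form gradient in \eqref{eq:general_grad_exp} once more to express $\nabla^2\ell(\bm{\mu})$ in terms of the posterior covariance of $\mb{y}$ given the observation, and then lower bound that Hessian by combining the Brascamp--Lieb / log-concavity inequality (to control the posterior covariance for each fixed observation) with a pigeonhole argument that invokes \cref{missingness_assumpt} (to integrate over observations). Let $p_{\bm{\mu}}(\cdot \mid A,\mb{x}) \propto \indic[\obs(\mb{y})=A]\,\delta(\mb{y}_A-\mb{x})\,g_{\bm{\mu}}(\mb{y})$ denote the posterior density of $\mb{y}$ given $(A,\mb{x})$. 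Differentiating the identity $\nabla(-\log g_{\bm{\mu}}^{\obs})=-\E_{p_{\bm{\mu}}}[\bm{\Sigma}^{-1}(\mb{y}-\bm{\mu})]$ once more in $\bm{\mu}$, and using the score-function identity $\nabla_{\bm{\mu}}\log p_{\bm{\mu}}(\mb{y}\mid A,\mb{x}) = \bm{\Sigma}^{-1}(\mb{y}-\E_{p_{\bm{\mu}}}[\mb{y}])$, I obtain the clean form
\[
\nabla^2\ell(\bm{\mu}) \;=\; \bm{\Sigma}^{-1}\bigl(\bm{\Sigma}-\E_{(A,\mb{x})}\Cov_{p_{\bm{\mu}}}(\mb{y})\bigr)\bm{\Sigma}^{-1}.
\]

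To separate the role of $\bm{\Sigma}$ from the combinatorics of the missingness I whiten: set $\tilde{\mb{y}}=\bm{\Sigma}^{-1/2}\mb{y}$ and $\tilde{\mb{v}}_i=\bm{\Sigma}^{1/2}\mb{v}_i$, which turns $\gauss(\bm{\mu},\bm{\Sigma})$ into $\gauss(\bm{\Sigma}^{-1/2}\bm{\mu},I)$ while keeping every predicate $\mb{v}_i^{\top}\mb{y}\le b_i$ linear, so \cref{missingness_assumpt} survives with the same $(\alpha,\beta)$. The Hessian transforms as $\bm{\Sigma}^{1/2}(\cdot)\bm{\Sigma}^{1/2}$ under this reparameterization, so it suffices to prove that the whitened Hessian $I-\E[\Cov_{\tilde p}(\tilde{\mb{y}})]$ is at least $\alpha\beta\,I$; undoing the change of variables then gives $\nabla^2\ell(\bm{\mu})\succeq\alpha\beta\,\bm{\Sigma}^{-1}\succeq(\alpha\beta/\lambda_{\max}(\bm{\Sigma}))\,I$, which is exactly the claim.

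For a fixed $(A,\mb{x})$, the whitened posterior $\tilde p$ is a standard Gaussian conditioned on the affine constraint $\tilde{\mb{y}}_A=\bm{\Sigma}^{-1/2}\mb{x}$ (so the $A$-block of $\Cov_{\tilde p}$ vanishes) and on the polyhedron $\{\tilde{\mb{y}}:\tilde{\mb{v}}_i^{\top}\tilde{\mb{y}}\le b_i\ \forall i\in A,\ \tilde{\mb{v}}_i^{\top}\tilde{\mb{y}}>b_i\ \forall i\notin A\}$, which is convex as an intersection of half-spaces. Because conditioning a standard Gaussian on a convex set cannot increase its covariance in the PSD order (Brascamp--Lieb), $\Cov_{\tilde p}(\tilde{\mb{y}}_B)\preceq I_B$ for $B=[d]\setminus A$, so $\mb{u}^{\top}\Cov_{\tilde p}(\tilde{\mb{y}})\,\mb{u}\le\|\mb{u}_B\|_2^2$ and hence $\mb{u}^{\top}(I-\Cov_{\tilde p}(\tilde{\mb{y}}))\,\mb{u}\ge\|\mb{u}_A\|_2^2$ pointwise in $(A,\mb{x})$. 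To integrate this out, fix a unit $\mb{u}$, let $S$ be the set of the top $\beta d$ indices of $|u_i|$ (an integer thanks to $\beta=c/d$), and observe by a one-line averaging argument that $\|\mb{u}_S\|_2^2\ge\beta\,\|\mb{u}\|_2^2$. By \cref{missingness_assumpt}, $\Pr[S\subseteq\obs(\mb{y})]\ge\alpha$, and on that event $\|\mb{u}_A\|_2^2\ge\|\mb{u}_S\|_2^2\ge\beta$; taking expectations yields $\E[\|\mb{u}_A\|_2^2]\ge\alpha\beta\,\|\mb{u}\|_2^2$, which is the required lower bound on the whitened Hessian. The step I expect to require the most care is the Brascamp--Lieb application, since the polyhedron mixes strict and non-strict inequalities and the effective sample space lives on an affine slice of $\R^d$; once the induced density is verified to be log-concave with respect to Lebesgue measure on that slice, the rest of the argument is purely algebraic plus the pigeonhole estimate.
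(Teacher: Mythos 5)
Your overall strategy mirrors the paper's: express $\nabla^2\ell(\bm{\mu})$ as the prior precision minus the expected posterior covariance (your identity $\nabla^2\ell(\bm{\mu})=\bm{\Sigma}^{-1}\bigl(\bm{\Sigma}-\E_{(A,\mb{x})}\Cov_{p_{\bm{\mu}}}(\mb{y})\bigr)\bm{\Sigma}^{-1}$ agrees with (\ref{eq:general_hessian})), use the Kanter--Proppe / Brascamp--Lieb fact that conditioning a Gaussian on a convex set can only shrink the covariance, take the top $\beta d$ coordinates of the test vector, and invoke \cref{missingness_assumpt} for the factor $\alpha$. The paper carries out the last step in the original coordinates via the Schur-complement identities, arriving at $\mb{v}^\top\nabla^2\ell(\bm{\mu})\mb{v}\geq\alpha\,\mb{v}_H^\top\bm{\Sigma}_{HH}^{-1}\mb{v}_H$, from which the $1/\lambda_{\max}(\bm{\Sigma})$ factor falls out.

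There is, however, a genuine gap in your whitening step. The conditioning event $\mb{y}_A=\mb{x}$ is axis-aligned in the \emph{original} coordinates; under $\tilde{\mb{y}}=\bm{\Sigma}^{-1/2}\mb{y}$ it becomes $(\bm{\Sigma}^{1/2}\tilde{\mb{y}})_A=\mb{x}$, \emph{not} $\tilde{\mb{y}}_A=\bm{\Sigma}^{-1/2}\mb{x}$ (note the latter does not even typecheck, since $\mb{x}\in\R^{|A|}$). Hence the $A$-block of $\Cov_{\tilde p}(\tilde{\mb{y}})$ does not vanish unless $\bm{\Sigma}$ is block-diagonal with respect to $(A,\bar{A})$, and your key pointwise inequality $\mb{u}^\top\bigl(I-\Cov_{\tilde p}(\tilde{\mb{y}})\bigr)\mb{u}\geq\|\mb{u}_A\|_2^2$ fails. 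Concretely, take $d=2$, $A=\{1\}$, $\bm{\Sigma}=\left(\begin{smallmatrix}1&\rho\\ \rho&1\end{smallmatrix}\right)$ with $\rho\neq 0$, and a vacuous threshold constraint: then $I-\Cov_{\tilde p}(\tilde{\mb{y}})=\bm{\Sigma}^{1/2}\left(\begin{smallmatrix}1&0\\ 0&0\end{smallmatrix}\right)\bm{\Sigma}^{1/2}$, and testing with $\mb{u}=\mb{e}_1$ gives $a^2<1=\|\mb{u}_A\|_2^2$, where $a$ is the $(1,1)$ entry of $\bm{\Sigma}^{1/2}$. The correct pointwise bound is $\mb{u}^\top\bigl(I-\Cov_{\tilde p}(\tilde{\mb{y}})\bigr)\mb{u}\geq(\bm{\Sigma}^{1/2}\mb{u})_A^\top\bm{\Sigma}_{AA}^{-1}(\bm{\Sigma}^{1/2}\mb{u})_A$, so the pigeonhole set must be chosen from the coordinates of $\bm{\Sigma}^{1/2}\mb{u}$ --- that is, of the original test vector --- and the factor $1/\lambda_{\max}(\bm{\Sigma})$ then comes from $\lambda_{\min}(\bm{\Sigma}_{HH}^{-1})\geq 1/\lambda_{\max}(\bm{\Sigma})$ rather than from undoing the whitening at the end. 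Once you make that correction you are essentially performing the paper's computation; the remaining ingredients of your argument (the Hessian identity, the log-concavity step, the averaging bound $\|\mb{u}_S\|_2^2\geq\beta\|\mb{u}\|_2^2$, and the use of \cref{missingness_assumpt} together with the trivial bound $\mb{u}^\top(I-\Cov_{\tilde p}(\tilde{\mb{y}}))\mb{u}\geq 0$ on the complement event) are sound.
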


\ignore{
\subsubsection{Bound on the gradient of the log likelihood function}

We show in \cref{lm:strong convex general} that $ \bm{\nabla}^2 \ell(\bm{\mu})$ is strongly convex and thus positive semidefinite, so its norm is the largest eigenvalue of the hessian. This eigenvalue is at most $\lambda_{max}(\bm{\Sigma}^{-1})=1/ \lambda_{min}(\bm{\Sigma})$ due to \eqref{hessian_l}. 
We now give a bound on the magnitude of $\Vert\bm{\nabla}\ell(\bm{\mu})\Vert_2$ in a region around $\bm{\mu}^\ast$. 
Since the gradient of the log likelihood is always $\lambda_{max} (\bm{\nabla}^2 \ell(\bm{\mu}))- $ Lipschitz, 
we conclude that:
\begin{align*}
\Vert\bm{\nabla}\ell(\bm{\mu})-\bm{\nabla}\ell(\bm{\mu}^*)\Vert_2 \leq \Vert\bm{\mu}-\bm{\mu}^*\Vert_2\cdot \frac{1}{\lambda_{min}(\bm{\Sigma})}
\end{align*}
Therefore, $\Vert\bm{\nabla}\ell(\bm{\mu})\Vert_2 \leq \Vert\bm{\mu}-\bm{\mu}^*\Vert_2\frac{1}{\lambda_{min}(\bm{\Sigma})}$.
}
\subsubsection{BOUNDED STEP VARIANCE AND GRADIENT BIAS}
\label{sec:bounded_variance_and_bias}

\ignore{
We now show how to bound the expectation of the squared $\ell_2$ norm of our gradient estimator. We do that by observing that the output of Algorithm \cref{algo:sampgrad} 
is contained in $\mathcal{B}_{\bm{\Sigma}}(\bm{\mu},R)$, where $R=\sqrt{d}+\sqrt{2\log(1/\delta)}$ for an appropriate value of $\delta$ and $\mathcal{B}_{\bm{\Sigma}}(\bm{\mu},R)$ denotes the set $\{\bm{x}\in \mathbb{R}^d: \Vert \bm{x}-\bm{\mu}\Vert_{\bm{\Sigma}}\leq R\}$ of points with Mahalanobis distance from $\bm{\mu}$ at most $R$. We note that $\Pr_{X\sim \gaus\cA(\bm{\mu},\bm{\Sigma})}[X\not\in\mathcal{B}_{\bm{\Sigma}}(\bm{\mu},R)]\leq \delta$. Combining this with Lemma \ref{lm:boundedvariance} below, we conclude that $\Exp[\|\mb{g}^{(i)}\|_2^2] = O(poly(d,1/\delta),...) $
}

In this section, we analyze \cref{algo:sampgrad}, \texttt{SampleGradient} with an illustration of the relationship between convex sets appeared in this section is available in \cref{fig:convex_sets}.
We first study the distribution of $\mb{z}^{(M)}$, which then will allow us to show an additive approximation guarantee for the output of the algorithm.

\begin{restatable}{theorem}{langevin}
\label{Thm:langevin}
Assume $\|\bm{\mu}^* - \bm{\mu}\|_{\bm{\Sigma}} \leq S$. For $R = \tilde{O}(\sqrt{d} + S + \log(1/\gamma\eps))$, if $M = \poly{d, S, 1/\gamma, 1/\eps}$ and $\eta=\tilde{\Theta}(R^2/M)$,
then 
\[\tv(\mb{z}^{(M)}, \gauss(W^{-1}\bm{\mu}_{\text{cond}}, I)) \leq \eps.\]
\end{restatable}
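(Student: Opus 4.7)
The plan is to decompose the TV bound into two parts: (a) the distance from $\mb{z}^{(M)}$ to the stationary measure of the projected Langevin iteration on $\cL$, and (b) the distance from that stationary measure to the untruncated Gaussian $\gauss(W^{-1}\bm{\mu}_{\text{cond}}, I)$. For (a) I would appeal to a standard convergence result for projected Langevin Monte Carlo on a convex set (e.g., Bubeck--Eldan--Lehec or Brosse--Durmus--Moulines), noting that the negative log-density $f(\mb{z}) = \tfrac{1}{2}\|\mb{z} - W^{-1}\bm{\mu}_{\text{cond}}\|_2^2$ is $1$-strongly convex with $1$-Lipschitz gradient, and that $\cL$ is convex (intersection of the polyhedron $W^{-1}\cK$ with a Mahalanobis ball) of diameter $O(R)$. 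These results give TV convergence at a rate polynomial in $d, R, 1/\eps$, so choosing $\eta = \tilde\Theta(R^2/M)$ and $M = \poly(d, S, 1/\gamma, 1/\eps)$ yields TV distance at most $\eps/2$ to the restricted stationary distribution.

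For (b), I would show that the set $\cL$ captures all but $O(\eps)$ of the mass of $\gauss(W^{-1}\bm{\mu}_{\text{cond}}, I)$. The ball $\ball_\Sigma(W^{-1}\bm{\mu}_{\text{cond}}, R)$ alone captures all but $\exp(-\Omega((R-\sqrt{d-a})^2))$ of the standard Gaussian mass by standard concentration, which is at most $\eps/4$ for $R = \Omega(\sqrt{d} + \sqrt{\log(1/\eps)})$. For the polyhedral constraint $W^{-1}\cK$, I would use the anchoring assumption (\cref{ass:anchor}): conditional on $\mb{y}_A = \mb{x}$ (and on the anchoring coordinates $\mb{y}_C \subseteq \mb{y}_A$), the event $\obs(\mb{y})=A$ has probability at least $\gamma$ under $\gauss(\bm{\mu}^*, \bm{\Sigma})$. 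The hypothesis $\|\bm{\mu}^*-\bm{\mu}\|_\Sigma \leq S$ implies that the density ratio between $\gauss(\bm{\mu}_{\text{cond}}, \bm{\Sigma}_{\text{cond}})$ and its true-parameter counterpart is controlled on the ball, so $W^{-1}\cK$ contains mass $\Omega(\gamma)$ under $\gauss(W^{-1}\bm{\mu}_{\text{cond}}, I)$; incorporating the $\log(1/\gamma\eps)$ term in $R$ is used to push the tail contribution into the $\eps$ budget. Combining the two, $\Pr_{\gauss(W^{-1}\bm{\mu}_{\text{cond}},I)}[\cL^c] \leq \eps/2$, and hence the TV distance between restricted and unrestricted Gaussian is at most $\eps/2$.

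The hard part will be the quantitative analysis of (a): projected LMC with hard projection $\Pi_\cL$ has a stationary measure that only approximately matches the truncated Gaussian due to boundary effects, and the discretization bias from step size $\eta$ must be handled in tandem with the projection bias. I would invoke existing bounds on projected LMC for strongly log-concave targets, verifying that all hypotheses (strong convexity of $f$, Lipschitz gradient, bounded convex domain, warm-enough initialization) are satisfied in our setting; crucially, $\mb{z}^{(0)}$ is a deterministically chosen point in $\cL$, so a worst-case warm-start factor $\log(1/\pi_0)$ appears in $M$, which is absorbed into the $\poly(1/\gamma)$ dependence via the mass lower bound from step (b).

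A secondary subtlety is showing that $\cL$ is non-empty and that a feasible $\mb{z}^{(0)}$ can be found efficiently. Non-emptiness follows because the true unobserved coordinates $\mb{y}^*_{A^c}$ satisfy the constraints defining $\cK$ and (with high probability under the anchoring mass bound and with $R$ large enough to absorb the $S$-shift) lie in $W^{-1}\bm{\mu}_{\text{cond}} + W^{-1}(\mb{y}^*_{A^c} - \bm{\mu}_{\text{cond}})$ which is within the ball; feasibility is then a linear program. Finally, summing the two $\eps/2$ contributions via the triangle inequality for TV gives the claimed bound.
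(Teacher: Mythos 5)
Your part (a) is essentially the paper's argument: the paper also invokes Theorem 1 of \cite{bubeck2018sampling} for projected LMC on the convex body $\cL$, after rescaling so that $\cL$ contains a Euclidean unit ball (this is where \cref{lem:incirc}, giving an inscribed ball of radius $r=\Omega(\gamma/d^2)$, is needed --- a hypothesis of the BEL theorem that your sketch does not explicitly verify, though it is available from the mass lower bound you derive). The genuine problem is part (b). You decompose against the \emph{unrestricted} Gaussian $\gauss(W^{-1}\bm{\mu}_{\text{cond}}, I)$ and claim $\Pr_{\gauss(W^{-1}\bm{\mu}_{\text{cond}},I)}[\cL^c]\leq \eps/2$. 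This cannot be true in general: your own argument only establishes that $W^{-1}\cK$ carries mass $\Omega(\gamma)$ (in fact, after accounting for the shift $\|\bm{\mu}^*-\bm{\mu}\|_{\bm{\Sigma}}\leq S$, only mass $\gamma'=\Omega(\gamma e^{-S^2/2-R_1})$, which can be exponentially small in $S$). The anchoring assumption gives a \emph{lower} bound of $\gamma$ on the probability of the pattern, not an upper bound of $\eps$ on its complement, so the truncated measure on $\cL$ sits at TV distance roughly $1-\gamma'$ from the unrestricted Gaussian, not $\eps/2$. The step ``combining the two, $\Pr[\cL^c]\leq \eps/2$'' does not follow from anything you prove.

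The paper's proof avoids this by never comparing against the unrestricted Gaussian: the intended target (as the downstream use in \cref{lm:boundedvariance} makes clear, where the limit is a \emph{conditional} expectation given $\obs(\mb{y})=A$) is the Gaussian \emph{truncated to} $W^{-1}\cK$, and the notation $\gauss(W^{-1}\bm{\mu}_{\text{cond}},I)$ in the theorem statement should be read that way. The correct relative-mass comparison is the content of \cref{lem:circumcirc}: the mass of $\cK$ lying \emph{outside} the ball of radius $R_2$ is at most an $\frac{\eps}{16}$-fraction of the mass of $\cK$ itself, which is exactly what is needed to conclude that the truncations of $\gauss(\bm{\mu},\bm{\Sigma})$ to $\cK$ and to $\cL=\cK\cap\ball_{\bm{\Sigma}}(\bm{\mu},R)$ are within $\eps/2$ in TV; the $\log(1/\gamma\eps)+S$ terms in $R$ are there precisely to make the Gaussian tail beyond $R_2$ smaller than $\frac{\eps}{16}\gamma'$. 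If you replace your target in (b) with the $\cK$-truncated Gaussian and run your tail estimate relative to $\Pr[\cK]$ rather than relative to $1$, your outline becomes the paper's proof.
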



Fix $(A,\mb{x})$. Without loss of generality, assume $\bm{\mu}^* = \bm{0}$.


\begin{restatable}{corollary}{boundedvariance}
\label{lm:boundedvariance}
Let $\hat{\mb{g}}$ be the output of Algorithm \ref{algo:sampgrad} with inputs $\tilde{\mb{x}}$ and $\bm{\mu}$ and parameters $R, M, \eta$ as in \cref{Thm:langevin}. Also, let $\mb{g} = -\Exp_{\mb{y} \sim \gauss(\bm{\mu},\bm{\Sigma})}[\bm{\Sigma}^{-1}(\mb{y}-\bm{\mu}) \mid \obs(\mb{y})=A, \mb{y}_A=\mb{x}]$. Then, we have that:
\begin{equation} \label{expectation_approx}
 \Vert \Exp[\hat{\mb{g}}] -\mb{g}\Vert_2
 \leq \eps \cdot \poly{S, d, 1/\gamma, 1/\eps, \lambda_{\max}, 1/\lambda_{\min}}
\end{equation}
Furthermore, we have the following bound
\begin{equation}\label{variance_approx}
\Exp[\Vert\hat{\mb{g}}\Vert_2^2]\leq \poly{d,1/\gamma,S,{1}/{\lambda_{\min}}}
\end{equation}

\end{restatable}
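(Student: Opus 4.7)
The plan is to convert the total-variation guarantee of \cref{Thm:langevin} into an expected-value comparison. Let $\phi:\R^{d-a}\to\R^d$ denote the map $\phi(\mb{z}) = -\bm{\Sigma}^{-1}(\mb{x}\circ(W\mb{z})-\bm{\mu})$, so that $\hat{\mb{g}} = \phi(\mb{z}^{(M)})$. Using the standard conditional-Gaussian formulas that define $\bm{\mu}_{\text{cond}}$ and $\bm{\Sigma}_{\text{cond}}$, together with the whitening substitution $\mb{y}_{A^c}=W\mb{z}$ employed inside \cref{algo:sampgrad}, I would first rewrite the true gradient \eqref{eq:general_grad_exp} as $\mb{g}=\Exp_{\mb{z}^*\sim\pi}[\phi(\mb{z}^*)]$, where $\pi$ denotes $\gauss(W^{-1}\bm{\mu}_{\text{cond}},I)$ restricted to the convex polytope $W^{-1}\cK$ (the image of the missingness region consistent with the observation $(A,\mb{x})$).

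The bias bound then proceeds in two steps. First, I would show that further restricting $\pi$ to the smaller set $\cL=W^{-1}\cK\cap\ball_{\bm{\Sigma}}(W^{-1}\bm{\mu}_{\text{cond}},R)$ costs negligible TV mass: by Gaussian concentration, the $\gauss(W^{-1}\bm{\mu}_{\text{cond}},I)$-mass outside a radius-$R$ ball with $R=\tilde{O}(\sqrt{d}+S+\log(1/\gamma\eps))$ is at most $\eps\gamma$, and by \cref{dependence_assumpt} the normalizing constant $\gauss(W^{-1}\cK)\geq\gamma$, so $\tv(\pi,\pi|_\cL)\leq\eps$. Combining this with \cref{Thm:langevin}, reinterpreted as comparing $\mb{z}^{(M)}$ to $\pi|_\cL$, yields $\tv(\mb{z}^{(M)},\pi)\leq O(\eps)$. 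Second, by the dual characterization of TV distance,
\[\|\Exp[\phi(\mb{z}^{(M)})]-\Exp_\pi[\phi]\|_2\leq 2\,\tv(\mb{z}^{(M)},\pi)\cdot\sup_{\mb{z}\in\cL}\|\phi(\mb{z})\|_2,\]
which reduces the bias bound to producing a polynomial supremum estimate for $\|\phi\|_2$ on $\cL$.

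That supremum estimate simultaneously drives \eqref{variance_approx}, since the projection $\Pi_\cL$ inside \cref{algo:sampgrad} forces $\mb{z}^{(M)}\in\cL$ deterministically. On $\cL$ one has $\|\mb{z}-W^{-1}\bm{\mu}_{\text{cond}}\|_2\leq R$, whence $\|W\mb{z}\|_2\leq \sqrt{\lambda_{\max}(\bm{\Sigma}_{\text{cond}})}\,R+\|\bm{\mu}_{\text{cond}}\|_2\leq \sqrt{\lambda_{\max}(\bm{\Sigma})}\,R+\|\bm{\mu}_{\text{cond}}\|_2$, using the Schur-complement inequality $\lambda_{\max}(\bm{\Sigma}_{\text{cond}})\leq\lambda_{\max}(\bm{\Sigma})$. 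The conditional mean formula $\bm{\mu}_{\text{cond}}=\bm{\mu}_{A^c}+\bm{\Sigma}_{A^cA}\bm{\Sigma}_{AA}^{-1}(\mb{x}-\bm{\mu}_A)$, combined with $\|\bm{\mu}-\bm{\mu}^*\|_{\bm{\Sigma}}\leq S$ and Gaussian concentration of $\mb{x}=\mb{y}_A$, controls $\|\bm{\mu}_{\text{cond}}\|_2$ polynomially. Finally $\|\bm{\Sigma}^{-1}\|_{\mathrm{op}}\leq 1/\lambda_{\min}(\bm{\Sigma})$, so $\sup_{\mb{z}\in\cL}\|\phi(\mb{z})\|_2\leq \poly{S,d,1/\gamma,1/\eps,\lambda_{\max},1/\lambda_{\min}}$, establishing \eqref{variance_approx} directly and, after multiplication by the TV bound from the previous paragraph, also \eqref{expectation_approx}.

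The main technical obstacle is the first sub-step above: cleanly extracting from \cref{Thm:langevin} and \cref{dependence_assumpt} a TV comparison between $\mb{z}^{(M)}$ and the genuine conditional $\pi$, in particular arguing that the unrestricted Gaussian appearing in the statement of \cref{Thm:langevin} is indistinguishable (at the $\eps$ scale) from $\pi|_\cL$ thanks to the precise choice of $R$ and the anchoring lower bound $\gamma$. Once this reconciliation is made, the supremum estimate and the coupling inequality complete the proof with routine Gaussian algebra.
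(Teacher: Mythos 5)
Your proposal follows essentially the same route as the paper's proof: both convert the TV guarantee of \cref{Thm:langevin} into a bound on the difference of conditional means scaled by $\Vert\bm{\Sigma}^{-1}\Vert\leq 1/\lambda_{\min}$, both use the projection onto $\cL$ to bound $\hat{\mb{g}}$ deterministically (giving \eqref{variance_approx}), and both invoke the anchoring parameter $\gamma$ (the paper via Lemma 6 of \cite{daskalakis2018efficient}) to locate the true conditional mean near $\bm{\mu}_{\text{cond}}$. The one point to tighten is that your coupling inequality takes $\sup\Vert\phi\Vert_2$ over $\cL$ only, whereas $\pi$ has unbounded support, so you additionally need the routine Gaussian tail estimate for $\Exp_\pi\bigl[\Vert\phi\Vert_2\,\indic[\cL^c]\bigr]$ --- a step the paper sidesteps by directly placing $\Exp[\mb{y}_{\bar{A}}\mid\obs(\mb{y})=A,\mb{y}_A=\mb{x}]$ inside $\ball_{\bm{\Sigma}}(\bm{\mu}_{\text{cond}},O(\sqrt{\log(1/\gamma)}))$.
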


\section{DISCUSSION AND FUTURE WORK}
In the context of linear-thresholding missingness with a known covariance matrix, we can obtain the mean by initially observing that the set ${\mb{y}: \obs(\mb{y})=A\wedge\mb{y}_A=\mb{x}}$ is convex for any set $A$ and any $\mb{x} \in \R^{|A|}$. By leveraging the fact (Corollary 2.1 of \cite{KP77}) that the variance of a Gaussian decreases when conditioned on a convex set, we can establish that the Hessian of our likelihood function is positive definite. This property ensures that our objective function is strongly convex and thus we can learn the distribution from a MNAR model. However, in scenarios where the covariance matrix is unknown, recovering the distribution becomes much more challenging as the Hessian of our likelihood function incorporates a fourth moment. Thus, we leave this as our future work.

\bibliographystyle{plainnat}
\newcommand{\etalchar}[1]{$^{#1}$}

\appendix
\thispagestyle{empty}
\onecolumn 

\section{DEFERRED PROOFS FROM \cref{sec:notations}}
\label{sec:deffer_proof}

Below we provide the proof of Fact \cref{fact:Frobenius} for completeness.

\factFrobenius*
\begin{proof}
    By definition of the Frobenious norm we have:
    \[
    \Vert A-B \Vert_F^2=\sum_{i=1}^d\sum_{j=1}^d (a_{ij}-b_{ij})^2\leq d^2\delta^2
    \]
    Thus, 
    \[
     \Vert A-B \Vert_F\leq \delta\cdot d
    \]
\end{proof}

\section{DEFERRED PROOFS FROM \cref{sec:self-censoring}}
\label{sec:deffer_proof}

This section provides the formal proofs that were deferred in favor of readability.
For convenience, we will restate the statements before proving them.

\mean*
\begin{proof}
    Fix a coordinate $i\in[d]$. If coordinate $i$ appears in a censored sample, then the value would follow the distribution $\gauss(\mu^\ast_i,\Sigma^\ast_{ii})=\gauss(\mu^\ast_i,\sigma_{i}^2)$. In order to apply \cref{cor:das} for $d=1$ and $\varepsilon^\prime=\varepsilon/d$, we need coordinate $i$ to be present in at least $\tilde{O}(1/\varepsilon^2)$ censored samples for every $i\in [d]$. \cref{censoring_assumpt} implies that coordinate $i$ is present in each censored sample with probability at least $\alpha$. Since in every batch of $\cO(1/\alpha(\varepsilon^\prime)^2)$ samples, there is a constant probability that the required number of $\cO(1/(\varepsilon^\prime)^2)$ appearances of coordinate $i$ is met, the error probability can be reduced to $1/d^2$ a the cost of an extra log factor in the sample complexity. Therefore, by union bound over all $d$ coordinates, the statement holds with probability at least $1-1/d$ using $\cO(\log d/\alpha(\varepsilon^\prime)^2)=\mathcal{\tilde{O}}(\frac{d^2}{\alpha \varepsilon^{2}})$ samples.      
\end{proof}

\covariance*

\begin{proof}
Consider any pair of coordinates $i,j\in [d]$. Given the sample size of $\frac{1}{\alpha\varepsilon^{\prime 2}}$, \cref{censoring_assumpt} and \cref{fact:Frobenius}, there will be at least $\frac{1}{\eps^{\prime 2}}$ samples with non-censored entries in both coordinates $i,j$ for any such pair.  
Therefore, we can apply \cref{cor:das} for $d=2$ and $\varepsilon^\prime=\varepsilon/d$ to get that: \[\Vert \mb{\Sigma}^{ \ast (ij)}-\mb{\hat{\Sigma}^{(ij)}}\Vert_F \leq \varepsilon^\prime\lambda_{max}\left(\mb{\Sigma}^{ \ast (ij)}\right) = \varepsilon\lambda_{max}\left(\mb{\Sigma}^{ \ast (ij)}\right)/d\leq \varepsilon\lambda_{max}\left(\mb{\Sigma}^{ \ast}\right)/d \] 
Therefore, all the entries of the $2 \times 2$ matrix on the lhs have absolute value at most $\varepsilon/d$ and thus this is also an upper bound on the maximum difference of corresponding off diagonal entries of the $d \times d$ matrices $\mb{\Sigma^\ast}$ and $\mb{\hat{\Sigma}}$ as constructed by Algorithm \ref{algo:1} (see line $9$). The same upper bound holds for the diagonal entries, since they are more accurately estimated using 1d subproblems (line $2$ of Algorithm \ref{algo:1}).
Therefore, the maximum entry-wise difference overall between the $d \times d$ matrices $\mb{\Sigma^\ast}$ and $\mb{\hat{\Sigma}}$ as constructed by Algorithm \ref{algo:1} is $\eps\lambda_{max}/d$. We can now apply
\cref{fact:Frobenius} to get $\| \mb{\Sigma^\ast} - \mb{\hat{\Sigma}}\|_F \leq \eps\lambda_{max}$.
\end{proof}

\paragraph{Lower bound on the sample complexity}
We now explain why some dependence on the eigenvalues of $\mb{\Sigma^\ast}$ is necessary.
\noindent To see this, consider the case where  $\lambda_{\min}(\mb{\Sigma}^\ast)=0$. In this case, all the samples come from a subspace of dimension at most $d-1$. Consider one such subspace and its translation, by an infinitesimal amount $\varepsilon$, along the eigenvector corresponding to the $0$ eigenvalue. It is not hard to see that if we take two such parallel subspaces arbitrarily close to each other, then with high probability no finite amount of censored samples can distinguish between these 2 cases, which would be required for keeping this Mahalanobis error bounded.

An illustrative example is in \cref{fig:2d_censoring}:
\begin{figure}
    \centering
    \includegraphics[width=0.45\textwidth]{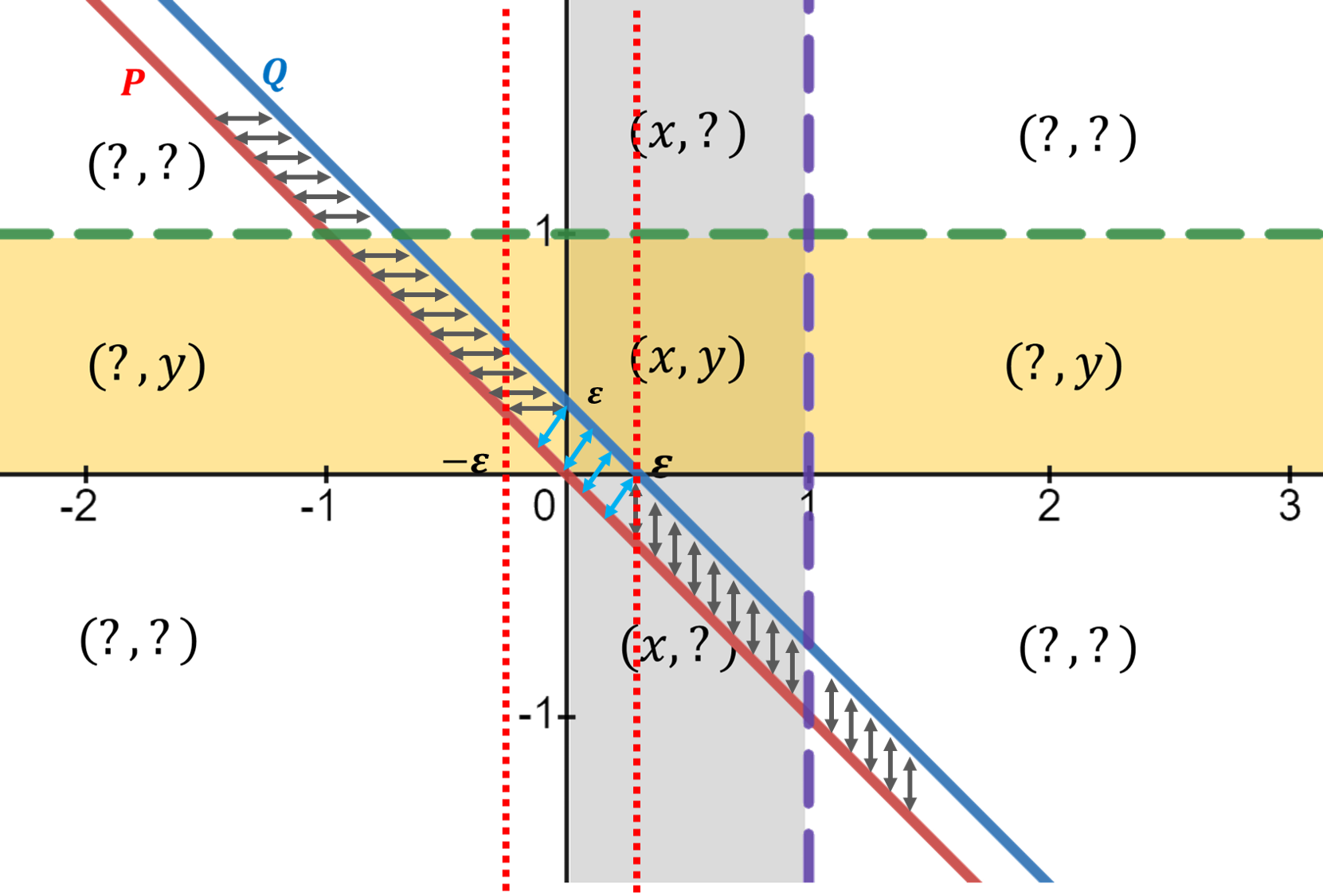}
    \caption{
In this example, the self-censoring missingness mechanism is as follows: Each coordinate $x,y$ of the sample is seen if and only if $x\in [0,1]$ or $y\in [0,1]$ respectively. }
\label{fig:2d_censoring}
\end{figure}
The censoring mechanism is as follows:
\begin{align*}
    &P^\obs: \mathcal{N}(\mathbf{0},\mathbf{I})\vert_{x + y = 0} \\
    &Q^\obs: \mathcal{N}(\mathbf{0},\mathbf{I})\vert_{x + y = \eps}
\end{align*}


We will now present the following lemma, which formally justifies the above remark. It also acts as a warm-up for our subsequent lower bound.

\begin{restatable}{lemma}{lemmalbzerovariance}
\label{lm:lb_zero_variance}
For any sufficiently small value of $\eps>0$, given $m=o(1/\eps)$ 
censored samples according to the missingness model $\obs$ from either $P^\obs$ or $Q^\obs$,  no algorithm can distinguish whether the samples are coming from $P^\obs$ or $Q^\obs$ with probability larger than $2/3$. 
\end{restatable}
\begin{proof}
We will define a coupling between the distributions $P^\obs$ and $Q^\obs$ such that with probability at least $1-O(\eps)$, the same missingness pattern and the same values for the seen coordinates are observed. This implies our lower bound of $\Omega(1/\eps)$ samples for distinguishing the two distributions with constant probability. 
 
More specifically, we define the following coupling matching the probability mass that points on the line $\{x+y=0\}$ have due to $P^\obs$ to the mass that $Q^\obs$ imposes on the line $\{x+y=\eps\}$:

\begin{equation}\label{eq:coupling}
(x,-x) \leftrightarrow \left\{
\begin{array}{lll}
      (x+\eps,-x) & x \leq -\eps  & \text{observed: $(?,-x)$}\\
      (x+\eps/2,-x+\eps/2) & -\eps/2\leq x\leq \eps/2 & \\
      (x,-x+\eps) &  x\geq \eps & \text{observed: $(x,?)$}\\
\end{array} 
\right. 
\end{equation}
For the segments of the support of $P^\obs$ unaccounted for in the above equation, we match their mass arbitrarily in a valid way to finish the coupling. Note that for the first and third branch of \cref{eq:coupling}, the point $(x,-x)$ has strictly larger probability density from $P^\obs$ than $(x+\eps,-x)$ and $(x,-x+\eps)$ respectively have from $Q^\obs$, while for any $x\in \mathbb{R}$, the point $(x+\eps/2,-x+\eps/2)$ has the same density in $Q^\obs$ as $(x,-x)$ has in $P^\obs$.

To see this, consider the coordinate system: $\left\{w:=\frac{x-y}{\sqrt{2}}, z:=\frac{x+y}{\sqrt{2}}\right\}$ (which is a rotation of the original one by $\pi/4$ rad). Since the distribution $\mathcal{N}(\mathbf{0},\mathbf{I})$ is isotropic, can now write the distributions $P^\obs$ and $Q^\obs$ as follows:  

\begin{align*}
    &P^\obs: \mathcal{N}(\mathbf{0},\mathbf{I})\vert_{z = 0}\\
    &Q^\obs: \mathcal{N}(\mathbf{0},\mathbf{I})\vert_{z = \eps/\sqrt{2}}
\end{align*}
Due to the fact that the marginals over $w$ and $z$ are independent, we have that $w\sim \gauss(0,1)$ in both of the cases above.

Given the above coupling, it follows that whenever the true sample has first coordinate in the set $(-\infty,-\eps)\cup (\eps,+\infty)$ the observed sample would be exactly the same (see \cref{eq:coupling}) either with $P^\obs$ or $Q^\obs$. 

Therefore, with probability at least $1-2\erf(\eps\sqrt{2})\geq 1-2\sqrt{2}\erf(\eps)$, the censored sample that we get will not give us any information for our task distinguishing $P^\obs$ from $Q^\obs$. Thus, any algorithm that is able to distinguish $P^\obs$ from $Q^\obs$ need to draw $\Omega(\frac{1}{\erf(\eps)})$ samples. This is $\Omega(1/\eps)$ samples as $\eps$ gets arbitrarily close to $0$. 
\end{proof}

\selfcensoringlb*

\begin{proof}

We now consider two  families of distributions parameterized by $\lambda\in [0,1]$. We define the two families in the rotated $(z,w)$ coordinate system, as in Lemma \ref{lm:lb_zero_variance}, as follows:
\begin{align}
\begin{split}
\label{eq:coupling_lambda}
    &P_\lambda: (z,w)\sim  \mathcal{N}\left(\mathbf{0},\begin{bmatrix}
\lambda & 0 \\
0 & 1 \\
\end{bmatrix}\right)\\
    &Q_\lambda: (z,w)\sim \mathcal{N}\left(\begin{bmatrix}
\eps  \\
0  \\
\end{bmatrix},\begin{bmatrix}
\lambda & 0 \\
0 & 1 \\
\end{bmatrix}\right)
\end{split}
\end{align}

Note that the total variation distance between $P_\lambda$ and $Q_\lambda$ is:
\small{ 
\begin{equation}
\begin{aligned}\label{eq:tv_dist}
    d_{TV}(P_\lambda,Q_\lambda)
    &= 
    d_{TV}(P_\lambda\vert_{w=0},Q_\lambda\vert_{w=0}) \\
    & =
    2\Phi\left(\frac{\eps}{2\sqrt{\lambda}}\right)-1=\erf\left(\frac{\eps}{2\sqrt{2}\sqrt{\lambda}}\right)=O\left(\frac{\eps}{\sqrt{\lambda}}\right).   
\end{aligned}
\end{equation} 
}
Consider the distributions $P_\lambda$ and $Q_\lambda$ defined in \cref{eq:coupling_lambda} for $\lambda=\lambda_{min}$. 
The main idea of the proof is that we can apply \cref{lm:lb_zero_variance} for the case where $\eps=\Theta(\sqrt{\lambda})$. 
Note that the distance between the means of these two distributions is $\eps/\sqrt{2}$. Thus, any algorithm that can estimate the mean with accuracy at most $\eps\sqrt{2}/4$, should be able to distinguish them. 

We will use the same coupling as in \cref{eq:coupling} between the two distributions $P_\lambda$ and $Q_\lambda$ and use it to bound the probability that we will observe different censored samples $p_c$ and $q_c$ respectively. We observe that (similarly to the setting of \cref{lm:lb_zero_variance}) any sample from $P_\lambda$ falling outside the band: $B=\{(x,y):-\eps\leq x\leq \eps\}$, has an identical censored sample to the censored sample of the corresponding point in $Q_\lambda$ via the coupling. 

We now upper bound the probability that sample from $P_\lambda$ falls in the band $B$:
\begin{equation}
 \Pr_{(x,y)\sim P_\lambda}\left[-\eps\leq x\leq \eps\right]\leq \Pr_{(z,w)\sim P_\lambda}\left[-\eps\sqrt{2}\leq w\leq \eps\sqrt{2}\right]=O(\eps)   
\end{equation}

Thus, we have: 
\begin{equation}\label{eq:coupling 1}
    \Pr[p_c\not = q_c]\leq \Pr_{(x,y)\sim P_\lambda}\left[-\eps\leq x\leq \eps\right] =O(\eps)
\end{equation}

In addition, due to \cref{eq:tv_dist}, there exists a different coupling for which the following holds:
\begin{equation}\label{eq:coupling 2}
    \Pr[p_c\not = q_c]\leq \Pr_{p\sim P_\lambda,q\sim Q_\lambda}[p\not = q] =O\left(\frac{\eps}{\sqrt{\lambda}}\right).
\end{equation}

By \cref{eq:coupling 1} and \cref{eq:coupling 2}, we get that no algorithm with $o(\max\{1/\eps, \sqrt{\lambda}/\eps\})$ samples can distinguish $P_\lambda$ from $Q_\lambda$ with probability at least $2/3$.

This implies the statement.

\end{proof}

\censoringmain*

\begin{proof}[Proof of Theorem \ref{th:censoring_main}]
    By Lemma \ref{lm:mean} and Lemma \ref{lm:sdp_spec}, we conclude that we can use $\tilde{O}(\frac{d^2}{\alpha\varepsilon^2})$ samples to get the following guarantees:
    \[
    \left\|\bm{\mu}-\bm{\hat{{\mu}}} \right\|_{2}\leq \varepsilon \sqrt{\lambda_{max}}
    \]
    and
    \[
    \| \mb{\Sigma^\ast} - \mb{\hat{\Sigma}}\|_F \leq \eps\lambda_{max}
    \]
    Thus,we get the following:
 \[
   \left\|\mb{\Sigma}^{\ast -1 / 2}(\bm{\mu}-\bm{\hat{\mu}}) \right\|_{2}\leq \frac{1}{\sqrt{\lambda_{min}}} \left\|\bm{\mu}-\bm{\hat{{\mu}}} \right\|_{2} \leq \varepsilon \sqrt{\lambda_{max}/\lambda_{min}}
    \]
\begin{align*}
    \left\|\bm{I}-\mb{\Sigma^{\ast -1 / 2}} \mb{\hat{\Sigma}} \mb{\Sigma^{\ast -1 / 2}}\right\|_{F}&=\left\|\mb{\Sigma^{\ast -1 / 2}} (\mb{\Sigma^\ast}-\mb{\hat{\Sigma}}) \mb{\Sigma^{\ast -1 / 2}}\right\|_{F}\\
    &\leq \frac{1}{\lambda_{min}} \| \mb{\Sigma^\ast} - \mb{\hat{\Sigma}}\|_F \\
    &\leq \eps\lambda_{max}/\lambda_{min}
\end{align*}
Thus, by substituting $\varepsilon^{\prime\prime}=\eps\lambda_{max}/\lambda_{min}$, we get sample complexity of $\tilde{O}(\frac{d^2(\lambda_{max}/\lambda_{min})^2}{\alpha\eps^{2}})$ for the following guarantees: 
 \[
   \left\|\mb{\Sigma}^{\ast -1 / 2}(\bm{\mu}-\bm{\hat{\mu}}) \right\|_{2}\leq \mathcal{O}(\varepsilon)
    \]
\[
    \left\|\bm{I}-\mb{\Sigma^{\ast -1 / 2}} \mb{\hat{\Sigma}} \mb{\Sigma^{\ast -1 / 2}}\right\|_{F}\leq \mathcal{O}(\eps)
\]
\end{proof}

\ignore{
\paragraph{Lower bound}

We now explain why some dependence on the eigenvalues of $\mb{\Sigma^\ast}$ is necessary.
\noindent To see this, consider the case where  $\lambda_{\min}(\mb{\Sigma}^\ast)=0$. In this case, all the samples come from a subspace of dimension at most $d-1$. Consider one such subspace and its translation, by an infinitesimal amount $\varepsilon$, along the eigenvector corresponding to the $0$ eigenvalue. It is not hard to see that if we take two such parallel subspaces arbitrarily close to each other, then with high probability no finite amount of censored samples can distinguish between these 2 cases, which would be required for keeping this Mahalanobis error bounded.

\begin{figure}
    \centering
    \includegraphics[width=0.5\textwidth]{fig/2d_lowerbound4.png}
    \caption{
In this example, the self-censoring missingness mechanism is as follows: Each coordinate $x,y$ of the sample is seen if and only if $x\in [0,1]$ or $y\in [0,1]$ respectively. }
\label{fig:2d_censoring}
\end{figure}

An illustrative example is in \cref{fig:2d_censoring}:

The censoring mechanism is as follows:
\begin{align*}
    &P^\obs: \mathcal{N}(\mathbf{0},\mathbf{I})\vert_{x + y = 0} \\
    &Q^\obs: \mathcal{N}(\mathbf{0},\mathbf{I})\vert_{x + y = \eps}
\end{align*}


We will now present the following lemma, which formally justifies the above remark. It also acts as a warm-up for our subsequent lower bound.

\begin{restatable}{lemma}{lemmalbzerovariance}
\label{lm:lb_zero_variance}
For any sufficiently small value of $\eps>0$, given $m=o(1/\eps)$ 
censored samples according to the missingness model $\obs$ from either $P^\obs$ or $Q^\obs$,  no algorithm can distinguish whether the samples are coming from $P^\obs$ or $Q^\obs$ with probability larger than $2/3$. 
\end{restatable}

In order to generalize the example above, we now consider two  families of distributions parameterized by $\lambda\in [0,1]$. We define the two families in the rotated $(z,w)$ coordinate system as follows:
\begin{align}
\begin{split}
\label{eq:coupling_lambda}
    &P_\lambda: (z,w)\sim  \mathcal{N}\left(\mathbf{0},\begin{bmatrix}
\lambda & 0 \\
0 & 1 \\
\end{bmatrix}\right)\\
    &Q_\lambda: (z,w)\sim \mathcal{N}\left(\begin{bmatrix}
\eps  \\
0  \\
\end{bmatrix},\begin{bmatrix}
\lambda & 0 \\
0 & 1 \\
\end{bmatrix}\right)
\end{split}
\end{align}

Note that the total variation distance between $P_\lambda$ and $Q_\lambda$ is:
\small{ 
\begin{equation}
\begin{aligned}\label{eq:tv_dist}
    d_{TV}(P_\lambda,Q_\lambda)
    &= 
    d_{TV}(P_\lambda\vert_{w=0},Q_\lambda\vert_{w=0}) \\
    & =
    2\Phi\left(\frac{\eps}{2\sqrt{\lambda}}\right)-1=\erf\left(\frac{\eps}{2\sqrt{2}\sqrt{\lambda}}\right)=O\left(\frac{\eps}{\sqrt{\lambda}}\right).   
\end{aligned}
\end{equation} 
}

}

\section{\cref{sec:convex-masking} OMITTED PROOFS}

\globalmin*
\begin{proof}
We first verify that the gradient vanishes at $\bm{\mu} = \bm{\mu}^*$. First, observe that 
\begin{align*}
\bm{\nabla}\ell(\bm{\mu}) &=
-\sum_{A\subseteq [d]} \int_{\mb{x} \in \R^{|A|}} \frac{\int_{\mb{y}}  \bm{\Sigma}^{-1} (\mb{y}-\bm{\mu})\cdot \indic[\obs(\mb{y}) = A] \cdot \delta(\mb{y}_A-\mb{x})  g_{\bm{\mu}}(\mb{y})  d\mb{y} }{g_{\bm{\mu}}^\obs(A, \mb{x})} g_{\bm{\mu}^*}^\obs(A, \mb{x}) d\mb{x}
\end{align*}
Hence:
\begin{align*}
\bm{\nabla}\ell(\bm{\mu}^*) &=
-\sum_{A\subseteq [d]} \int_{\mb{x} \in \R^{|A|}}{\int_{\mb{y}}  \bm{\Sigma}^{-1} (\mb{y}-\bm{\mu^*})\cdot \indic[\obs(\mb{y}) = A] \cdot \delta(\mb{y}_A-\mb{x}) g_{\bm{\mu^*}}(\mb{y})  d\mb{y} }d\mb{x}\\
&=- {\int_{\mb{y}}  \bm{\Sigma}^{-1} (\mb{y}-\bm{\mu^*})\cdot \sum_{A\subseteq [d]}\indic[\obs(\mb{y}) = A] \cdot \int_{\mb{x} \in \R^{|A|}}\delta(\mb{y}_A-\mb{x}) d\mb{x} \cdot g_{\bm{\mu^*}}(\mb{y})  d\mb{y} }\\
&=-{\int_{\mb{y}}  \bm{\Sigma}^{-1} (\mb{y}-\bm{\mu^*})\cdot g_{\bm{\mu^*}}(\mb{y})  d\mb{y} } = 0.
\end{align*}
One can also show this by using (\ref{eq:general_grad_exp}) for the gradient and then using the law of total expectation.
We next prove $\ell$ is convex by showing that $\bm{\nabla}^2\ell$ is positive semidefinite for any value of $\mb{\mu}$.  
\begin{align}
\begin{split}
&\bm{\nabla}^2 \ell(\bm{\mu})\\
&=\Exp_{(A,\mb{x})} \left[
-\frac{\int_{\mb{y}} (-\bm{\Sigma}^{-1} + \bm{\Sigma}^{-1}(\mb{y} - \bm{\mu})( \mb{y} - \bm{\mu})^T \bm{\Sigma}^{-1}) \indic[\obs(\mb{y}) = A] \cdot \delta(\mb{y}_A-\mb{x}) g_{\bm{\mu}}(\mb{y})  d\mb{y}}{g_{\bm{\mu}}^\obs(A,\mb{x})} \right]
+
\\
&\qquad
\Exp_{\mb{x} \sim \gauss(\bm{\mu}^*, \bm{\Sigma})} \left[
\left(\frac{\int_{\mb{y}}  \bm{\Sigma}^{-1}(\mb{y}-\bm{\mu})\indic[\obs(\mb{y}) = A] \cdot \delta(\mb{y}_A-\mb{x}) g_{\bm{\mu}}(\mb{y}) d\mb{y}}{\int_{\mb{y}} \indic[\obs(\mb{y}) = A] \cdot \delta(\mb{y}_A-\mb{x}) g_{\bm{\mu}}(\mb{y}) d\mb{y}}\right)^2
 \right] \\
 &= \Exp_{(A, \mb{x})}\left[\bm{\Sigma}^{-1} -\Cov_{\mb{y} \sim \gauss(\bm{\mu}, \bm{\Sigma})}\left[ \bm{\Sigma}^{-1} \left(\mb{y} - \bm{\mu}\right) \mid \obs(\mb{y})=A, \mb{y}_A = \mb{x}\right] \right].
\label{eq:general_hessian}
\end{split}
\end{align}
Observe that for a linear thresholding missingness pattern, the set $\{\mb{y}: \obs(\mb{y})=A, \mb{y}_A=\mb{x}\}$ is convex for any set $A$ and any $\mb{x} \in \R^{|A|}$. Using the fact (Corollary 2.1 of \cite{KP77}) that the variance of a Gaussian is non-increasing when restricted to a convex set:

\[
\Cov_{\mb{y} \sim \gauss(\bm{\mu}, \bm{\Sigma})}[\bm{\Sigma}^{-1}(\mb{y} - \bm{\mu}) \mid \obs(\mb{y})=A, \mb{y}_A = \mb{x}]  \preceq 
\Cov_{\mb{y} \sim \gauss(\bm{\mu}, \bm{\Sigma})}[\bm{\Sigma}^{-1}(\mb{y} - \bm{\mu})]  = \bm{\Sigma}^{-1}.
\]
Plugging into (\ref{eq:general_hessian}), we get that $\bm{\nabla}^2\ell(\bm{\mu}) \succeq 0$ for any $\bm{\mu}$. 
\end{proof}

\StrongConvexityMissingEntries*

\begin{proof}
Equivalently, we need to show that the minimum eigenvalue of the Hessian of the function $\ell(\bm{\mu})$ is at least $\lambda$. 
From (\ref{eq:general_hessian}), we have
\begin{equation} \label{hessian_l}
 \bm{\nabla}^2 \ell(\bm{\mu}) = \Exp_{(A, \mb{x})}\left[\bm{\Sigma}^{-1} -\Cov_{\mb{y} \sim \gauss(\bm{\mu}, \bm{\Sigma})}\left[ \bm{\Sigma}^{-1} \left(\mb{y} - \bm{\mu}\right) \mid \obs(\mb{y})=A, \mb{y}_A=\mb{x}\right] \right]
\end{equation}
Fix a unit vector $\mb{v} \in \R^d$. Let $H\subseteq [d]$ denote the set of indices that contains the $c=\beta d$ highest $v_i^2$ values. Therefore, $\sum_{i \in H} v_i^2 \geq \beta$. Using $\mb{v}$ as a test vector:
enecccfckgvkkjlfnlclcggguvjldtdurlrlltnecnid
\begin{align*}
\mb{v}^\top \nabla^2(\ell(\bm{\mu})) \mb{v} &= \mb{v}^\top \bm{\Sigma}^{-1}\mb{v} - \mb{v}^\top\cdot \left(\Exp_{(A,\mb{x})}\Cov_{\mb{y}}(\bm{\Sigma}^{-1}\mb{y} \mid\obs(\mb{y})=A, \mb{y}_A=\mb{x})\right)\cdot \mb{v}
\end{align*}
With probability $1-\alpha(H)$, some coordinate in $H$ is not in $A$. Under this event, we upper-bound $\Cov_{\mb{y}}(\bm{\Sigma}^{-1}\mb{y} \mid\obs(\mb{y})=A, \mb{y}_A=\mb{x})$ by $\Cov_{\mb{y}}(\bm{\Sigma}^{-1}\mb{y}) = \bm{\Sigma}^{-1}$ using the facts that the missingness is linear thresholding and the aforementioned Corollary 2.1 of \cite{KP77}. Under the complement event that $H \subseteq A$ is fully observed, we
use the upper-bound $\Cov_{\mb{y}}(\bm{\Sigma}^{-1}\mb{y} \mid \mb{y}_H=\mb{x}_H)$, again by the same facts.
By \cref{missingness_assumpt}, $\alpha(A) \geq \alpha$ and so:
\begin{align}
\mb{v}^\top \nabla^2(\ell(\bm{\mu})) \mb{v} 
 &\geq \alpha \left(\mb{v}^\top \bm{\Sigma}^{-1} \mb{v} - \mb{v}^\top\cdot \Cov_{\mb{y}}(\bm{\Sigma}^{-1}\mb{y} \mid \mb{y}_H =  \mb{x}_H)\cdot\mb{v} \right)\nonumber\\
 &= \alpha \mb{v}^\top \left(I - \bm{\Sigma}^{-1}\Var_{\mb{y}}(\mb{y} \mid \mb{y}_H = \mb{x}_H)\right) \bm{\Sigma}^{-1}\mb{v}\label{eq:hesslb}
\end{align}
We use the standard facts that for any set $A$: 
\begin{enumerate}
\item[(i)]
$\Var_{\mb{y}}(\mb{y} \mid \mb{y}_A = \mb{x}_A) = \begin{bmatrix} 0 & 0 \\ 0 & \bm{\Sigma}_{\bar{A}\bar{A}}-\bm{\Sigma}_{\bar{A}A}\bm{\Sigma}_{AA}^{-1}\bm{\Sigma}_{A\bar{A}}\end{bmatrix}$
 \item[(ii)]
 $\bm{\Sigma}^{-1} = \begin{bmatrix} \bm{\Sigma}_{AA}^{-1} + \bm{\Sigma}_{AA}^{-1}\bm{\Sigma}_{A\bar{A}} (\bm{\Sigma}_{\bar{A}\bar{A}} - \bm{\Sigma}_{\bar{A}A}\bm{\Sigma}_{AA}^{-1}\bm{\Sigma}_{A\bar{A}})^{-1}\bm{\Sigma}_{\bar{A}A}\bm{\Sigma}_{AA}^{-1} & -
 \bm{\Sigma}_{AA}^{-1}\bm{\Sigma}_{A\bar{A}} (\bm{\Sigma}_{\bar{A}\bar{A}} - \bm{\Sigma}_{\bar{A}A}\bm{\Sigma}_{AA}^{-1}\bm{\Sigma}_{A\bar{A}})^{-1}\\ -(\bm{\Sigma}_{\bar{A}\bar{A}} - \bm{\Sigma}_{\bar{A}A}\bm{\Sigma}_{AA}^{-1}\bm{\Sigma}_{A\bar{A}})^{-1}\bm{\Sigma}_{\bar{A}A}\bm{\Sigma}_{AA}^{-1} &(\bm{\Sigma}_{\bar{A}\bar{A}} - \bm{\Sigma}_{\bar{A}A}\bm{\Sigma}_{AA}^{-1}\bm{\Sigma}_{A\bar{A}})^{-1} \end{bmatrix}$
 \end{enumerate}
 Using them repeatedly to simplify (\ref{eq:hesslb}), we get:
 \begin{align*}
\mb{v}^\top \nabla^2(\ell(\bm{\mu})) \mb{v}
&\geq \alpha \mb{v}^\top \left(I - \begin{bmatrix}0 & -\bm{\Sigma}_{HH}^{-1}\bm{\Sigma}_{H\bar{H}}\\0 & I\end{bmatrix}\right) \bm{\Sigma}^{-1}\mb{v}\\
&= \alpha \mb{v}^\top \begin{bmatrix} I & \bm{\Sigma}_{HH}^{-1}\bm{\Sigma}_{H\bar{H}}\\0 & 0\end{bmatrix} \bm{\Sigma}^{-1}\mb{v}\\
&= \alpha \mb{v}^\top \begin{bmatrix}\bm{\Sigma}_{HH}^{-1} & 0\\0 & 0\end{bmatrix} \mb{v}\\
&= \alpha \mb{v}_H^\top \bm{\Sigma}_{HH}^{-1} \mb{v}_H
 \end{align*}
Finally, we use our choice of $H$ and that $\lambda_{\min}(\bm{\Sigma}_{HH}^{-1}) = 1/\lambda_{\max}(\bm{\Sigma}_{HH}) \geq 1/\lambda_{\max}(\bm{\Sigma})$ to obtain our claim.
\end{proof}
We now describe our solutions to the following three problems as outlined in \cref{sec:convex-masking}.

\begin{itemize}[leftmargin=*]
    \item \textbf{Initialization}: efficiently compute an initial feasible point from which to start the optimization. The pseudocode for \texttt{Initialize} appears in \cref{algo:init};
    \item \textbf{Gradient estimation}: design a nearly unbiased sampler for $\bm{\nabla}\ell(\bm{\mu})$ using Langevin sampling. The \texttt{SampleGradient} pseudocode appears in \cref{algo:sampgrad}; 
    \item \textbf{Efficient projection}: perform an efficient projection into a set of feasible points to make sure that PSGD converges. The pseudocode presents in \cref{algo:projettodomain}. 
\end{itemize}

\ignore{
In order to efficiently optimize the negative log-liklihood and maintain its strong-convexity, we need to search over a set of parameters that assign significant measure to the set with general missingness. 
From \cref{missingness_assumpt}, we have that for any set of coordinates $A\subseteq [d]$, $|A|\leq \beta d$, $\Pr_{\mb{x}}[A\subseteq [S(\mb{x})]\geq \alpha$. 
We use $\bm{\mu^*} = (\mu_1^*, \mu_2^*, \cdots, \mu_d^*)$ to represent the parameter from the true distribution, and $\bm{\mu_0} = \{ \mu_{\{0,1\}}, \mu_{\{0,2\}}, \cdots, \mu_{\{0,d\}}\}$ represent the empirical mean. 
By partitioning $d$ coordinates of $\bm{\mu}^*$ into $\lceil 1/ \beta \rceil$ sets of cardinality $\beta d$, we have $A = \left[ A_1, A_2, \dots, A_{\lceil 1/ \beta \rceil}\right]$ with a size at most $\beta d$.
$\bm{\mu_{A_i}^*}$ represent the true parameter of a subset,
$\bm{\mu_{\{0,A_i\}}}$ represent the empirical mean over a subset.
Let $\Delta\mu = \mu_0 - \mu^*$.
Since the vectors $\Delta \mu_{A_i}$ are pairwise orthogonal, we have
$\Vert \Delta\mu \Vert_2^2 = \sum_{i=1}^{ \lceil 1/\beta \rceil} \Vert \Delta\mu_{A_i}\Vert_2^2 = \lceil 1/\beta \rceil \Vert \mu_{0, A_i} -  \mu_{ A_i}^* \Vert_2^2$. 
Given that we can apply affine transformation on $\mu_{ A_i}^*$ such that $\mu_{ A_i}^* = 0$, 
bounding on $ \Vert \mu_{0, A_i} -  \mu_{ A_i}^* \Vert_2^2 $ is equivalent as bounding on $\Vert \mu_{0, A_i} \Vert_2^2$. 
In the worst case, we can only observe $\alpha$-mass. For $d=1$, let $F$ represent the CDF of the standard normal distribution, given an vector $\mb{x} \in \R^d$, we have a subset with mass $\geq \alpha$ that would maximize $\Vert \mu_{0, A_i} \Vert_2^2$ as $\{\mb{x}^\top \mu_{0, A_i} \geq F^{-1}(1 - \alpha)\}$. Since $\alpha = 1 - F(t) \leq e^{- \frac{t^2}{2}}$, we have $t \leq \sqrt{2 \log{1/ \alpha}}$ and thus $ \Vert \mu_{0, A_i} -  \mu_{ A_i}^* \Vert_2^2 \leq \mathcal{O}(\log(\frac{1}{\alpha }))$.
For arbitrary $d$, Lemma 6 of \p{daskalakis2018efficient} implies $ \Vert \mu_{0, A_i} -  \mu_{ A_i}^* \Vert_2 \leq \mathcal{O}(\sqrt{\log(\frac{1}{\alpha }}))$. 
Therefore $\Vert \Delta\mu \Vert_2 = \Vert \mu_0 - \mu^* \Vert_2 = \sqrt{ \lceil 1/\beta \rceil \Vert \mu_{0, A_i} -  \mu_{ A_i}^* \Vert_2^2} \leq \sqrt{\lceil \frac{1}{\beta} \rceil \log{\frac{1}{\alpha}}}$.
}

\ignore{
In order to efficiently optimize the negative log-liklihood and maintain its strong-convexity, we need to search over a set of parameters that assign significant measure to the set with general missingness. 
From \cref{missingness_assumpt}, we have that for any set of coordinates $A\subseteq [d]$, $|A|\leq \beta d$, $\Pr_{\mb{x}}[A\subseteq [S(\mb{x})]\geq \alpha$. 
We use $\bm{\mu^*} = (\mu_1^*, \mu_2^*, \cdots, \mu_d^*)$ to represent the parameter from the true distribution, and $\bm{\mu_0} = \{ \mu_{\{0,1\}}, \mu_{\{0,2\}}, \cdots, \mu_{\{0,d\}}\}$ represent the empirical mean. 
By partitioning $d$ coordinates of $\bm{\mu}^*$ into $\lceil 1/ \beta \rceil$ sets of cardinality $\beta d$, we have $A = \left[ A_1, A_2, \dots, A_{\lceil 1/ \beta \rceil}\right]$ with a size at most $\beta d$.
$\bm{\mu_{A_i}^*}$ represent the true parameter of a subset,
$\bm{\mu_{\{0,A_i\}}}$ represent the empirical mean over a subset.
Let $\Delta\mu = \mu_0 - \mu^*$.
Since the vectors $\Delta \mu_{A_i}$ are pairwise orthogonal, we have
$\Vert \Delta\mu \Vert_2^2 = \sum_{i=1}^{ \lceil 1/\beta \rceil} \Vert \Delta\mu_{A_i}\Vert_2^2 = \lceil 1/\beta \rceil \Vert \mu_{0, A_i} -  \mu_{ A_i}^* \Vert_2^2$. 
Given that we can apply affine transformation on $\mu_{ A_i}^*$ such that $\mu_{ A_i}^* = 0$, 
bounding on $ \Vert \mu_{0, A_i} -  \mu_{ A_i}^* \Vert_2^2 $ is equivalent as bounding on $\Vert \mu_{0, A_i} \Vert_2^2$. 
In the worst case, we can only observe $\alpha$-mass. For $d=1$, let $F$ represent the CDF of the standard normal distribution, given an vector $\mb{x} \in \R^d$, we have a subset with mass $\geq \alpha$ that would maximize $\Vert \mu_{0, A_i} \Vert_2^2$ as $\{\mb{x}^\top \mu_{0, A_i} \geq F^{-1}(1 - \alpha)\}$. Since $\alpha = 1 - F(t) \leq e^{- \frac{t^2}{2}}$, we have $t \leq \sqrt{2 \log{1/ \alpha}}$ and thus $ \Vert \mu_{0, A_i} -  \mu_{ A_i}^* \Vert_2^2 \leq \mathcal{O}(\log(\frac{1}{\alpha }))$.
For arbitrary $d$, Lemma 6 of \cite{daskalakis2018efficient} implies $ \Vert \mu_{0, A_i} -  \mu_{ A_i}^* \Vert_2 \leq \mathcal{O}(\sqrt{\log(\frac{1}{\alpha }}))$. 
Therefore $\Vert \Delta\mu \Vert_2 = \Vert \mu_0 - \mu^* \Vert_2 = \sqrt{ \lceil 1/\beta \rceil \Vert \mu_{0, A_i} -  \mu_{ A_i}^* \Vert_2^2} \leq \sqrt{\lceil \frac{1}{\beta} \rceil \log{\frac{1}{\alpha}}}$.

}

\subsection{GRADIENT ESTIMATION}
Recall from the gradient expression in (\ref{eq:general_grad}) that the main 
obstacle in computing $\nabla \ell(\bm{\mu})$  is the term $\Exp_{(A,\mb{x})}\Exp_{\mb{y}}[\mb{y} \mid \obs(\mb{y})=A, \mb{y}_A=\mb{x}]$. Here, $(A,\mb{x})$ is an observation generated from $\gauss(\bm{\mu}^*, \bm{\Sigma})^\obs$, while $\mb{y}$ is sampled from $\gauss(\bm{\mu}, \bm{\Sigma})$ for some $\bm{\mu} \in \R^d$.  So, to implement \texttt{SampleGradient}, we need an approximately unbiased estimator.

The most straightforward way is to apply rejection sampling: for an $(A, \mb{x})$ generated by $\mathcal{O}$, keep sampling $\mb{y}$ from the conditional\footnote{We need to sample from the conditional distribution because $\mb{y}\equiv \mb{x}$ occurs with probability $0$ in the $\gauss(\bm{\mu},\bm{\Sigma})$ measure.} distribution $\gauss(\bm{\mu},\bm{\Sigma}) \mid \mb{y}_A = \mb{x})$ until $\obs(\mb{y}) = A$. If $\bm{\mu}^* = \bm{\mu}$, then the expected cost of the rejection sampling is $O(1/\gamma)$ by \cref{dependence_assumpt}, as $\mb{y}_A = \mb{x}$ implies $\mb{y}_C = \mb{x}_C$. The issue that arises is that the probability of $\obs(\mb{y}) = A$ can decrease exponentially in the distance between $\bm{\mu}$ and $\bm{\mu}^*$, and so, rejection sampling becomes infeasible.
\ignore{To understand the issue more formally, we define $\gamma(\mb{x})$ to be the probability that a sample from the conditional distribution is not rejected when $\bm{\mu}=\bm{\mu}^*$:
\begin{equation}
\gamma(\mb{x}) \coloneqq \Pr_{\mb{y} \sim \gauss(\bm{\mu}^*, \bm{\Sigma})}\left[\cA(\mb{y}) = \cA(\mb{x}) \mid y_i = x_i \text{ for all } i \in \cA(\mb{x})\right]
\end{equation}}

To sample the gradient when $\bm{\mu}$ is far from $\bm{\mu}^*$, we use the projected Langevin Monte Carlo algorithm \cite{bubeck2018sampling}. For an observation $(A, \mb{x})$, suppose $\mb{y} \sim \gauss(\bm{\mu}, \bm{\Sigma})$, and let $\bm{\mu}_{\text{cond}}$ and ${\bm{\Sigma}}_{\text{cond}}$ be the mean and covariance of $\mb{y}_{\bar{A}}$ conditioned on $\mb{y}_A = \mb{x}$. It is well-known that the conditional distribution is Gaussian with:
\begin{align}\label{eq:newg}
\bm{\mu}_{\text{cond}}  &= \bm{\mu}_{\bar{A}} + \bm{\Sigma}_{\bar{A}, A} \bm{\Sigma}_{A, A}^{-1} (\mb{{x}} - \bm{\mu}_A)\\
\bm{\Sigma}_{\text{cond}} &= \bm{\Sigma}_{\bar{A}, \bar{A}} - \bm{\Sigma}_{\bar{A}, A} \bm{\Sigma}_{A,A}^{-1}\bm{\Sigma}_{A, \bar{A}}
\label{eq:newc}
\end{align}
    where ${\bar{A}}$ represents $[d] \setminus A$. Let\footnote{$\mb{x}_A \circ \mb{z}$, where $\mb{z} \in \R^{d-|A|}$, denotes the vector $\mb{y} \in \R^d$ where $\mb{y}_A = \mb{x}_A$ and $\mb{y}_{\bar{A}} = \mb{z}$.} $\cK = \{\mb{z} \in \R^{d-|A|} \mid \obs(\mb{x} \circ \mb{z}) = A\}$. 
The iteration of the projected Langevin Monte Carlo algorithm takes the following form:

\begin{equation}
\mb{z}^{(t+1)} = \Pi_{\cK \cap \ball_{\textcolor{black}{\Sigma}}(\bm{\mu},R)}\left(\mb{z}^{(t)} - \frac{\eta}{2}\bm{\Sigma}_{\text{cond}}^{-1}(\mb{z}^{(t)} - \bm{\mu}_{\text{cond}}) + \sqrt{\eta} \cdot \bm{\zeta}^{(t)}\right)
\end{equation}
where $\eta$ is a step-size parameter,
$R$ is an appropriate radius parameter, and $\bm{\zeta}^{(0)}, \bm{\zeta}^{(1)}, \dots$ are i.i.d.~samples from the standard normal distribution in $(d-|A|)$-dimensions. 
We implicitly make the reasonable assumption here that 
Mahalanobis projection 
to the convex set 
$\Pi_{\cK \cap \ball_{\Sigma}(\bm{\mu},R)}(\cdot)$ can be performed efficiently. The pseudocode for \texttt{SampleGradient} appears in \cref{algo:sampgrad}.
\ignore{
{
\IncMargin{1.2em}
\begin{algorithm}
\SetKwInOut{Input}{Input}
\Input{$(A, \mb{x})$, $\bm{\mu}$, parameters $\eta, R, M$}
$a \gets |A|$\\
Compute $\bm{\mu}_{\text{cond}}$ and $\bm{\Sigma}_{\text{cond}}$ as in (\ref{eq:newg}) and (\ref{eq:newc})\\
Let $W$ be such that $\bm{\Sigma}_{\text{cond}} = WW^\top$\\
Compute $\cL = (W^{-1} \cK)\cap \ball_{\textcolor{black}{\Sigma}}(W^{-1}\bm{\mu}_{\text{cond}},R)$ \\
$\mb{z}^{(0)} \gets $ a point in $\cL$\\
\For{$t = 0~\KwTo~ M-1$}{
Sample $\bm{\zeta}^{(t)}$ from $\gauss(0, I_{d-a})$\\
$\mb{z}^{(t+1)} \gets \Pi_\cL\left(\mb{z}^{(t)} - {\eta}(\mb{z}^{(t)} - W^{-1}\bm{\mu}_{\text{cond}}) + \sqrt{\eta} \cdot \bm{\zeta}^{(t)}\right)$
}
\Return $-\bm{\Sigma}^{-1}(\mb{x} \circ (W\mb{z}^{(M)})-\bm{\mu})$ 
\caption{[\texttt{SampleGradient}] Sampler for $\nabla \ell(\bm{\mu})$.}
\label{algo:sampgrad}
\end{algorithm}
\DecMargin{1.2em}
}
}

\subsection{PROJECTION TO FEASIBLE DOMAIN}
Next, in each iteration of SGD in \texttt{MissingDescent} (\cref{algo:psgd}), 
we need to choose a projection set to make sure that PSGD converges. Specifically, we project a current guess back to a $\ball_{\bm{\Sigma}}$ ball centered at $\bm{\mu}^{(0)}$.

\subsection{BOUNDED STEP VARIANCE AND GRADIENT BIAS}

\begin{restatable}{lemma}{BoundedStepVariance}
\label{lem:incirc}
$\cK \cap \ball_{\bm{\Sigma}}(\bm{0}, R_1) \supseteq \ball_{\bm{\Sigma}}(\mb{c}, r)$ for some $\mb{c}$, where $R_1 =  \sqrt{d} + O(\sqrt{\log(1/\gamma)}$ and $r = \Omega(\gamma/d^2)$
\end{restatable}

\begin{restatable}{lemma}{circumcirc}
\label{lem:circumcirc}
For $\mb{w}\sim \gauss(\bm{\mu}, \bm{\Sigma})$, where $\|\bm{\mu}\|_{\bm{\Sigma}}\leq S$, 
\[
\Pr[\mb{w} \notin \ball_{\bm{\Sigma}}(\bm{\mu}, R_2)\cap K] \leq \frac{\eps}{16}\cdot \Pr[\mb{w} \in K]
\]
where $R_2 = \poly{d, S, \log(1/\gamma), \log(1/\eps) } > R_1$.
\end{restatable}
\begin{proof}
Suppose $\mb{v} \sim \gauss(\bm{0}, \bm{\Sigma})$.  By standard concentration of gaussians:
\[
\Pr[\|\mb{v}\|_{\bm{\Sigma}} > \sqrt{d} + O(\sqrt{\log(1/\delta)})]\leq \delta
 \]
Setting $\delta = \gamma/2$, we get that:
\begin{equation}\label{eq:truncK2}
\Pr[\mb{v} \in \ball_{\bm{\Sigma}}(\bm{0}, R_1) \cap \cK] \geq \gamma/2.
\end{equation}
\ignore{
This implies that\arnab{Not sure. Check this.}:
\[
\forall \|\mb{u}\|=1:~~\max_{\mb{x}, \mb{y} \in \ball(\bm{0}, R_1) \cap K} \mb{u}^\top (\mb{x}-\mb{y}) \geq \Omega(\gamma).
\]}
 Invoking Lemma 12 of \cite{cherapanamjeri2022makes}, we get that there exists $\mb{c}$ such that $\ball_{\bm{\Sigma}}(\bm{0}, R_1) \cap \cK$ contains $\ball_{\bm{\Sigma}}(\mb{c}, r)$ for $r=\Omega(\gamma/d^2 )$.
\end{proof}

\circumcirc*
\begin{proof}
Using (\ref{eq:truncK2}):
\begin{align*}
\Pr[\mb{w}\in \cK] \geq \Pr[\mb{w} \in \ball_{\bm{\Sigma}}(\bm{0}, R_1) \cap \cK] 
&\geq \frac{\gamma}{2}\exp\left(-\frac{\Vert\bm{\mu}\Vert_{\bm{\Sigma}}^2 + 2R_1}{2}\right) 
\geq \frac{\gamma}{2}\exp\left( -\frac{S^2}{2} - R_1 \right)
\end{align*}
Call the lower-bound on the right $\gamma'$. Note that $\gamma'$ may be exponentially smaller than $\gamma$ for large $S$. 

We define $R_2$ large enough so that $\Pr[\mb{w} \notin \ball_{\bm{\Sigma}}(\bm{\mu}, R_2)]\leq \frac{\eps}{16}\gamma'$. By
concentration of gaussians, it suffices to take $R_2 = \sqrt{d} + O(\sqrt{\log(1/\gamma')}) = \sqrt{d} + O\left(\sqrt{\log\frac{1}{\gamma\eps} + S^2+R_1}\right)$.   Note that the claim about $R_2$ follows.
\end{proof}

\begin{figure*}
    \centering
    \includegraphics[width=0.7\textwidth]{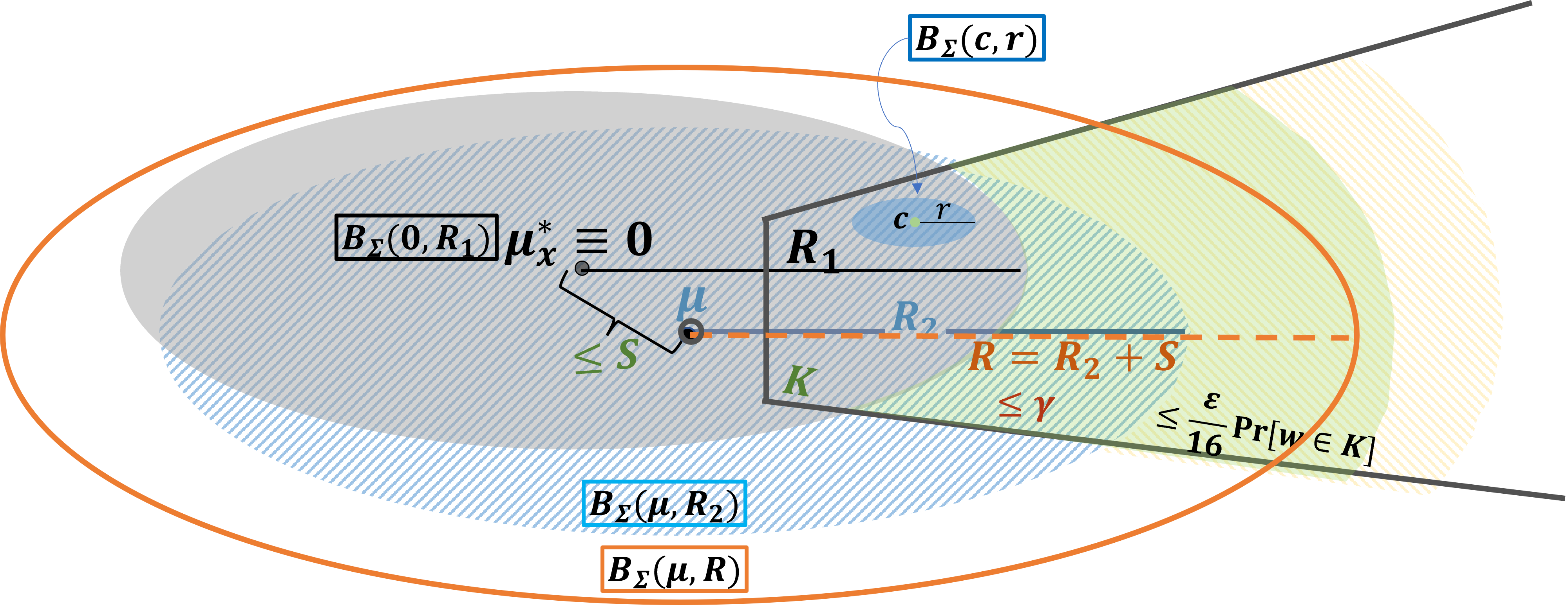}
    \caption{
An illustration of convex sets in \cref{sec:bounded_variance_and_bias}.}
\label{fig:convex_sets}
\end{figure*}

\langevin*

\begin{proof}[Proof of \cref{Thm:langevin}]
With $R_2$ as in \cref{lem:circumcirc}, set $R=R_2+S$, so that $\cL = \ball_{\bm{\Sigma}}(\bm{\mu}, R) \cap \cK$. Since $R_2 > R_1$, \cref{lem:incirc} implies that $\cL$ contains a ball of radius $r$. On the other hand, by \cref{lem:circumcirc}, $\Pr[\mb{w} \notin \cL] \leq \frac{\eps}{4} \Pr[\mb{w} \in \cK]$ for $\mb{w} \sim \gauss(\bm{\mu}, \bm{\Sigma})$, which implies that the truncation of $\gauss(\bm{\mu}, \bm{\Sigma})$ to $\cK$ and to $\cL$ are at most $\eps/2$ far from each other in TV distance.

We can now use the main result of \cite{bubeck2018sampling} to approximately sample from the truncated gaussian $\gauss(\bm{\mu},\bm{\Sigma}; \cL)$ with TV error $\eps/2$. This work analyzes the projected Langevin Monte Carlo algorithm for sampling from a distribution $\mu$ on $\R^d$ whose density is proportional to $\exp(-f(\mb{x}))\cdot 1[\bm{x} \in \mathcal{M}]$ where $\mathcal{M}$ is a convex body containing the origin.  Suppose for all $\mb{x}, \mb{y} \in \mathcal{M}$, 
$\|\nabla f(\mb{x})-\nabla f(\mb{y})\| \leq \beta \|\mb{x}-\mb{y}\|$  and $\|\nabla f(\mb{x})\| \leq \ell$. Consider the Langevin dynamics with $\bar{\mb{X}}_0 = \bm{0}$ and:
\[
\bar{\mb{X}}_{k+1} = \Pi_{\mathcal{M}}\left(\bar{\mb{X}}_k - \frac{\eta}{2} \nabla f(\bar{\mb{X}}_k) + \sqrt{\eta} \bm{\zeta}_k\right)
\]
where $\bm{\zeta}_0, \bm{\zeta}_1, \dots$ are i.i.d.~standard normal variables.
If $\mathcal{M}$ contains a Euclidean ball of radius $1$ and is contained in a Euclidean ball of radius $R_{\text{out}}$, then Theorem 1 of \cite{bubeck2018sampling} claims that $\tv(\bar{\mb{X}}_N, \mu)\leq \eps$ if $\eta = \tilde{\Theta}(R_{\text{out}}^2/N)$ and $N = \tilde{\Omega}(R_{\text{out}}^6 \max(d, R_{\text{out}}\ell, R_{\text{out}}\beta)^{12}/\eps^{12})$. 

In our context, \cref{algo:sampgrad} already transforms by a Cholesky decomposition of $\bm{\Sigma}$ so as to transform Mahalanobis distance to Euclidean distance. We can then scale by $r$ (from \cref{lem:incirc}) to ensure that a Euclidean unit ball is contained inside the transformed $\cL$. The radius of the outer ball is then $R_{\text{out}} \leq R/r \leq \tilde{O}(d^3 S/\gamma)$. We can bound the parameters $\beta$ and $\ell$ (similarly to Section B.3 of \cite{cherapanamjeri2022makes}):
\[
\beta = O\left(\frac{\gamma^2}{d^4}\right) \qquad \qquad \ell = \tilde{O}\left(\frac{\gamma S}{d^{1.5}}\right)
\]
So, invoking the result of \cite{bubeck2018sampling}, for any particular $\mb{x}$, the running time of \texttt{SampleGradient} is $\poly{d, S, 1/\gamma, 1/\eps}$. 
\end{proof}

\boundedvariance*

\begin{proof}
We first show \eqref{expectation_approx}:
\begin{align*}
\|\Exp[\hat{\mb{g}}]-\mb{g}\|_2
&= \left\|\bm{\Sigma}^{-1}\left(\mb{{x}}_A\circ \Exp[W\mb{z}^{(M)}] - \mb{\tilde{x}}_A \circ \Exp[\mb{y}_{\bar{A}} \mid \obs(\mb{y})=A, \mb{y}_A=\mb{x}]\right)\right\|\\
&\leq \frac{1}{\lambda_{\min}(\bm{\Sigma})} \|\Exp[W\mb{z}^{(M)}] - \Exp[\mb{y}_{\bar{A}} \mid \obs(\mb{y})=A, \mb{y}_A=\mb{x}]\|\\
&\leq \frac{\eps\sqrt{\lambda_{\max}(\bm{\Sigma})}}{\lambda_{\min}(\bm{\Sigma})}(R + O(\sqrt{\log(1/\gamma)})) 
\end{align*}

The last inequality holds by the guarantee of \cref{Thm:langevin}, as well as the fact that $\mb{z}^{(M)}$ is contained within $\ball_{\bm{\Sigma}}(\mb{\mu}_{\text{cond}}, R)$ while $\Exp[\mb{y}_{\bar{A}} \mid \mb{y}_A = {\mb{x}}_A, \cA(\mb{y})=A]$ is within $\ball_{\bm{\Sigma}}(\bm{\mu}_{\text{cond}}, O(\sqrt{\log(1/\gamma))})$ by \cref{dependence_assumpt} and Lemma 6 of \cite{daskalakis2018efficient}.

Given the above and the existence of the projection step in the gradient estimator, we can get the following bound on the centralized second moment of the gradient estimator:

\[
\mathbb{E}[\Vert\hat{\mb{g}}\Vert_2^2]
\leq \Exp[\|\bm{\Sigma}^{-1}(\mb{y}-\bm{\mu})\|^2 \mid \obs(\mb{y})=A, \mb{y}_A=\mb{x}] + \eps\left(\frac{R} {\sqrt{\lambda_{\min}}}\right)^2
\leq \poly{d,1/\gamma,S,{1}/{\lambda_{\min}}}
\]
\end{proof}

\subsubsection{Bound on the Initialization}

\begin{restatable}[Empirical Parameters vs True Parameters]{lemma}{InitializationSampleComplexity}
\label{lm:initialization sample complexity}
The empirical mean $\mathbb{E}[w]$
computed using $\tilde{\mathcal{O}}(\frac{ d \log( n d/\alpha\beta \delta) \log(1 / \delta \beta)}{\varepsilon^2})$ samples by sampling from the general missingness model with probability at least $1 - \delta$
satisfy  $\|\mb{w}-\bm{\mu}^*\|_2 \leq \mathcal{O}\left({\sqrt{\frac{\lambda_{\max}}{\beta} \log(1/\alpha)}}\right)$.
\end{restatable}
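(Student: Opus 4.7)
The plan is to analyze \texttt{Initialize} (\cref{algo:init}) block by block: there are $\lceil 1/\beta\rceil$ blocks $A_0,\dots,A_{\lceil 1/\beta\rceil-1}$, each of size at most $\beta d$, and the output is the concatenation of the per-block empirical means $\bm{\hat\mu}_i$ computed from those samples in which every coordinate of $A_i$ is observed. Because $\|\mb{w}-\bm{\mu}^*\|_2^2 = \sum_i \|\bm{\hat\mu}_i - \bm{\mu}^*_{A_i}\|_2^2$, it suffices to bound each $\|\bm{\hat\mu}_i-\bm{\mu}^*_{A_i}\|_2^2$ by $O(\lambda_{\max}\log(1/\alpha))$ with probability $1-\delta\beta$ and union bound over $1/\beta$ blocks.

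Fix a block $A_i$. By \cref{missingness_assumpt}, $p_i := \Pr[A_i\subseteq \obs(\mb{y})]\geq \alpha$. First I would argue, via a Chernoff bound on $n$ samples, that the number $n_i$ of samples retained in line 7 satisfies $n_i \geq \alpha n /2$ with probability $1-e^{-\Omega(\alpha n)}$; the $\log(nd/\alpha\beta\delta)$ factor in the sample complexity absorbs the union bound over blocks. Conditioned on $A_i\subseteq \obs(\mb{y})$, the random vector $\mb{y}_{A_i}$ is a $|A_i|$-dimensional Gaussian truncated to a set of mass at least $\alpha$ (the marginalization over unobserved coordinates keeps the distribution Gaussian under conditioning on a halfspace-defined event; even in the general case one only uses that the conditional covariance is dominated by $\bm{\Sigma}_{A_iA_i}$, again by Corollary~2.1 of \cite{KP77}). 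Therefore, by Lemma 6 of \cite{daskalakis2018efficient},
\[
\|\Exp[\bm{\hat\mu}_i]-\bm{\mu}^*_{A_i}\|_2^2 \leq O(\lambda_{\max}\log(1/\alpha)).
\]

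Next I would bound the deviation of $\bm{\hat\mu}_i$ from its expectation. Because the conditional covariance $\mathrm{Cov}[\mb{y}_{A_i}\mid A_i\subseteq \obs(\mb{y})]$ is PSD-dominated by $\bm{\Sigma}_{A_iA_i}$ (Corollary~2.1 of \cite{KP77}), its eigenvalues are at most $\lambda_{\max}$. A standard vector Bernstein / sub-Gaussian concentration for the empirical mean of $n_i$ i.i.d.\ copies of a random vector in $\R^{|A_i|}$ with operator-norm covariance at most $\lambda_{\max}$ then yields
\[
\|\bm{\hat\mu}_i-\Exp[\bm{\hat\mu}_i]\|_2^2 \leq O\!\left(\frac{\lambda_{\max}(|A_i|+\log(1/\delta'))}{n_i}\right)
\]
with probability $1-\delta'$. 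Taking $\delta' = \delta\beta$ and $n_i = \Omega(\alpha n)$, this is at most $\lambda_{\max}\log(1/\alpha)$ provided $n = \tilde\Omega(d\log(1/\delta\beta)/(\alpha\log(1/\alpha)))$; the bound in the lemma statement dominates this.

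Combining the two displays, with probability $1-\delta\beta$ we have $\|\bm{\hat\mu}_i-\bm{\mu}^*_{A_i}\|_2^2 = O(\lambda_{\max}\log(1/\alpha))$. A union bound over the $\lceil 1/\beta\rceil$ blocks, together with the Chernoff event for the $n_i$'s, gives overall success probability $1-\delta$, and summing the per-block squared errors yields
\[
\|\mb{w}-\bm{\mu}^*\|_2^2 \;\leq\; \lceil 1/\beta\rceil \cdot O(\lambda_{\max}\log(1/\alpha)) \;=\; O\!\left(\frac{\lambda_{\max}}{\beta}\log(1/\alpha)\right),
\]
as desired. The main obstacle I expect is the first step: the conditional law of $\mb{y}_{A_i}$ given $A_i\subseteq \obs(\mb{y})$ need not be Gaussian restricted to a convex set, since $\{\obs(\mb{y})\supseteq A_i\}$ involves thresholds on the unobserved coordinates too. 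However, marginalizing over the unobserved coordinates, the event becomes a convex restriction on the full vector and the ``Gaussian mean shrinks under convex conditioning'' lemma applies in the Mahalanobis sense, which is exactly what is needed to retain the $\sqrt{\log(1/\alpha)}$ bias bound.
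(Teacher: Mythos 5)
Your proposal is correct and follows essentially the same route as the paper: decompose into $\lceil 1/\beta\rceil$ blocks of size $\beta d$, bound the per-block bias via Lemma 6 of \cite{daskalakis2018efficient} applied to the truncation event $\{A_i\subseteq\obs(\mb{y})\}$ (which, being an intersection of halfspaces in the linear-thresholding model, has mass at least $\alpha$ by \cref{missingness_assumpt}), control the deviation of each empirical block mean, and union bound over blocks. The only difference is cosmetic: where you invoke a Chernoff bound on the retained sample count plus a vector-Bernstein concentration with the conditional covariance dominated via Corollary 2.1 of \cite{KP77}, the paper simply cites Lemma 5 of \cite{cherapanamjeri2022makes} combined with the triangle inequality for the same per-block concentration step, and your treatment of the subtlety that the conditioning event depends on all $d$ coordinates (so one must pass through the full-vector Mahalanobis bound before restricting to the block) is actually more explicit than the paper's.
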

\begin{proof}
For each iteration of the for loop in \cref{algo:init} uses $\tilde{\mathcal{O}}(\frac{ \beta d \log( n d/\alpha \delta^\prime) \log(1 / \delta^\prime)}{\varepsilon^2})$ samples due to the sample complexity bound in Lemma 5 of \cite{cherapanamjeri2022makes} by applying Lemma 5 on Lemma 6 (1) using triangle inequality,
and the bound holds with probability $1 - \delta^\prime$. Since the for loop makes \textcolor{black}{$\lceil\frac{1}{\beta}\rceil$} iterations, we conclude that using $\tilde{\mathcal{O}}(\frac{ d \log( n d/\alpha \delta^\prime) \log(1 / \delta^\prime)}{\varepsilon^2})$ samples, our algorithm satisfies with probability at least \textcolor{black}{$1 - \delta^\prime\cdot\lceil\frac{1}{\beta}\rceil$} using union bound. 
Let \textcolor{black}{$\delta = \delta^\prime\cdot\lceil\frac{1}{\beta}\rceil$.}
We have $\delta^\prime = \textcolor{black}{O(\delta \beta)}$. Therefore, with $\tilde{\mathcal{O}}(\frac{ d \log( n d/\alpha\beta \delta) \log(1 / \delta \beta)}{\varepsilon^2})$ samples, with probability at least $1 - \delta$, the output of \cref{algo:init} satisfies that $\|\mb{w}-\bm{\mu}^*\|_2 \leq \mathcal{O}\left({\sqrt{\frac{\lambda_{\max}}{\beta} \log(1/\alpha)}}\right)$.
\end{proof}
\vfill

\ignore{
\begin{restatable}{lemma}{BoundedStepVariance}
\label{lem:incirc}
$\cK \cap \ball_{\bm{\Sigma}}(\bm{0}, R_1) \supseteq \ball_{\bm{\Sigma}}(\mb{c}, r)$ for some $\mb{c}$, where $R_1 =  \sqrt{d} + O(\sqrt{\log(1/\gamma)}$ and $r = \Omega(\gamma/d^2)$
\end{restatable}

\begin{restatable}{lemma}{circumcirc}
\label{lem:circumcirc}
For $\mb{w}\sim \gauss(\bm{\mu}, \bm{\Sigma})$, where $\|\bm{\mu}\|_{\bm{\Sigma}}\leq S$, 
\[
\Pr[\mb{w} \notin \ball_{\bm{\Sigma}}(\bm{\mu}, R_2)\cap K] \leq \frac{\eps}{16}\cdot \Pr[\mb{w} \in K]
\]
where $R_2 = \poly{d, S, \log(1/\gamma), \log(1/\eps) } > R_1$.
\end{restatable}

The proof of our main theorem in this section is given below:
}

\end{document}